\def\eqref#1{equation~\ref{#1}}
\def\1{\bm{1}}
\DeclareMathAlphabet{\mathsfit}{\encodingdefault}{\sfdefault}{m}{sl}
\SetMathAlphabet{\mathsfit}{bold}{\encodingdefault}{\sfdefault}{bx}{n}
\newtheorem{theorem}{Theorem}
\newtheorem{proposition}{Proposition}
\lstdefinestyle{cakestyle}{
    backgroundcolor=\color{white},   
    commentstyle=\color[rgb]{0,0.6,0},
    keywordstyle=\color[rgb]{0,0,1},
    stringstyle=\color[rgb]{0.58,0,0.82},
    basicstyle=\ttfamily\small, 
    breakatwhitespace=false,         
    breaklines=true,                 
    captionpos=t,                    
    keepspaces=true,                 
    numbers=left,                    
    numbersep=5pt,                  
    showspaces=false,                
    showstringspaces=false,
    showtabs=false,                  
    tabsize=2,
    frame=single,
    framesep=5pt,
    framerule=0.4pt,
    rulecolor=\color{black},
    numberstyle=\tiny\color{gray},
    identifierstyle=\color[rgb]{0,0,0},
}
\newcommand{\normaltext}[1]{\text{\upshape#1}}
\title{CAKE: Cascading and Adaptive KV Cache Eviction with Layer Preferences}
\author{Ziran Qin$^{1,2}$\textsuperscript{$*$}, Yuchen Cao$^{2}$, Mingbao Lin$^{3}$\textsuperscript{$\dagger$}, Wen Hu$^{2}$, Shixuan Fan$^{2}$, Ke Cheng$^{2}$,\\
\textbf{Weiyao Lin$^{1}$\textsuperscript{$\dagger$}, Jianguo Li$^{2}$}\textsuperscript{$\dagger$}\\
$^1$Shanghai Jiao Tong University, $^2$Ant Group,
$^3$Independent Researcher}
\begin{document}

\maketitle
\begingroup
\renewcommand\thefootnote{}
\footnotetext{$^*$This work was done when Ziran Qin was a research intern at Ant Group.}
\footnotetext{$^\dagger$Corresponding Authors.}
\endgroup

\begin{abstract}
Large language models (LLMs) excel at processing long sequences, boosting demand for key-value (KV) caching. While recent efforts to evict KV cache have alleviated the inference burden, they often fail to allocate resources rationally across layers with different attention patterns. In this paper, we introduce \textbf{C}ascading and \textbf{A}daptive \textbf{K}V cache \textbf{E}viction (\textbf{CAKE}), a novel approach that frames KV cache eviction as a ``cake-slicing problem.''
CAKE assesses layer-specific preferences by considering attention dynamics in both spatial and temporal dimensions, allocates rational cache size for layers accordingly, and manages memory constraints in a cascading manner. This approach enables a global view of cache allocation, adaptively distributing resources across diverse attention mechanisms while maintaining memory budgets.
CAKE also employs a new eviction indicator that considers the shifting importance of tokens over time, addressing limitations in existing methods that overlook temporal dynamics.
Comprehensive experiments on LongBench and NeedleBench show that CAKE maintains model performance with only 3.2\% of the KV cache and consistently outperforms current baselines across various models and memory constraints, particularly in low-memory settings. Additionally, CAKE achieves over 10$\times$ speedup in decoding latency compared to full cache when processing contexts of 128K tokens with FlashAttention-2. Our code is available at \url{https://github.com/antgroup/cakekv}.

\end{abstract}

\section{Introduction}
Large language models (LLMs)~\citep{zhao2023survey, achiam2023gpt,dubey2024llama,anthropic2024claude3,mistral2024large} have enhanced their long-text processing capabilities, improving performance in multi-turn dialogues~\citep{chiang2023vicuna}, document summarization~\citep{zhang2024benchmarking}, question answering~\citep{kamalloo2023evaluating}, and information retrieval~\citep{liu2024lost}.
New models such as GPT-4~\citep{achiam2023gpt}, Claude 3.5~\citep{anthropic2024claude3}, LLaMA 3.1~\citep{dubey2024llama} and Mistral Large 2~\citep{mistral2024large} have extended token processing capacities beyond 128K. Expanded contexts necessitate a linear increase in key-value (KV) cache size, resulting in heavier inference-time memory burdens. \citet{shazeer2019fast,ainslie2023gqa} partially address this issue by merging key-value heads during the training phase. However, optimizing key-value cache without additional training is crucial for efficient inference of long contexts under memory constraints, particularly in typical deployment scenarios where the model structure is fixed.
%

One way to maintain a manageable KV cache size on the fly is to remove some KV pairs~\citep{xiao2023efficient,zhang2024h2o,li2024snapkv}. The idea is to eliminate less important KV pairs based on certain rules. Although recent methods have enhanced pair selection for removal, they typically assign uniform cache sizes across layers, disregarding layer-specific requirements. This approach can impair performance under memory constraints. Recent research is trying to improve this by looking at how attention works in different layers~\citep{yang2024pyramidinfer,cai2024pyramidkv}. These methods, which depend on prior observations, might not work well for all input types and models. Another idea is to adjust the cache size based on the current attention pattern~\citep{wan2024d2o}. This can help, but it may not always optimize overall performance.

\begin{figure}[!t]
    \centering
\includegraphics[width=1.0\textwidth]{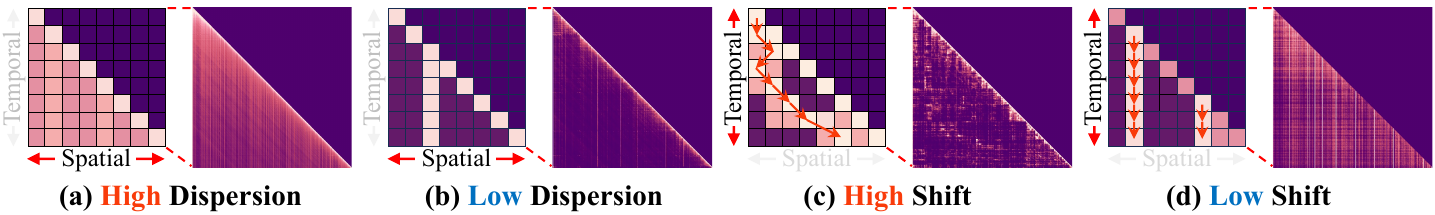}
    \caption{Variation in spatial (a, b) and temporal (c, d) characteristics of attention patterns. We provide toy examples (left) and real examples from Mistral's different layers (right) for illustration. For more detailed analysis and visualization of attention dynamics, please refer to Appendix~\ref{detailvis}.}
    \label{patternvis}
\end{figure}
    
Allocating optimal cache sizes for different layers with a fixed memory budget is akin to ``cake-slicing.'' The crux of this challenge is assessing each layer's affinity for KV cache, which is mirrored in its attention mechanisms. Our research delves into these mechanisms, examining spatial and temporal aspects across layers.
Spatially, attention distribution across tokens can differ markedly between layers. Some layers may spread attention broadly, while others concentrate on specific tokens, as depicted in Figure\,\ref{patternvis}(a) and Figure\,\ref{patternvis}(b).
Temporally, attention hotspots in certain layers shift over time in Figure\,\ref{patternvis}(c), whereas in others, they remain constant in Figure\,\ref{patternvis}(d).
The variability of attention pattern necessitates a dynamic approach to memory allocation that goes beyond uniform~\citep{zhang2024h2o,liu2024scissorhands,ren2024efficacy,li2024snapkv}, depicted in Figure\,\ref{illustration}(a), or fixed-pattern methods~\citep{cai2024pyramidkv}, drawn in Figure\,\ref{illustration}(b). They lack the flexibility to adapt to the nuanced attention dynamics. 
To enhance cache efficiency, a comprehensive strategy is required, factoring in each layer's attention pattern and KV cache preference. Moreover, conventional methods that evict KV pairs relying solely on static spatial attention scores often overlook the temporal evolution of attention, missing critical insights into how token relevance changes in various contexts.


In this paper, we introduce \textbf{C}ascading and \textbf{A}daptive \textbf{K}V cache \textbf{E}viction (CAKE), enhancing KV cache eviction by leveraging proposed preference-prioritized adaptive cache allocation strategy coupled with an innovative token eviction strategy. It is progressive and dynamic, employing a novel metric that assesses each layer's cache size preference, taking into account both the dispersion of spatial attention and the shifts in temporal attention, as outlined in Figure\,\ref{illustration}(c). To optimize cache allocation from a holistic viewpoint while keeping peak memory usage within desired limits, CAKE further employs a cascading memory management method. During the prefilling phase, CAKE dynamically manages the memory budgets for layers and adjusts the KV cache with the guidance of obtained preference scores, eliminating the need to store all KV cache for all layers.
For token eviction, acknowledging the limitations of existing methods, we introduce an eviction indicator that considers sustained importance and attention variability, thereby minimizing the adverse effects of eviction on subsequent decoding steps.
%
Extensive experiments have been conducted using various LLM architectures on the LongBench and NeedleBench benchmarks, encompassing nine major task categories. Results demonstrate CAKE's superior performance across diverse memory scenarios within these benchmarks. The allocation strategy in CAKE, which can complement and enhance existing KV cache eviction methods, offers a versatile solution for improved cache management. Furthermore, compared to full cache FlashAttention-2 implementation, CAKE significantly reduces memory consumption while simultaneously enhancing LLM throughput.

Our contributions are: (1) An analysis of attention dynamics revealing spatial dispersion and temporal shifts, leading to a layer-specific metric for cache size requirements. (2) An adaptive cache allocation strategy that optimizes overall cache allocation based on layer preferences. (3) A cascading cache management method that dynamically adjusts the KV cache during the prefilling stage, achieving memory usage efficiency comparable to uniform strategies without sacrificing eviction performance. (4) A new eviction indicator that considers the sustained importance and variability of tokens, enhancing token eviction performance.

\begin{figure}[t]
    \centering
    \includegraphics[width=0.95\textwidth]{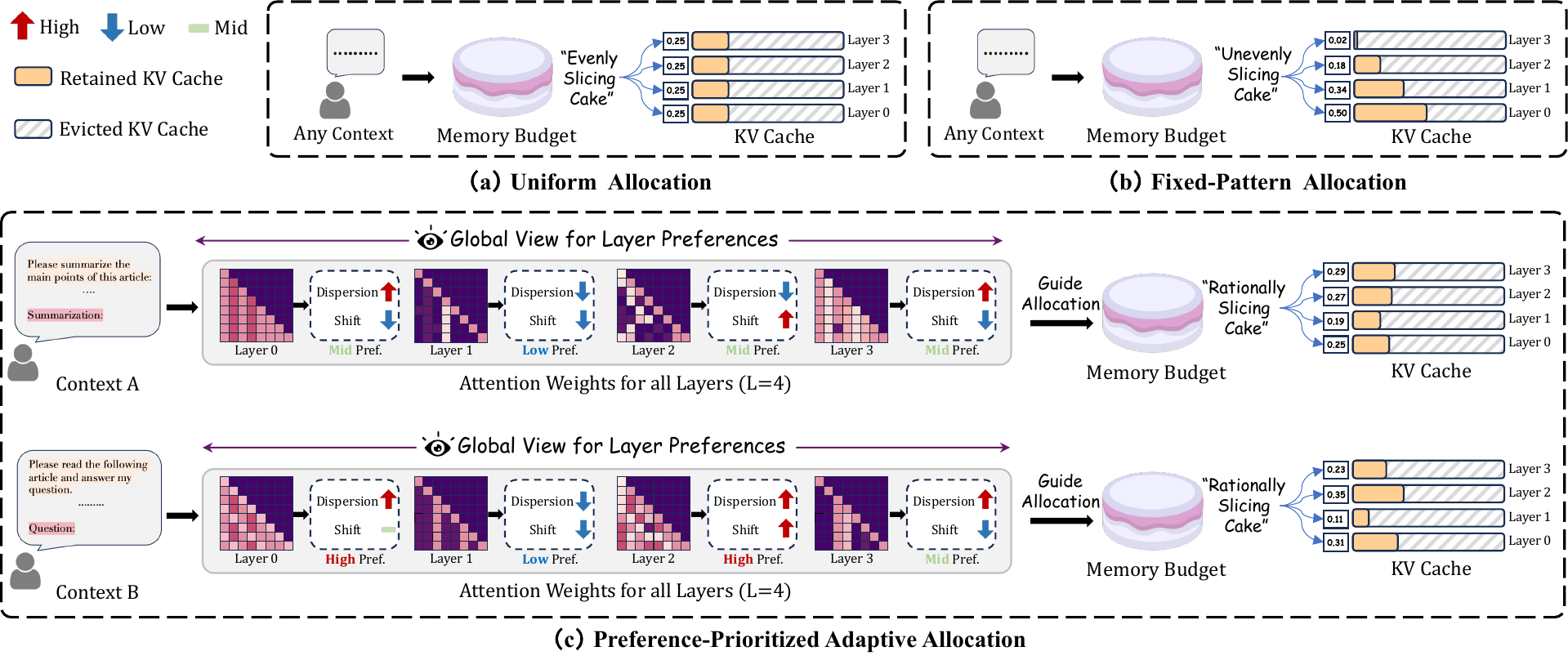}
        \vspace{-0.5em}
        \caption{Illustration of CAKE compared with existing cache allocation strategies.
        (a) Uniform cache allocation~\citep{xiao2023efficient,zhang2024h2o,li2024snapkv};
        (b) Fixed-shape cache allocation~\citep{cai2024pyramidkv,yang2024pyramidinfer};      %
        (c) Preference-prioritized adaptive cache allocation used in CAKE. Compared to (a) and (b), CAKE adjusts allocation ratios across different layers with layer preferences, adapting to various contexts and models, with given memory budgets.}
        \vspace{-0.8em}
    \label{illustration}
\end{figure}

\section{Background and Related Works}

\textbf{Basics of KV Cache Operations}.
We revisit the fundamentals of KV caching. We focus on a single attention head, characterized as weight matrices $\mathbf{W}_Q, \mathbf{W}_K, \mathbf{W}_V \in \mathbb{R}^{D \times D}$, where $D$ denotes the model's hidden dimension. 
Considering a prompt embedding $\mathbf{X} \in \mathbb{R}^{S \times D}$, with $S$ representing the sequence length, an attention module has two phases: prompt prefilling and token decoding.

\textit{Prompt Prefilling}: The query, key, and value states are initially calculated as follows:
\begin{equation}
\mathbf{Q} = \mathbf{X}\mathbf{W}_Q, \quad \mathbf{K} = \mathbf{X}\mathbf{W}_K, \quad \mathbf{V} = \mathbf{X}\mathbf{W}_V.
\end{equation}

Then, the output of the attention module is determined as:
\begin{equation}
\text{Attn}(\mathbf{Q},\mathbf{K},\mathbf{V})
= \mathbf{A}\mathbf{V},
\end{equation}
with attention weights $\mathbf{A} = \text{Softmax}(\frac{\mathbf{Q}\mathbf{K}^T}{\sqrt{D}}) \in \mathbb{R}^{S \times S}$.
The key-value states $\mathbf{K}$ and $\mathbf{V}$ are then stored in a cache, establishing the KV cache.

\textit{Token Decoding}: The KV cache is used and updated to produce new tokens. For each decoding step $i$, let the new token embedding be $\mathbf{x}_i \in \mathbb{R}^{1 \times D}$. To circumvent repeated key-value projections, the KV cache is concatenated with the newly generated KV pair $\mathbf{x}_i\mathbf{W}_K, \mathbf{x}_i\mathbf{W}_V \in \mathbb{R}^{1 \times D}$ for the current attention computation and to refresh the KV cache:
\begin{equation}
    \mathbf{K} = \text{Concat}(\mathbf{K}, \mathbf{x}_i\mathbf{W}_K), \quad \mathbf{V} = \text{Concat}(\mathbf{V}, \mathbf{x}_i\mathbf{W}_V).
\end{equation}

Though the KV cache alleviates the considerable computational demands of attention mechanisms, its linear growth with sequence length poses a challenge for extremely long input or output sequences.
Efficiently managing the KV cache within a finite cache budget remains a formidable issue, especially in contexts that demand longer context lengths.

\textbf{KV Cache Eviction}.
KV cache eviction during model inference enhances computational efficiency without altering the attention mechanism. This optimization relies on strategic cache reduction techniques.
Early approaches to KV cache eviction focus on specific parts of the input sequence. For instance, StreamingLLM~\citep{xiao2023efficient} and LM-Infinite~\citep{han2023lm} prioritize the retention of the first and last tokens. However, this strategy risks ignoring potentially important tokens in the middle of the sequence. Recognizing this limitation, subsequent research introduces more sophisticated indicators to filter unnecessary KV cache entries, such as using cumulative attention scores ~\citep{zhang2024h2o,liu2024scissorhands}, last token attention scores~\citep{oren2024transformers}, mean attention scores~\citep{ren2024efficacy}, or clustering recent attention scores~\citep{li2024snapkv}. 
While these methods offer more nuanced approaches to cache eviction, they often apply the uniform allocation strategy, which may not maximize cache utilization.
Recent FastGen~\citep{ge2023model} chooses which tokens to keep based on special markers, but it doesn't limit the total cache size. PyramidInfer~\citep{yang2024pyramidinfer} and PyramidKV~\citep{cai2024pyramidkv} allocate the budget in a pyramid-shaped manner, while D2O~\citep{wan2024d2o} adjusts cache size based on the current layer's attention density. 
While these methods have shown promise, they often rely on predefined cache allocation strategies across all layers or are just based on the local layer attention, which does not fully capture the complex dynamics of attention mechanisms and lacks a global perspective on layer preferences for the cache. In contrast, our work addresses these limitations by considering layer-specific cache preferences with a comprehensive and global perspective.


\begin{figure}[!t]
    \centering
    \subfloat[]{\includegraphics[width=0.75\textwidth]{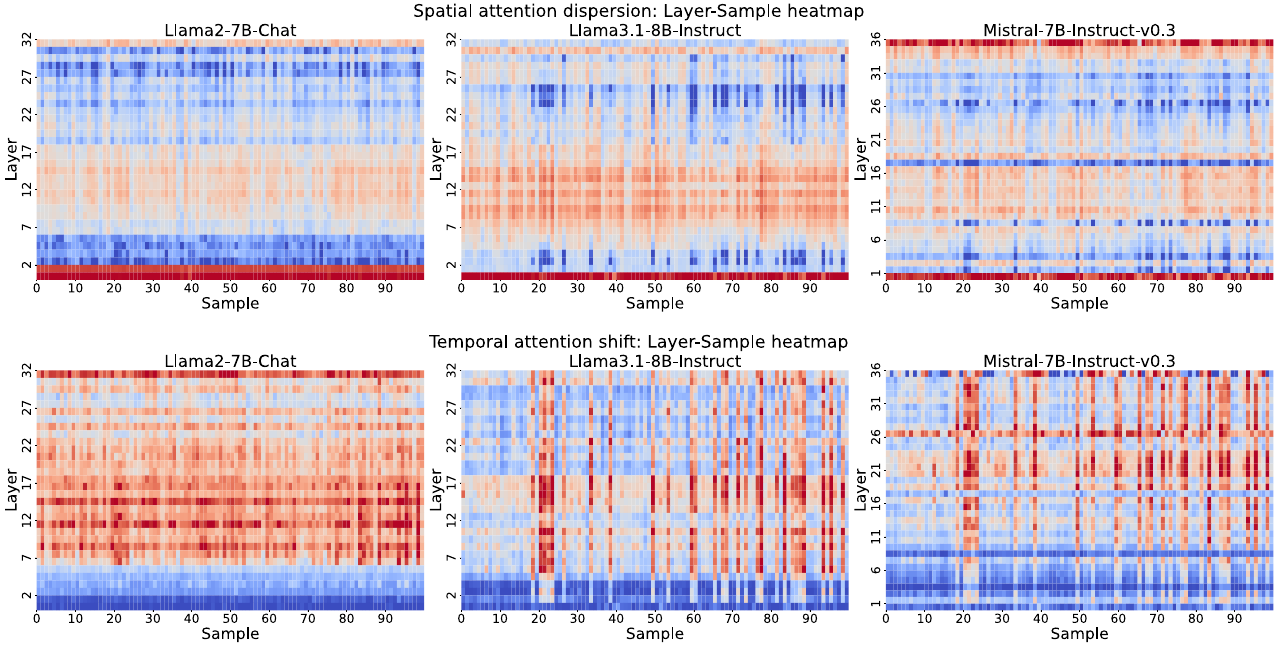}}
    \hfill
    \subfloat[]{\includegraphics[width=0.24\textwidth]{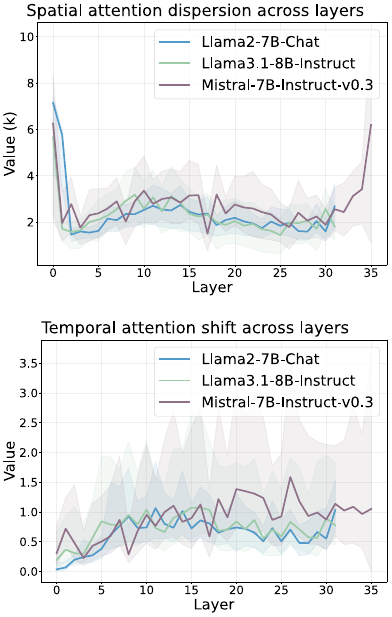}}
    \vspace{-0.5em}
    \caption{Analysis of attention dynamics. (a) Heatmap for spatial attention dispersion (upper) and temporal attention shift (lower), red color with high value, while blue for low value. The x-axis represents samples, and the y-axis represents layers. (b) Variation of spatial attention dispersion (upper) and temporal attention shift (lower) across layers and models.
    The experimental data is derived from the LongBench dataset~\citep{bai2023longbench}.}
    \vspace{-0.5em}
    \label{anofattn}
\end{figure}

\section{Insights into Attention Dynamics}
Recent studies~\citep{xiao2023efficient,zhang2024h2o} have shown that only a few KV cache elements are crucial during token decoding. Building on this, our analysis identifies two pivotal attention traits: \textit{spatial attention dispersion}, showing how one token's attention spreads across others, and \textit{temporal attention shift}, tracking the evolution of highly-attended tokens over time.

Recall some basics about the attention weights $\mathbf{A}$. Each $\mathbf{A}[i,j]$ shows how much the $i$-th query token attends to the $j$-th key token. 
We analyze the matrix spatially and temporally to explore attention dynamics in decoder-only models.
(1) \textbf{Spatial Analysis}: Row $i$, $\mathbf{A}[i,:]$, shows how the $i$-th token's attention is distributed across tokens in one step.
Examining $\mathbf{A}[i,:]$ reveals the attention landscape at that step. This helps reveal how the model prioritizes information at each generation step.
(2) \textbf{Temporal Analysis}: Column $j$, $\mathbf{A}[:,j]$, shows how attention to the $j$-th token changes over steps. 
Each row in $\mathbf{A}[:,j]$ represents a time step, showing the $j$-th token's evolving attention.
This dimension is key to tracking the model's shifting focus during sequence generation.

In this paper, we quantify the spatial attention dispersion by the entropy of $\mathbf{A}[i,:]$, defined as:
\begin{equation}
\mathrm{H}(\mathbf{A}) = - \sum_{i=0}^{S-1} \mathbf{A}[i,:]\log (\mathbf{A}[i,:])^T,
\label{attn_disp}
\end{equation}
where the inner product is conducted between each row of $\mathbf{A}$ and its element-wise logarithm.
Rows show higher values with even attention distribution and lower values with focused attention. Aggregating these values measures the evenness of attention distribution per token.

Accordingly, the temporal attention shift is characterized by the variance of $\mathbf{A}[:,j]$, expressed as:
\begin{equation}
\mathrm{V}(\mathbf{A}) = \sum_{j=0}^{S-1} \text{Var}(\mathbf{A}[:,j]),
\label{attn_shift}
\end{equation}
which computes the variance for each column of $\mathbf{A}$. Higher values indicate significant attention shifts across positions, while lower values suggest stable attention. Summing the variances column-wise captures the attention mechanism's temporal dynamics.

We analyze attention dynamics across multiple LLM layers using the LongBench dataset's long-context samples~\citep{bai2023longbench}. Figure\,\ref{anofattn} (upper) shows attention dispersion changes across layers, and Figure\,\ref{anofattn} (lower) shows attention focus shifts. 
The results show significant variations in attention dispersion and shift across layers, models, and contexts. Despite recent KV cache advances~\citep{xiao2023efficient,zhang2024h2o}, our findings highlight LLM attention complexity, indicating a need for tailored KV cache management strategies to effectively handle the dynamic nature of attention.

\section{Methodology}

\subsection{Preference-Prioritized Adaptive Allocation}
\label{p2a}
Given the variability in attention mechanisms across layers, models, and contexts, uniform or fixed-pattern cache allocation strategies prove inefficient, especially under limited memory budgets.
To address these challenges, we introduce the preference-prioritized adaptive allocation strategy. It considers each layer's unique characteristics and adaptively allocates cache sizes from a global view of attention patterns.
We define a preference metric for each layer's KV cache requirements, considering both the spatial dispersion and temporal shift of attention:
\begin{equation}
\mathcal{P} = {\mathcal{H}}^{\frac{1}{\tau_1}} \cdot {\mathcal{V}}^{\frac{1}{\tau_2}}, \quad \mathcal{H} = \mathrm{H}(\mathbf{A}[-S_{w}:, :-S_{w}]), \quad \mathcal{V} = \mathrm{V}(\mathbf{A}[-S_{w}:, :-S_{w}]),
\label{layer_pref}
\end{equation}
where $\mathcal{P}$ is the preference score for an attention layer's cache size. As dispersed attention (high $\mathcal{H}$) requires retaining broader contexts, while frequent shifts (high $\mathcal{V}$) demand support for complex temporal patterns, layers with high preference score $\mathcal{P}$ benefit more from larger KV cache to maintain performance.
Accounting for the varying importance of attention dispersion and shift across models and memory constraints, we introduce temperature parameters $\tau_1$ and $\tau_2$ to adjust their influence on $\mathcal{P}$. We focus on the submatrix $\mathbf{A}[-S_{w}:, :-S_{w}]$ of $\mathbf{A}$, representing a recent window of size $S_w$. This focus on the recent window chunk of prefilling attention weights is inspired by recent research~\citep{li2024snapkv,yang2024pyramidinfer}. These studies have demonstrated that decoding patterns can be effectively captured by analyzing the attention patterns of recent queries.

We implement the preference-prioritized adaptive allocation strategy using layer-specific preference scores. These scores adjust dynamically to varying models and contexts, ensuring adaptability. Each layer's cache size is set by normalizing the preference scores and allocating the total memory budget accordingly. This results in an adaptive cache size set $\mathbf{B} = \{B_l\}_{l=0}^{L-1}$, calculated as follows:
\begin{equation}
B_l = \frac{\mathcal{P}_l}{\sum_{k=0}^{L-1} \mathcal{P}_k}\cdot B_{\text{total}},
\label{layer_alloc}
\end{equation}
where $\mathcal{P}_l$ is the preference score for layer $l$, and $B_{\text{total}}$ is the total cache budget. This method optimizes cache size allocation for each layer's unique characteristics and the input context.

\begin{figure*}[!t]
    \subfloat{
    \begin{minipage}[t]{0.46\textwidth}
    \includegraphics[width=\linewidth]{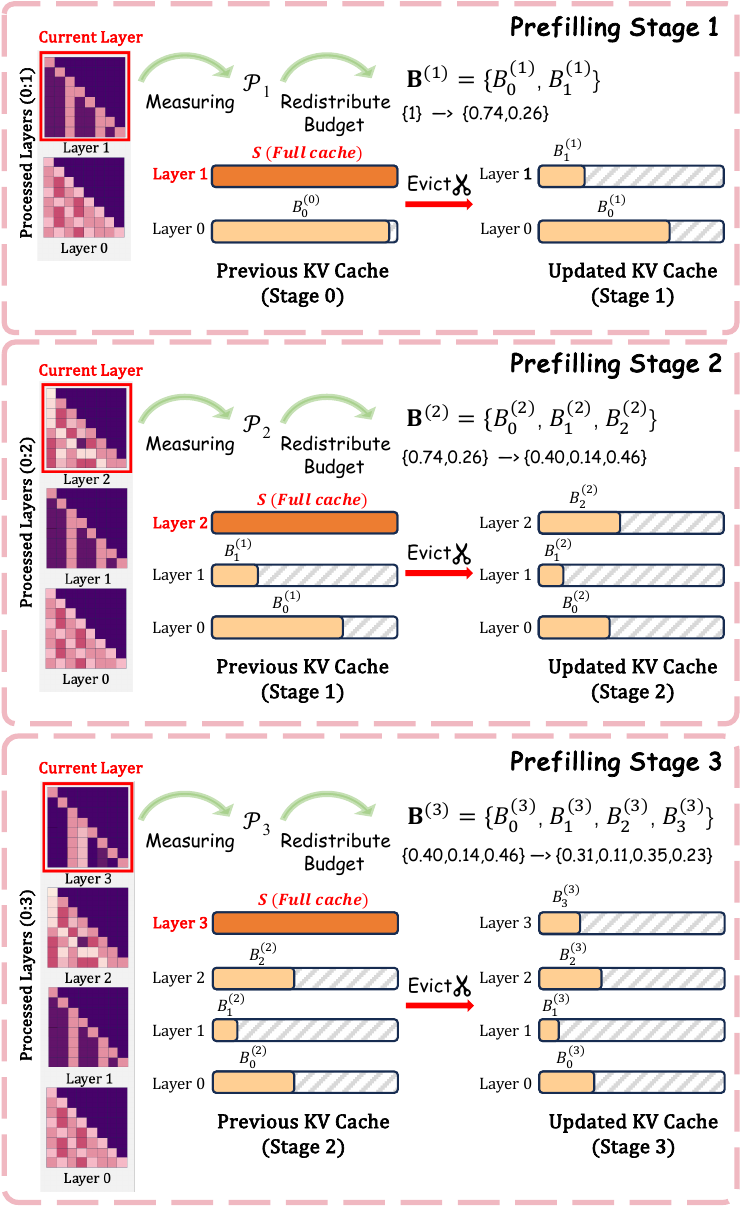}
    \caption{Illustration of preference-guided cascading cache management algorithm during three stages.
    }
    \label{management}
    \end{minipage}
    }
    \subfloat{
    \begin{minipage}[t]{0.50\textwidth}
        \begin{algorithm}[H]
\footnotesize
\caption{Preference-guided Cascading Cache Management}
\label{alg:layer_wise_dynamic_management}
\renewcommand{\algorithmicrequire}{\textbf{Input:}}
\renewcommand{\algorithmicensure}{\textbf{Output:}}
\begin{algorithmic}[1]
\REQUIRE Total cache budget $B_{\text{total}}$, local attention weights $\mathbf{A}_m[-S_{w}:, :-S_{w}]\in \mathbb{R}^{S_{w} \times S-S_{w}}$, matrices $\mathbf{K}_m, \mathbf{V}_m\ \in \mathbb{R}^{S \times D}$, number of layers $L$
\ENSURE Retained cache set $\mathbf{C}=\{\{\mathbf{\hat{K}}_m^{(L-1)}, \mathbf{\hat{V}}_m^{(L-1)}\}:m\in[L]\}$
\STATE Initialize $\mathbf{C}^{(0)},\mathbf{S}^{(0)}, \mathbf{B}^{(0)}, \mathbf{P}^{(0)} \gets \emptyset$
\FOR{stage $m = 0$ to $L-1$}
    \STATE $\mathbf{C}^{(m)}\gets \mathbf{C}^{(m-1)}  \cup (\{\mathbf{K}_m, \mathbf{V}_m\})$
    \STATE Compute preference score $\mathcal{P}_m$  for layer $m$ \text{ using Eq.(\ref{layer_pref})}
    \STATE Compute indicator $\mathbf{I}_m$ ($\mathbf{I}_m^{(m)}$) using  local attention weights from a chosen eviction method
    \STATE $\mathbf{P}^{(m)}\leftarrow \mathbf{P}^{(m-1)}\cup (\mathcal{P}_m)$,\\ $\mathbf{S}^{(m)}\leftarrow \mathbf{S}^{(m-1)}\cup (\mathbf{I}_m^{(m)})$
    \STATE $\mathbf{B}^{(m)}\gets \{B_l^{(m)}:i\in[m]\}$, where $B_l^{(m)}$ is computed using Eq.(\ref{updateb})
    \STATE //\quad Manage current KV Caches
    \FOR{layer $ l = 0$ to $m$} 
        \STATE $\{\mathbf{K}_l, \mathbf{V}_l\}^{(m)} \gets \mathbf{C}^{(m)}[l]$, \\
        $B_l^{(m)} \gets \mathbf{B}^{(m)}[l]$, \quad $\mathbf{I}_l^{(m)} \gets \mathbf{S}^{(m)}[l]$
        \STATE //\quad Update cache for layer $l$
        \STATE $\{\mathbf{\hat{K}}_l, \mathbf{\hat{V}}_l\}^{(m)},\mathbf{\hat{I}}_l^{(m)}\gets\text{EVICT}(\{\mathbf{K}_l, \mathbf{V}_l\}^{(m)},B_l^{(m)},\mathbf{I}_l^{(m)})$
        \STATE $\mathbf{C}^{(m)}[l]\gets\{\mathbf{K}_l, \mathbf{V}_l\}^{(m+1)}=\{\mathbf{\hat{K}}_l, \mathbf{\hat{V}}_l\}^{(m)}$, \quad $\mathbf{S}^{(m)}[l]\gets\mathbf{I}_l^{(m+1)}=\mathbf{\hat{I}}_l^{(m)}$
    \ENDFOR
\ENDFOR
\RETURN Retained cache set $\mathbf{C}=\{\{\mathbf{\hat{K}}_m^{(L-1)}, \mathbf{\hat{V}}_m^{(L-1)}\}:m\in[L]\}$
\end{algorithmic}
        \end{algorithm}
    \end{minipage}%
    }
    \hfill%

\end{figure*}


\subsection{Preference-Guided Cascading Cache Management}
\label{pgccm}
Our preference-prioritized adaptive allocation strategy, though optimal, initially requires a comprehensive view of prefilling attention weights across all layers. This leads to the peak memory usage of $\mathcal{O}(S \cdot L)$, which may exceed the cache budget $B_{\text{total}}$.

To overcome this, we further develop the preference-guided cascading cache management, as illustrated in Figure\,\ref{management}. It dynamically maintains the cache budget during prefilling, effectively reducing peak memory usage to the target. Algorithm\,\ref{alg:layer_wise_dynamic_management} outlines our cache management procedure, which partitions the prefilling process into $L$ stages, one for each model layer, and cascadingly manages cache memory guided by preference scores.
At each stage, as a new layer completes its prefilling computation, the budget is redistributed based on the current obtained preference scores (Algorithm\,\ref{alg:layer_wise_dynamic_management}, lines 3--7), and KV caches are updated among all processed layers according to the redistributed budget. 

This approach ensures constant memory usage during the prefilling stage while maintaining equivalent cache distribution and eviction results as the standard strategy. Our algorithm is supported by the following theoretical results (Proposition\,\ref{th1} and Theorem\,\ref{th2}, proofs provided in Appendix \ref{proofs}).

\begin{proposition}
\label{th1}
For any layer $l\in [L]$, the allocated budget size decreases monotonically from stage $l$ to $L-1$:
\begin{equation}
\label{thiq1}
B_l^{(m+1)} < B_l^{(m)}, m \in [l,L-1],
\end{equation}
where $B_l^{(m)}$ is the redistributed cache budget for layer $l$ at stage $m$, calculated based on the current obtained preference score $\mathcal{P}$:
\begin{equation}
B_l^{(m)} = \frac{\mathcal{P}_l}{\sum_{k=0}^m \mathcal{P}_k}\cdot B_{\text{total}}.
\label{updateb}
\end{equation}
\end{proposition}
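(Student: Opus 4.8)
The plan is to prove the strict inequality by directly comparing the closed-form budget expression in Eq.~(\ref{updateb}) at two consecutive stages, viewing the recurrence as a monotone function of the cumulative preference sum sitting in its denominator. The structural observation that drives everything is that the numerator $\mathcal{P}_l$ and the total budget $B_{\text{total}}$ are both constants that do not depend on the stage index $m$: once $\mathcal{P}_l$ is computed at stage $l$ (Algorithm~\ref{alg:layer_wise_dynamic_management}, line 4) it is stored and never recomputed, so the entire $m$-dependence of $B_l^{(m)}$ is carried by the partial sum $\sum_{k=0}^{m}\mathcal{P}_k$.

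The first step I would carry out is to establish positivity of the preference scores. By Eq.~(\ref{layer_pref}), each score factors as $\mathcal{P}_k = \mathcal{H}_k^{1/\tau_1}\cdot\mathcal{V}_k^{1/\tau_2}$, where $\mathcal{H}_k$ is the attention-row entropy of Eq.~(\ref{attn_disp}) and $\mathcal{V}_k$ is the column-variance sum of Eq.~(\ref{attn_shift}). Both factors are nonnegative, and for any non-degenerate attention window (attention neither perfectly concentrated on a single key nor perfectly constant across the query window) they are strictly positive, so $\mathcal{P}_k > 0$ for every $k \in [L]$. Combined with $B_{\text{total}} > 0$, this makes every quotient $B_l^{(m)}$ well defined and strictly positive.

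The second step is the denominator comparison. Passing from stage $m$ to stage $m+1$ corresponds to layer $m+1$ completing its prefilling computation and contributing one new term, so $\sum_{k=0}^{m+1}\mathcal{P}_k = \sum_{k=0}^{m}\mathcal{P}_k + \mathcal{P}_{m+1}$. Since $\mathcal{P}_{m+1} > 0$ by the previous step, the denominator strictly increases. Dividing the fixed positive quantity $\mathcal{P}_l \cdot B_{\text{total}}$ by a strictly larger positive denominator produces a strictly smaller value, giving $B_l^{(m+1)} < B_l^{(m)}$ for every $m \in [l, L-1]$, which is exactly the claim.

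I do not expect a genuine analytic obstacle: once framed this way the result is a one-line monotonicity argument. The only point deserving care, and the one I would foreground, is justifying that the inequality is \emph{strict} rather than weak, which rests entirely on the strict positivity of each $\mathcal{P}_k$. In the degenerate case $\mathcal{P}_l = 0$ (fully concentrated, perfectly stationary attention) the budget for layer $l$ would vanish at every stage and the inequality would collapse to equality; I would close this gap either by arguing that such exact degeneracy does not arise for the attention submatrices actually encountered, or by adopting the convention of a positive floor on each preference score, thereby securing the strict inequality throughout.
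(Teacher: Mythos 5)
Your proof is correct and follows essentially the same route as the paper's: both arguments fix the numerator $\mathcal{P}_l \cdot B_{\text{total}}$ and observe that the denominator $\sum_{k=0}^{m}\mathcal{P}_k$ strictly increases with $m$ because each preference score is positive. Your only addition is the explicit justification of strict positivity via the entropy--variance factorization and the discussion of the degenerate case $\mathcal{P}_k = 0$, which the paper glosses over with the bare assertion that ``preference scores are always positive''---a worthwhile refinement, but not a different proof.
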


Proposition\,\ref{th1} guarantees that the cache budget allocated to each layer decreases monotonically as stages process, ensuring that the KV cache $\{\mathbf{K}_l, \mathbf{V}_l\}^{(m+1)}$ is always a proper subset of the previous stage's cache $\{\mathbf{K}_l, \mathbf{V}_l\}^{(m)}$, regardless of the eviction method used.

Before presenting Theorem\,\ref{th2}, let's define some operational details related to our algorithm. Given indicator vector $\mathbf{I}_l \in \mathbb{R}^{S}$, it measures token importance for layer $l$. The specific calculation of our eviction indicator will be detailed in the subsequent section. For each stage $m\in[L]$, the eviction operation, denoted as $\text{EVICT}(\cdot)$, retains KV pairs $\mathbf{\hat{K}}_l^{(m)}, \mathbf{\hat{V}}_l^{(m)} \in  \mathbb{R}^{B_l^{(m)} \times D}$  from the current cache $\mathbf{K}_l^{(m)},\mathbf{V}_l^{(m)} \in  \mathbb{R}^{B_l^{(m-1)} \times D}$ corresponding to the top-$B_l^{(m)}$ positions with the highest scores in $\mathbf{I}_l^{(m)} \in \mathbb{R}^{B_l^{(m-1)}}$. The indicator vector $\mathbf{I}_l^{(m)}$ is also updated to $\mathbf{\hat{I}}_l^{(m)} \in \mathbb{R}^{B_l^{(m)}}$, retaining only the elements corresponding to the preserved positions.
The updated cache and indicator  $\{\mathbf{\hat{K}}_l, \mathbf{\hat{V}}_l\}^{(m)}$, $\mathbf{\hat{I}}_l^{(m)}$ will serve as the basis $\{\mathbf{K}_l, \mathbf{V}_l\}^{(m+1)}$ and $\mathbf{I}_l^{(m+1)}$ for further updates in the next stage $m+1$.
\begin{theorem}
\label{th2}
For any layer $l\in [L]$, the KV cache $\{\mathbf{K}_l, \mathbf{V}_l\}^{(L-1)} $ obtained through cascading eviction from stage $l$ to stage $L-1$ is equivalent to the result of applying the eviction operation once on the full KV cache $\{\mathbf{K}_l, \mathbf{V}_l\}  \in \mathbb{R}^{S \times D}$ with the cache budget $B_l$ computed in Eq.(\ref{layer_alloc}):
\begin{equation}
\{\mathbf{K}_l, \mathbf{V}_l\}^{(L-1)} = \normaltext{EVICT}(\{\mathbf{K}_l, \mathbf{V}_l\}^{(m)}, {B_l^{(m)}}, \mathbf{I}_l^{(m)})_{m=l}^{L-1} = \normaltext{EVICT}(\{\mathbf{K}_l, \mathbf{V}_l\}, B_l, \mathbf{I}_l).
\label{evict}
\end{equation}
\end{theorem}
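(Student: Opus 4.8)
The plan is to strip the statement down to a purely combinatorial fact about iterated top-$k$ selection and then reconnect it to the algorithm. Observe first that, by the definition of $\normaltext{EVICT}(\cdot)$, the indicator scores for layer $l$ are \emph{computed only once} (at stage $l$, line 5 of Algorithm~\ref{alg:layer_wise_dynamic_management}); in every later stage the indicator is merely \emph{restricted} to the surviving positions rather than recomputed. Consequently the entire cascade for layer $l$ selects tokens according to one fixed score vector $\mathbf{I}_l$ on the index set $\{0,\dots,S-1\}$, and each stage $m$ simply retains the $B_l^{(m)}$ highest-scoring of the positions still present. Writing $T_k(\mathbf{I})$ for the index set of the $k$ largest entries of a score vector $\mathbf{I}$ (under a fixed tie-breaking rule, e.g.\ by position), the first equality in Eq.(\ref{evict}) is definitional: the cascade is exactly the composition $T_{B_l^{(L-1)}}\circ\cdots\circ T_{B_l^{(l)}}$ applied to the full $S$-token cache, while the right-hand side is the single selection $T_{B_l}$. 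Since $B_l^{(L-1)}=B_l$ by comparing Eq.(\ref{updateb}) with Eq.(\ref{layer_alloc}), it suffices to show this composition collapses to its last factor.

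The key lemma is the nestedness of top-$k$ selection: for any score vector $\mathbf{I}$ over a set $\Omega$ and any $k_2\le k_1\le|\Omega|$,
\begin{equation}
T_{k_2}\!\big(\mathbf{I}|_{T_{k_1}(\mathbf{I})}\big)=T_{k_2}(\mathbf{I}),
\label{eq:nestedtopk}
\end{equation}
because the $k_2$ globally highest-scoring indices all lie among the $k_1$ highest-scoring ones, and restricting $\mathbf{I}$ to $T_{k_1}(\mathbf{I})$ preserves their relative order. To prove Theorem~\ref{th2}, I would induct on the stage index $m$ and show that the set retained for layer $l$ after stage $m$ equals $T_{B_l^{(m)}}(\mathbf{I}_l)$. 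The base case $m=l$ is immediate, since layer $l$ enters the cache with all $S$ positions and stage $l$ keeps its top $B_l^{(l)}$. For the inductive step, Proposition~\ref{th1} supplies the strict monotonicity $B_l^{(m+1)}<B_l^{(m)}$, so applying Eq.(\ref{eq:nestedtopk}) with $k_1=B_l^{(m)}$ and $k_2=B_l^{(m+1)}$ converts the stage-$(m+1)$ restriction of the indicator to the current survivors back into a global top-$B_l^{(m+1)}$ selection. Iterating to $m=L-1$ yields $T_{B_l^{(L-1)}}(\mathbf{I}_l)=T_{B_l}(\mathbf{I}_l)$, which is precisely $\normaltext{EVICT}(\{\mathbf{K}_l,\mathbf{V}_l\},B_l,\mathbf{I}_l)$.

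The two points that need genuine care — and where I expect the only real obstacles — are the invariance of the indicator values and the treatment of ties. For the first, I would make explicit that pruning the indicator (retaining only the coordinates of surviving tokens) leaves each retained token's score unchanged, so the global ranking used at every stage is identical; this is exactly what licenses the combinatorial reduction above and would fail if the indicator were instead re-derived from the shrunken attention matrix at each stage. For the second, $T_k$ is only well defined up to the choice among equally scored tokens, so I would fix one deterministic tie-breaking order once and for all and verify that Eq.(\ref{eq:nestedtopk}) holds verbatim under it; with a consistent order the inclusion $T_{k_2}(\mathbf{I})\subseteq T_{k_1}(\mathbf{I})$ is exact and the induction goes through unchanged. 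A final routine check is that $B_l^{(l)}\le S$, so that the first eviction is a genuine restriction rather than a no-op, which follows from the budget normalization in Eq.(\ref{updateb}).
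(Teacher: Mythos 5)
Your proposal is correct and follows essentially the same route as the paper's proof: both reduce the claim to iterated top-$k$ selection over a fixed indicator vector, use Proposition~\ref{th1} to obtain the nested chain $\mathbf{D}_l^{(l)} \supset \mathbf{D}_l^{(l+1)} \supset \cdots \supset \mathbf{D}_l^{(L-1)}$ (your nestedness lemma in Eq.~(\ref{eq:nestedtopk}) is exactly the paper's key step $\text{TopK}(\mathbf{I}_l^{(m)}, B_l^{(m)}) \supset \text{TopK}(\mathbf{I}_l, B_l^{(m+1)}) = \text{TopK}(\mathbf{I}_l^{(m)}, B_l^{(m+1)})$), and close by matching $B_l^{(L-1)} = B_l$ via Eqs.~(\ref{updateb}) and (\ref{layer_alloc}). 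Your explicit treatment of tie-breaking and of the indicator being restricted rather than recomputed is a modest tightening of details the paper leaves implicit, but the argument is the same.
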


Theorem\,\ref{th2} demonstrates that our preference-guided cascading cache management achieves equivalent KV cache eviction results to the vanilla preference-prioritized adaptive allocation strategy, despite its cascading operation. This method is theoretically compatible with existing KV eviction techniques, a claim that will be empirically validated in Section~\ref{cmp_ee}.

\subsection{Attention-Shift Tolerant Eviction Indicator}
\label{astei}

Current top eviction indicators, such as accumulated or mean attention scores~\citep{zhang2024h2o,liu2024scissorhands,ren2024efficacy}, simplify tokens' attention profiles into single values, effectively identifying important tokens.
However, this approach may overlook tokens whose importance fluctuates, as it fails to capture the dynamics of shifting attention. This could result in prematurely evicting tokens from the cache, affecting the model's future ability to retrieve relevant information.
%

We propose a robust eviction strategy tolerant of attention shifts. We use a multi-faceted indicator considering sustained importance and attention variability. For layer $l$, the eviction indicator $\mathbf{I}_l\in\mathbb{R}^{S}$ is computed, where each element $\mathbf{I}_{l}[n]$ signifies the importance of token $n$, computed as:
\begin{equation}
\mathbf{I}_{l}[n] =
\begin{cases}
\text{Mean}(\mathbf{A}_l[-S_w:,n]) + \gamma \cdot \text{Var}(\mathbf{A}_l[-S_w:,n]), & \text{if} \quad n < S-S_w, \\
\Omega, & \text{otherwise}, 
\end{cases}
\end{equation}
where $\text{Mean}(\cdot)$ and $\text{Var}(\cdot)$ measure sustained importance and attention variability, respectively, with $\gamma$ adjusting their influence. Inspired by recent research~\citep{li2024snapkv,yang2024pyramidinfer}, we assign an arbitrarily large value $\Omega$ to ensure the preservation of the most recent $W$ tokens. Also, a pooling layer clusters $\mathbf{I}_l[:S-S_w]$ to maintain context and prevent information fragmentation.

The eviction operation $\text{EVICT}(\{\mathbf{K}_l, \mathbf{V}_l\}, B_l, \mathbf{I}_l)$ is then executed, retaining the KV pairs $\{\hat{\mathbf{K}}_l, \hat{\mathbf{V}}_l\}$ that correspond to the top-$B_l$ scores in $\mathbf{I}_l$:
\begin{equation}
\hat{\mathbf{K}}_l = \mathbf{K}_l[\mathbf{D}_l,:], \quad\hat{\mathbf{V}}_l = \mathbf{V}_l[\mathbf{D}_l,:], \quad \mathbf{D}_l = \text{TopK}(\mathbf{I}_l, B_l),
\end{equation}
where $\text{TopK}$ selects the indices $\mathbf{D}_l\in\mathbb{R}^{B_l}$ of the top $B_l$ scores in $\mathbf{I}_l$.

\section{Experimentation}

\subsection{Experimental Settings}


\textbf{Backbone LLMs}. Our experiments involve five major open-source LLMs (7B-70B parameters) capable of handling 4k-128k token contexts.
These models feature two representative attention structures: (1) \textit{Multi-head attention}: including Llama2-Chat~\citep{touvron2023llama} and Gemma-Instruct~\citep{team2024gemma}. (2) \textit{Grouped-query attention}: including Llama3-Instruct~\citep{dubey2024llama}, Mistral-v0.3~\citep{jiang2023mistral}, and Qwen2.5-Instruct~\citep{qwen2.5}.

\textbf{Baseline Methods}. 
%
We assess CAKE's performance against five baselines: (1) \textit{Uniform Allocation}: StreamingLLM~\citep{xiao2023efficient} balances initial and recent tokens, H2O~\citep{zhang2024h2o} uses cumulative attention, TOVA~\citep{oren2024transformers} adopts last-token attention, and SnapKV~\citep{li2024snapkv} clusters recent attention. (2) \textit{Non-Uniform Allocation}: PyramidKV~\citep{cai2024pyramidkv} employs a fixed pyramid-shaped allocation strategy with SnapKV's eviction indicator.

\textbf{Evaluating Tasks}. 
To evaluate CAKE's performance across various memory budgets, we use two carefully designed benchmarks: (1) \textit{LongBench}~\citep{bai2023longbench}: Focuses on long-context understanding, encompassing 16 datasets in six categories: Single/Multi-Document QA, Summarization, Few-shot Learning, Synthetic Tasks, and Code Completion. (2) \textit{NeedleBench}~\citep{li2024needlebench}: Tests retrieval and reasoning in complex contexts through three subtasks: Single-Needle Retrieval, Multi-Needle Retrieval, and Multi-Needle Reasoning. 

\textbf{Implementation Details}. 
CAKE is compared to baseline methods across different total memory budgets ranging from $64L$ to $2048L$. For uniform allocation, each layer has a fixed KV cache size. Non-uniform methods like PyramidKV and CAKE vary cache sizes per layer while keeping total memory constant. Tokens are evicted during both prefilling and decoding to maintain memory usage. Experiments are run on NVIDIA A100 80GB GPUs. For specifics, see the Appendix\,\ref{moredetail}.

\begin{figure}[!t]
    \centering
    \vspace{-5pt}
    \includegraphics[width=1.0\textwidth]{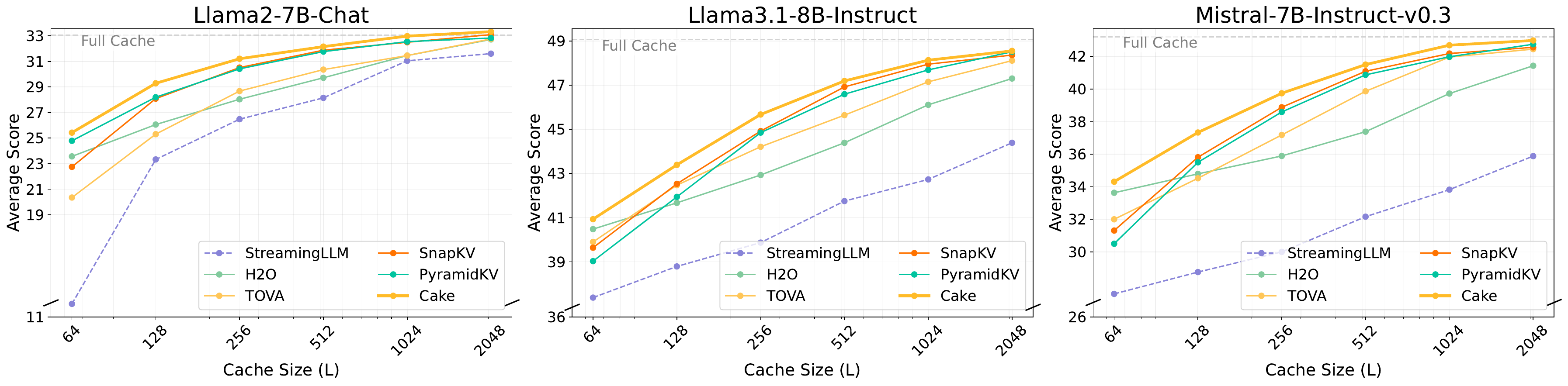}
    \caption{Average score among 16 datasets of LongBench under different cache budgets.}
    \label{avglongbench}
    \vspace{-5pt}
\end{figure}

\begin{table*}[!t]
\centering
\caption{Performance comparison over 16 datasets of LongBench. The best result is highlighted in \textbf{bold}, the second best in \underline{underline}.}
\resizebox{\textwidth}{!}{%
\setlength{\tabcolsep}{2pt}
\begin{tabular}{@{}lllllllllllllllllllllllll@{}}
\toprule

\multirow{2}{*}[-20pt]{\makecell[l]{\hspace{10pt}\raisebox{0pt}{Method}}} & \multicolumn{3}{c}{Single-Document QA} & \multicolumn{3}{c}{Multi-Document QA} & \multicolumn{3}{c}{Summarization} & \multicolumn{3}{c}{Few-shot Learning} & \multicolumn{2}{c}{Synthetic} & \multicolumn{2}{c}{Code} & \multirow{2}{*}[-20pt]{Avg.} \\ \cmidrule(lr){2-4} \cmidrule(lr){5-7} \cmidrule(lr){8-10} \cmidrule(lr){11-13} \cmidrule(lr){14-15} \cmidrule(lr){16-17} 

&\rotatebox{30}{NrtvQA} & \rotatebox{30}{Qasper} & \rotatebox{30}{MF-en} & \rotatebox{30}{HotpotQA}& \rotatebox{30}{2WikiMQA} & \rotatebox{30}{Musique} & \rotatebox{30}{GovReport} & \rotatebox{30}{QMSum} & \rotatebox{30}{MultiNews} & \rotatebox{30}{TREC} & \rotatebox{30}{TriviaQA} & \rotatebox{30}{SAMSum} & \rotatebox{30}{PCount} & \rotatebox{30}{PR-en} & \rotatebox{30}{Lcc} & \rotatebox{30}{RB-P} & \\

\midrule
\multicolumn{18}{c}{Llama2-7B-Chat, $B_{\text{total}}=128L$}\\
\midrule
    StreamingLLM & 13.57  & 13.33  & 22.39  & 20.16  & 17.12  & 6.38  & 16.72  & 19.58  & 15.37  & 31.50  & 74.84  & 33.00  & 1.67  & 4.00  & 43.21  & 40.45  & 23.33  \\
    H2O   & \underline{14.56 } & 14.59  & 26.52  & 20.05  & \underline{28.73 } & 5.00  & 16.20  & 20.30  & \underline{20.34 } & 38.00  & 72.75  & 37.10  & 2.33  & 3.50  & 49.68  & 47.37  & 26.06  \\
    TOVA  & 13.56  & 15.46  & 20.28  & 22.22  & 25.01  & 4.79  & 15.72  & 19.11  & 15.58  & 42.50  & 79.58  & 37.06  & \underline{3.45 } & 4.61  & 46.34  & 39.70  & 25.31  \\
    SnapKV & 13.70  & 16.27  & \textbf{32.52 } & 22.76  & 28.59  & \textbf{7.56 } & 18.87  & 19.96  & 20.05  & 43.50  & 79.90  & 36.74  & 2.79  & 7.00  & \textbf{51.86 } & 47.41  & 28.09  \\
    PyramidKV & 13.45  & \textbf{16.61 } & 31.21  & \underline{23.74 } & 28.43  & 6.89  & \underline{18.88 } & \underline{20.56 } & 19.81  & \underline{44.50 } & \underline{80.32 } & \textbf{37.82 } & 2.29  & \underline{8.00 } & 51.19  & \underline{47.56 } & \underline{28.20 } \\
    CAKE  & \textbf{15.15 } & \underline{16.38 } & \underline{32.17 } & \textbf{25.25 } & \textbf{31.09 } & \underline{7.00 } & \textbf{19.47 } & \textbf{21.07 } & \textbf{20.36 } & \textbf{48.50 } & \textbf{80.66 } & \underline{37.77 } & \textbf{4.42 } & \textbf{9.00 } & \underline{51.48 } & \textbf{48.90 } & \textbf{29.29 } \\
\midrule
\multicolumn{18}{c}{Llama2-7B-Chat, $B_{\text{total}}=1024L$}\\
\midrule

    StreamingLLM & 13.73  & 16.75  & 26.28  & 25.97  & 25.67  & 5.87  & 22.21  & 19.27  & 24.07  & 61.00  & 80.88  & 40.90  & 2.28  & 1.00  & 56.71  & 48.79  & 29.46  \\
    H2O   & 16.58  & 17.67  & 32.04  & 26.01  & 28.73  & 7.81  & 22.87  & 20.99  & 25.05  & 60.00  & 83.02  & 40.06  & 2.90  & 3.50  & 57.57  & 52.02  & 31.05  \\
    TOVA  & 15.00  & 17.32  & 29.57  & 27.48  & \textbf{31.71 } & 7.04  & 21.86  & 20.54  & 24.78  & \textbf{63.50 } & \textbf{83.35 } & \textbf{41.78 } & 2.63  & \underline{8.00 } & 56.69  & 52.11  & 31.46  \\
    SnapKV & \underline{18.05 } & 19.73  & 38.07  & \underline{27.57 } & 30.86  & \underline{8.41 } & \underline{23.41 } & 20.79  & 25.22  & \textbf{63.50 } & 82.13  & 40.98  & \underline{3.33 } & 7.50  & \textbf{58.20 } & \underline{52.23 } & 32.50  \\
    PyramidKV & 17.94  & \underline{20.68 } & \textbf{38.85 } & 27.25  & 29.99  & 7.91  & 23.17  & \textbf{21.55 } & \textbf{25.45 } & \textbf{63.50 } & 82.97  & 40.79  & \textbf{3.57 } & 7.50  & 57.72  & 51.93  & \underline{32.55 } \\
    CAKE  & \textbf{18.35 } & \textbf{20.93 } & \underline{38.49 } & \textbf{27.99 } & \underline{31.15 } & \textbf{8.59 } & \textbf{24.29 } & \underline{21.06 } & \underline{25.40 } & \textbf{63.50 } & \underline{83.33 } & \underline{41.30 } & 3.11  & \textbf{9.50 } & \underline{57.82 } & \textbf{52.93 } & \textbf{32.98 } \\

\midrule
\multicolumn{18}{c}{Llama3.1-8B-Instruct, $B_{\text{total}}=128L$}\\
\midrule

    StreamingLLM & 23.01  & 22.50  & 31.00  & 46.79  & 40.52  & 24.76  & 18.38  & 20.89  & 16.96  & 40.50  & 86.00  & 38.55  & \textbf{7.00 } & \textbf{99.50 } & 57.06  & 47.16  & 38.79  \\
    H2O   & 26.18  & 24.83  & 44.08  & 51.33  & 43.29  & 27.78  & 20.79  & 22.36  & \underline{20.69 } & 41.50  & \underline{89.88 } & \underline{40.94 } & 6.20  & 99.00  & \textbf{58.12 } & 49.82  & 41.67  \\
    TOVA  & \textbf{29.44 } & 27.03  & 45.20  & \textbf{54.15 } & \underline{44.87 } & \textbf{28.87 } & \textbf{22.21 } & 20.54  & 20.52  & \textbf{53.50 } & \textbf{92.27 } & \textbf{40.99 } & 6.17  & \textbf{99.50 } & 51.14  & 43.15  & 42.47  \\
    SnapKV & 26.40  & \underline{27.67 } & \underline{47.83 } & \underline{53.40 } & 44.20  & \underline{28.68 } & 20.56  & \underline{23.11 } & 20.13  & 45.50  & 89.36  & 39.98  & 6.33  & 99.00  & 57.69  & \textbf{50.69 } & \underline{42.53 } \\
    PyramidKV & 25.57  & 27.40  & 46.36  & 53.14  & 44.51  & 27.16  & 20.74  & 22.26  & 19.90  & 45.50  & 87.06  & 40.13  & \underline{6.50 } & \textbf{99.50 } & 56.98  & 48.26  & 41.94  \\
    CAKE  & \underline{27.41 } & \textbf{32.01 } & \textbf{49.58 } & 52.99  & \textbf{45.76 } & 28.32  & \underline{21.85 } & \textbf{23.15 } & \textbf{21.56 } & \underline{47.50 } & 89.61  & 40.90  & 6.33  & \textbf{99.50 } & \underline{57.92 } & \underline{49.83 } & \textbf{43.39 } \\
    
\midrule
\multicolumn{18}{c}{Llama3.1-8B-Instruct, $B_{\text{total}}=1024L$}\\
\midrule

    StreamingLLM & 26.64  & 30.77  & 35.59  & 47.31  & 42.03  & 24.17  & 25.81  & 21.31  & 25.66  & 63.50  & 88.84  & 42.76  & \textbf{6.50 } & 88.00  & 61.36  & 53.47  & 42.73  \\
    H2O   & 29.57  & 36.15  & 45.94  & 54.43  & 44.81  & 29.04  & 27.64  & 23.31  & \textbf{26.47 } & 62.00  & 91.83  & \underline{43.14 } & 6.36  & 99.00  & \underline{62.74 } & 55.39  & 46.11  \\
    TOVA  & 30.66  & 40.95  & 51.09  & 54.58  & 46.51  & 30.62  & \underline{28.12 } & 23.61  & 26.24  & \underline{68.00 } & 91.49  & \textbf{43.80 } & 5.92  & \textbf{99.50 } & 60.73  & 52.64  & 47.15  \\
    SnapKV & \textbf{30.95 } & \underline{44.74 } & \underline{52.58 } & 55.09  & \underline{46.83 } & 30.37  & 27.87  & \underline{24.57 } & 25.99  & \underline{68.00 } & \underline{92.03 } & 42.60  & \textbf{6.50 } & \textbf{99.50 } & \textbf{63.00 } & \underline{56.50 } & \underline{47.95 } \\
    PyramidKV & 30.54  & 43.64  & \textbf{52.73 } & \underline{55.29 } & 46.29  & \textbf{31.28 } & 27.53  & 24.50  & 26.00  & \underline{68.00 } & \textbf{92.09 } & 41.75  & 6.05  & \textbf{99.50 } & 62.35  & 55.44  & 47.69  \\
    CAKE  & \underline{30.88 } & \textbf{44.95 } & 52.38  & \textbf{55.49 } & \textbf{46.99 } & \underline{30.82 } & \textbf{28.68 } & \textbf{24.91 } & \underline{26.39 } & \textbf{69.00 } & 91.94  & 42.60  & 6.00  & \textbf{99.50 } & 62.65  & \textbf{56.89 } & \textbf{48.13 } \\

\midrule
\multicolumn{18}{c}{Mistral-7B-Instruct-v0.3, $B_{\text{total}}=128L$}\\
\midrule

    StreamingLLM & 16.91  & 21.51  & 24.85  & 34.14  & 26.99  & 16.64  & 15.67  & 18.61  & 14.40  & 43.50  & 83.26  & 37.00  & 0.00  & 23.00  & 41.63  & 42.12  & 28.76  \\
    H2O   & 21.25  & 26.66  & 35.13  & 38.82  & \underline{29.80 } & 18.88  & \underline{21.00 } & 19.50  & \underline{18.63 } & 41.00  & 87.64  & \underline{38.25 } & \textbf{1.50 } & 67.00  & 44.35  & \textbf{47.29 } & 34.79  \\
    TOVA  & \textbf{22.47 } & 24.26  & 37.22  & 42.26  & 28.85  & 19.97  & 19.40  & 18.70  & 17.86  & \textbf{63.00 } & \textbf{88.98 } & 37.71  & 0.50  & 56.50  & 37.42  & 37.22  & 34.52  \\
    SnapKV & 21.02  & \underline{27.26 } & 41.25  & \textbf{45.15 } & 29.23  & \underline{22.75 } & 20.47  & \underline{20.17 } & 17.75  & 42.50  & 87.28  & 38.01  & 0.50  & \underline{69.50 } & \underline{44.48 } & 45.69  & \underline{35.81 } \\
    PyramidKV & 21.73  & 26.60  & \underline{41.46 } & 43.20  & 29.32  & 21.47  & 20.23  & 19.82  & 17.46  & 40.00  & 87.64  & 37.11  & \underline{1.05 } & 69.00  & 42.55  & 42.98  & 35.14  \\
    CAKE  & \underline{22.31 } & \textbf{29.15 } & \textbf{43.51 } & \underline{44.51 } & \textbf{30.36 } & \textbf{22.85 } & \textbf{21.56 } & \textbf{20.47 } & \textbf{18.96 } & \underline{47.00 } & \underline{88.60 } & \textbf{39.36 } & 1.00  & \textbf{76.50 } & \textbf{44.96 } & \underline{46.19 } & \textbf{37.33 } \\

\midrule
\multicolumn{18}{c}{Mistral-7B-Instruct-v0.3, $B_{\text{total}}=1024L$}\\
\midrule

    StreamingLLM & 20.96  & 28.05  & 30.03  & 37.06  & 27.56  & 16.03  & 24.03  & 19.07  & 22.79  & 67.00  & 87.61  & 40.96  & \textbf{1.50 } & 21.50  & 48.05  & 48.87  & 33.82  \\
    H2O   & 23.78  & 31.63  & 41.31  & 43.24  & 31.07  & 20.43  & 26.74  & 20.41  & \underline{23.93 } & 67.50  & \textbf{88.84 } & 42.62  & \textbf{1.50 } & 72.00  & \underline{50.60 } & 49.87  & 39.72  \\
    TOVA  & \textbf{26.97 } & 34.51  & 45.58  & 44.32  & \underline{32.58 } & 22.83  & \underline{26.91 } & 20.75  & 23.49  & \textbf{75.00 } & \underline{88.66 } & \textbf{43.17 } & 1.00  & \underline{91.00 } & 47.51  & 47.06  & 41.96  \\
    SnapKV & \underline{26.63 } & 35.78  & \textbf{48.11 } & \underline{45.75 } & 32.20  & 23.37  & 26.71  & \textbf{21.84 } & 23.18  & 70.50  & 88.61  & 41.37  & 0.50  & 88.00  & \underline{50.60 } & \textbf{51.79 } & \underline{42.18 } \\
    PyramidKV & 25.51  & \underline{36.02 } & 47.72  & 44.74  & \textbf{33.16 } & \textbf{23.91 } & 26.55  & \underline{21.83 } & 23.27  & 70.50  & 88.41  & 40.94  & 1.00  & 87.00  & 50.17  & 50.93  & 41.98  \\
    CAKE  & 26.09  & \textbf{36.34 } & \textbf{48.11 } & \textbf{45.97 } & 32.39  & \underline{23.49 } & \textbf{27.56 } & 21.45  & \textbf{24.03 } & \underline{72.50 } & 88.61  & \underline{42.71 } & 0.00  & \textbf{91.50 } & \textbf{51.06 } & \underline{51.25 } & \textbf{42.69 } \\

\bottomrule
\end{tabular}%
}
\label{longbenchdetail}
\end{table*}


\subsection{Evaluations on LongBench Dataset}
We benchmark CAKE against other methods on 16 datasets. Figure\,\ref{avglongbench} displays the performance of various methods under different memory constraints across three models. SnapKV surpasses legacy methods like StreamingLLM and H2O by using an observation window for memory retention. TOVA, relying solely on the last token, falls short by ignoring the broader context. Despite similarities to SnapKV, PyramidKV underperforms in many scenarios, possibly due to its limited adaptability. CAKE, however, outperforms others by dynamically allocating memory to each layer based on its specific needs, especially under tight memory conditions. Notably, CAKE achieves performance comparable to full cache models while only utilizing a small cache budget ($B_{\text{total}} > 512L$).


For detailed analysis, see Table\,\ref{longbenchdetail} for results across two memory scenarios: a low setting ($B_\text{total} = 128L$) and a high setting ($B_\text{total} = 1024L$). 
Full results under different constraints are detailed in Appendix\,\ref{addlongbench1}. CAKE consistently ranks among the top performers or closely matches them across various tasks. In the low memory scenario with the Llama2 model, CAKE leads or finishes second in all datasets.
CAKE's adaptive allocation strategy and robust eviction indicator consistently demonstrate superior and balanced performance, as its generalizability extends well to various model architectures with ranging model sizes. 
We evaluate this on two more architectures, Qwen2.5 and Gemma, as well as larger models (13B-70B), with results presented in Appendix\,\ref{addlongbench2} and \ref{addlongbench3}.




\begin{figure*}[!t]
\centering
\includegraphics[width=0.72\textwidth]{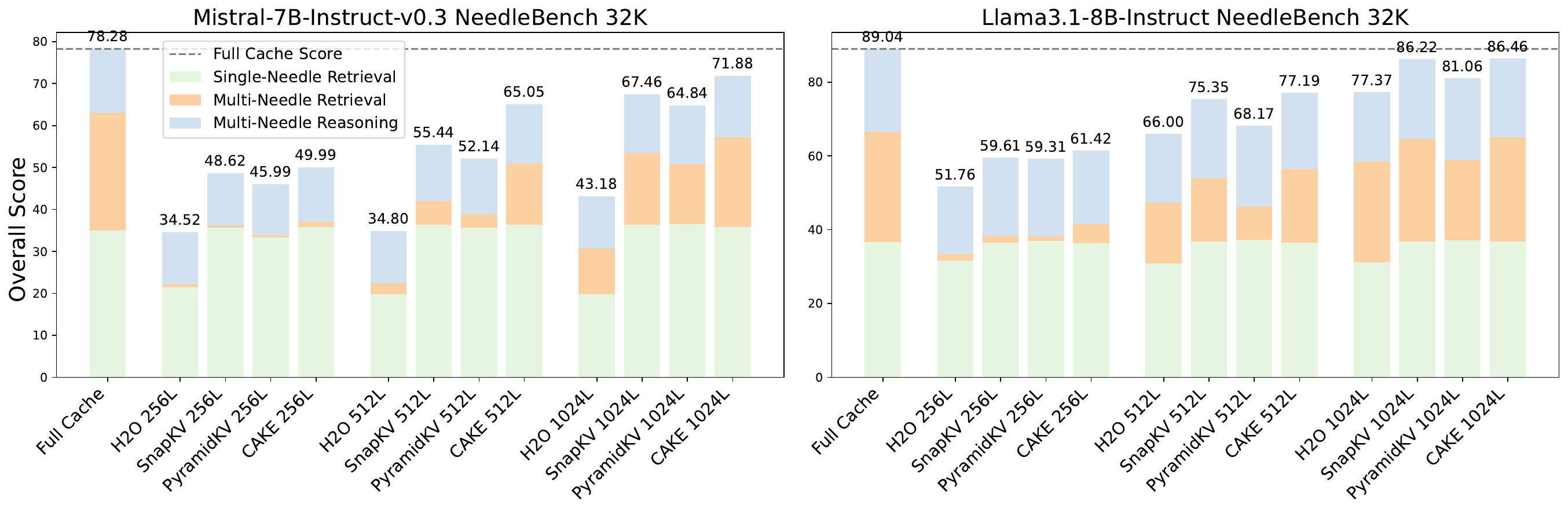}
\vspace{-8pt}
\caption{Performance comparison on NeedleBench 32K.}
\label{needlebench_performance}
\vspace{-16pt}
\end{figure*}

\subsection{Evaluations on Needlebench dataset}



We also test CAKE on retrieval and reasoning tasks using the NeedleBench dataset, with the results depicted in Figure\,\ref{needlebench_performance}. Consistent with the LongBench outcomes, CAKE shows superior overall performance.
While utilizing only 3.2\% of the cache size, CAKE preserves the model's capabilities for processing 32K-token-long contexts and even surpasses the full cache in the Single-Needle Retrieval task. 
This suggests that pruning specific cache information may eliminate superfluous details, allowing generation to focus on the most salient aspects and enhance its overall efficacy.
Notably, CAKE outperforms existing methods in more complex tasks akin to real-world scenarios, particularly in Multi-Needle Retrieval. This can be attributed to CAKE's design, which balances long-term significance and short-term relevance, maintaining a more comprehensive and balanced representation of contextual information while avoiding premature discarding of information that could be vital for subsequent complex retrievals. For a more in-depth examination of the experimental outcomes, including detailed statistics and additional visualization graphs, please refer to Appendix\,\ref{needlebench_details}.

\subsection{Evaluation of Memory and Throughput}
We assess the effectiveness of our method in reducing memory consumption and enhancing time efficiency during LLM inference by analyzing peak memory usage and decoding latency on Mistral-7B-Instruct-v0.3 implemented with FlashAttention-2~\citep{dao2023flashattention2fasterattentionbetter}. Our comparison includes full cache, and three KV cache eviction methods, all with $B_{\text{total}}=1024L$: SnapKV (uniform allocation strategy), PyramidKV (fixed-pattern allocation strategy), and our proposed CAKE method. 

\textbf{Peak Memory Usage}. As depicted in the left panel of Figure\,\ref{fig:latency_vmem}, CAKE exhibits substantial memory-saving capabilities, comparable to other KV cache eviction methods such as uniform allocation (SnapKV) and fixed-pattern non-uniform allocation (PyramidKV). All these approaches maintain a fixed-size KV cache. When compared to the full cache implementation, CAKE achieves an impressive reduction in peak memory usage of approximate 48.63\% with a 128K context length.

\textbf{Throughput Analysis}.
Regarding decoding latency, the right panel of Figure\,\ref{fig:latency_vmem} shows that both uniform and non-uniform eviction methods exhibit comparable inference performance. As input length increases, decoding latency for the full cache method grows significantly due to escalating computational demands and I/O latency bottlenecks. In contrast, CAKE maintains a relatively stable decoding speed by preserving a fixed amount of KV cache, resulting in significantly lower latency compared to the full cache, particularly for longer sequences. It is noteworthy that CAKE demonstrates remarkable efficiency, achieving over 10$\times$ speedup in decoding latency compared to the full cache approach when processing sequences with 128K context length.

\subsection{Compatibility with Existing KV Eviction Methods}
\label{cmp_ee}
Our preference-prioritized adaptive allocation strategy demonstrates strong compatibility with existing eviction indicators. We evaluate its performance using two representative methods: H2O, which serves as a classical indicator, and SnapKV, which represents a new advanced indicator. The experiments are conducted on LongBench datasets using Llama2-7B-Chat under $B_\text{total}$ of $128L$ and $512L$. We report the average performance across six tasks. In Figure\,\ref{fig:radar}, when compared with vanilla uniform cache allocation, methods equipped with our allocation strategy consistently improve performance across nearly all tasks. Importantly, they achieve significant overall performance gains. This comprehensive improvement across different eviction methods and tasks demonstrates the versatility and effectiveness of our allocation approach. It stresses the potential of our strategy as a generalizable framework for optimizing KV cache eviction, enhancing various existing methods. 

\begin{figure}[!t]
\centering
\begin{minipage}[t]{0.6\textwidth}
    \centering
    \includegraphics[width=\textwidth]{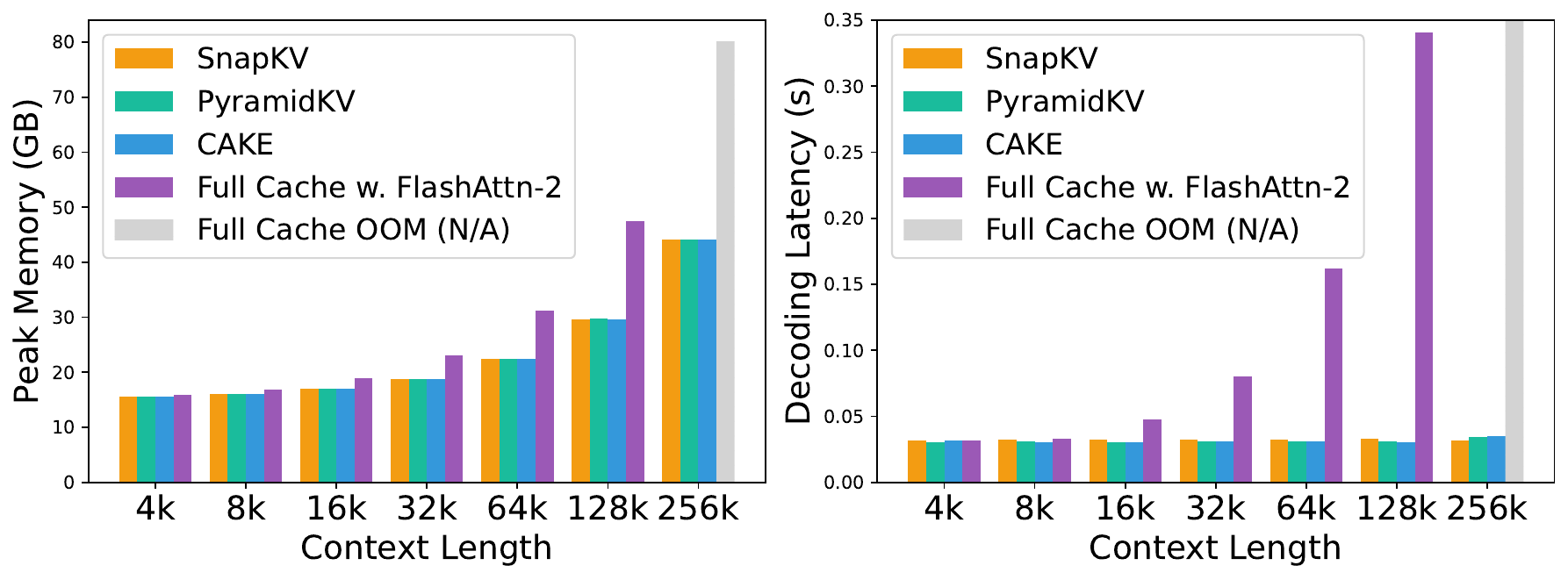}
    \vspace{-15pt}
    \caption{Peak memory usage and decoding latency on A100 80GB.}
    \label{fig:latency_vmem}
    
    
    \includegraphics[width=1.05\textwidth]{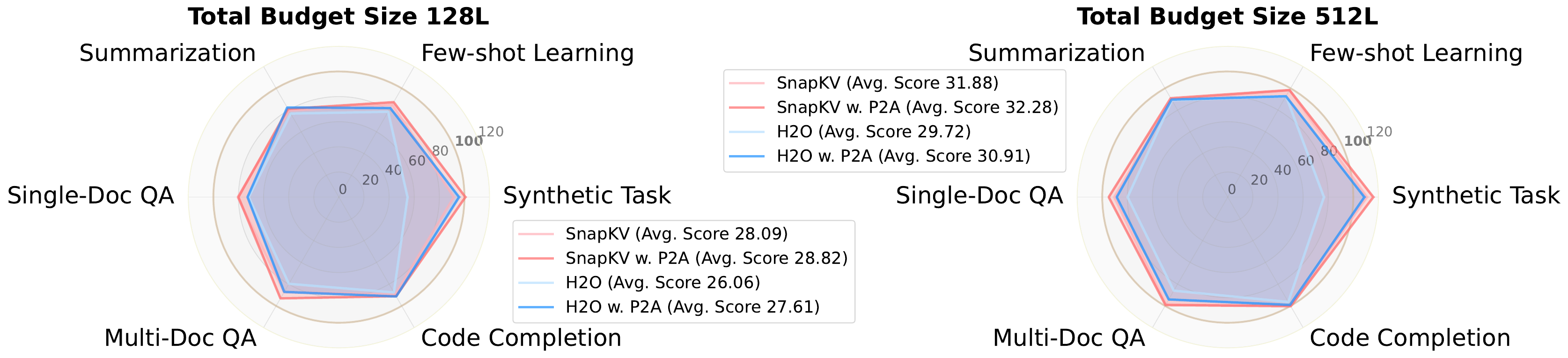}
    \vspace{-10pt}
    \caption{Performance on methods w/ or w/o preference-prioritized cache allocation strategy, abbreviated as ``P2A.''}
    \label{fig:radar}
\end{minipage}%
\hfill
\begin{minipage}[t]{0.36\textwidth}
    \vspace{-87pt}
    \centering
    \captionof{table}{Performance comparison with different allocation strategies. ``P2A'' denotes our preference-prioritized adaptive allocation.}
    \label{allo_ab}
    \resizebox{0.6\textwidth}{!}{%
    \begin{tabular}{lc}
\toprule
        Strategy  & Avg.\\
\midrule
        Uniform &28.36\\
        Pyramid & 28.69\\
        Random & 28.3$\pm$0.2\\
        P2A &\textbf{29.29} \\
\bottomrule
    \end{tabular}
    }
    \vspace{0.4em}
    \captionof{table}{Performance comparison with different eviction indicators.}
    \label{indicator_abl}
    \resizebox{0.60\textwidth}{!}{%
    \begin{tabular}{lc}
\toprule
        Indicator  & Avg.\\
\midrule
        Mean & 28.82 \\
        Var &28.68 \\
        Mean $\cdot$ Var &28.6 \\
        Mean $+$ Var &\textbf{29.29}\\
\bottomrule
    \end{tabular}
    }
\end{minipage}
\end{figure}




\subsection{Ablation Studies}

In this section, we present a series of ablation studies on LongBench to evaluate the effectiveness of our proposed allocation strategy and eviction indicator. We use Llama2-7B-Chat with cache budget $B_\text{total} = 128L$ as the default setting. Additional ablation studies are provided in Appendix\,\ref{aas}.

\textbf{Effectiveness of Proposed Allocation Strategy}. 
To assess our preference-prioritized adaptive allocation strategy, we compare it with various cache allocation strategies, including uniform, pyramid-shaped, and random allocation. As shown in Table\,\ref{allo_ab}, the pyramid-shape allocation strategy performs better than the uniform allocation strategy in this case. However, it may become ineffective in many scenarios, as we verified in the previous experimental section. Notably, the random method even outperforms the uniform strategy in some cases. This indicates that the conservative allocation strategy of uniform distribution fails to fully utilize the limited memory budget. Our allocation strategy consistently outperforms all baselines, as our method comprehensively measures the KV cache preference by capturing complex layer-specific attention patterns both spatially and temporally.

\textbf{Effectiveness of Proposed Eviction Indicator}. 
We further investigate effective eviction indicators to identify the most crucial tokens for preserving model performance. We compare different metrics for the eviction indicator: mean only, variance only, and combinations of mean and variance (multiplication and addition).
The results in Table\,\ref{indicator_abl} suggest that, compared to variance only, tokens selected by mean attention are more conducive to maintaining performance. While the multiplicative combination of mean and variance slightly underperforms the individual metrics, the additive combination achieves the best performance overall. As we analyze, using mean attention effectively captures tokens with long-term importance, ensuring the retention of consistently relevant information. On the other hand, incorporating variance helps identify positions with the most significant changes, thus aiding in maintaining the attention distribution. The additive combination optimally balances these two aspects, leading to more informed eviction decisions.



\section{Conclusion}
In this paper, we propose Cascading and Adaptive KV cache Eviction (CAKE), a novel approach for optimizing KV cache evicting in LLMs. CAKE dynamically allocates cache sizes by leveraging a global view of layer-specific attention patterns, using cascading cache management guided by layer preferences. Additionally, CAKE introduces a new eviction indicator that accounts for both the long-term influence and temporal variability of token importance, allowing for more informed token selection. Experiments on the LongBench and NeedleBench benchmarks highlight CAKE's superior performance across different models and memory constraints, especially in low-memory scenarios. Our method enhances both LLM performance on long-context tasks and inference efficiency.

\section*{Acknowledgement}
The paper is supported in part by Ant Group through the CCF-Ant Research Fund and the National Natural Science Foundation of China (No.62325109, U21B2013).

\bibliography{iclr2025_conference}
\bibliographystyle{iclr2025_conference}

\appendix



    
    
\newpage

\section{Additional Related Works}\label{more related work}

\textbf{Efficient Attention}.
Numerous strategies have been devised to tackle the high time and space demands of the attention layer. These strategies include low-rank techniques to transform the original attention mechanism into one with linear or near-linear complexity~\citep{choromanski2020rethinking,wang2020linformer,katharopoulos2020transformers,alberti2023sumformer}, sparse attention to confine the attention to specific, predetermined or adaptively learned patterns~\citep{child2019generating,kitaev2020reformer,roy2021efficient}, and their integration for a more comprehensive solution~\citep{beltagy2020longformer,zaheer2020big,chen2021scatterbrain,ren2021combiner}.
Another group minimizes KV cache size. This is achieved by either re-engineering the architecture of multi-head attention or by innovating the update process of the KV cache. Multi-Query Attention (MQA)~\citep{shazeer2019fast} and Group-Query Attention (GQA)~\citep{ainslie2023gqa} reduce KV heads by sharing KV pairs. Multi-head Latent Attention (MLA)~\citep{liu2024deepseek} furthers efficiency by compressing keys and values into latent vectors, while YOCO~\citep{sun2024you} employs cross-attention to recycle the shared KV cache. Dynamic Memory Compression (DMC)~\citep{nawrot2024dynamic} dynamically consolidates the KV cache during the training process, thereby optimizing resource utilization.

\textbf{Orthogonal KV Cache Compression}.
Beyond the eviction of the KV cache, numerous strategies have been investigated for its compression. Specifically, quantization methods~\citep{liu2024kivi,yue2024wkvquant,kang2024gear} have been applied to condense the KV cache into a lower-bitwidth format. Moreover, a fusion of KV cache eviction with quantization~\citep{yang2024no}, pruning~\cite{xu2024think}, low-rank factorization~\citep{dong2024get}, or cache merging~\citep{liu2024minicache,wan2024d2o} has been explored to achieve a more compact form of the cache, preserving some of the discarded data.
Additionally, there are initiatives to transfer the evicted KV cache to auxiliary memory resources for potential redeployment in computation~\citep{xiao2024infllm,fu2024lazyllm}. It is crucial to acknowledge that these compression techniques are compatible with KV cache eviction and, when integrated, can amplify their individual benefits, leading to more robust solutions for easing the KV cache load.

\section{Limitation and Future Work}
CAKE introduces a novel perspective on KV cache management by considering the model's varying attention patterns across layers and comprehensively measuring layer-specific preferences for KV cache allocation. This approach enables effective adaptation to different models and input contexts, optimizing the use of limited memory resources. Despite these strengths, several limitations and areas for future work remain. While CAKE improves cache management at the layer level, it does not address finer-grained dynamics within layers. Future work could explore integrating head-level attention patterns with layer-specific variations, potentially offering more nuanced control over cache allocation. Additionally, although we demonstrate CAKE's compatibility with existing eviction methods (Section~\ref{cmp_ee}) and quantization methods (Appendix~\ref{cmp_ee}), there is potential to further enhance memory efficiency by combining it with other KV cache optimization techniques such as cache merging~\citep{liu2024minicache} and pruning~\citep{xu2024think}. This integration could significantly expand CAKE's applicability and resource-saving capabilities. In the future, we will continue to explore strategies for efficient LLMs, focusing on both memory efficiency and computational optimization to further push the boundaries of resource-constrained inference.

\section{More Implementation Details}
\label{moredetail}
In this section, we provide more detailed implementation information for CAKE.
Our algorithm is divided into two phases: prompt prefilling and token decoding. During prompt prefilling, we utilize preference-guided cascading cache management (Section\,\ref{pgccm}) based on the preference-prioritized adaptive allocation strategy (Section\,\ref{p2a}) to allocate appropriate cache sizes for different layers. During preference-guided cascading cache management, we manage the KV cache by updating it using eviction operation based on the attention-shift tolerant eviction indicator (Section\,\ref{astei}).
For the preference-prioritized adaptive allocation strategy, the temperature parameters $\tau_1$ and $\tau_2$ are set to $0.2\sim2$ and $0.4\sim3$ respectively, optimized through grid search. For our experimental models, we aggregate each quantized value to represent its layer characteristics.
For the attention-shift tolerant eviction indicator, we fix $\gamma=200$. Following SnapKV~\citep{li2024snapkv}, we use an observation window of size $S_w=32$ and utilize a pooling layer clustering information. 
In the decoding stage, once the attention mechanisms for layers have been well established, we fix the budget size for each layer according to the allocation results obtained from the preference-prioritized adaptive allocation.
Listing\,\ref{cake_implementation} provides PyTorch-style pseudo-code for CAKE's implementation with FlashAttention.

\begin{lstlisting}[language=Python, caption={Implementation of CAKE in pseudo PyTorch style.}, label={cake_implementation}, basicstyle=\scriptsize]
class CakeCache(Cache):
    def __init__(self)
        self.pref_scores = []
        self.evict_indicators = []
        self.layers_budget = []
    ...
# Layer 0 to L-1
def attention_forward(self, hidden_states, ..., past_key_value:Optional[CakeCache] = None):
    ...
    # Compute query_states, key_states, value_states
    bsz, q_len, _ = hidden_states.size()
    ...
    # Compute local attention to the current layer
    local_attn = compute_local_attention(query_states[:,-self.window_size:,:], key_states, value_states)
    observed_attn = local_attn[:,:,-self.window_size:,:-self.window_size]
    if prefill:
        # Calculate preference score
        dispersion = calculate_attn_dispersion(observed_attn)
        shift = calculate_attn_shift(observed_attn)
        past_key_value.pref_scores[layer] = (dispersion**(1/tau1) * shift**(1/tau2))
        # Calculate eviction indicator
        attn_mean = mean(observed_attn)
        attn_var = var(observed_attn)
        indicator= attn_mean + self.gamma * attn_var
        indicator = pool1d(indicator)
        # Update
        past_key_value.pref_scores.append(pref_score)
        past_key_value.evict_indicators(pref_score, indicator)
        past_key_value = cake(past_key_value, q_len, self.layer_id)
    else:
        indicator= pool1d(observed_attn)
        past_key_value.update_score(indicator)
    ...
    # Compute attention output using flash_attention
    attn_output = flash_attention_forward(query_states, key_states, value_states, ...
    )
    return attn_output, past_key_value
        
def cake(self, past_key_value, seq_len, current_layer_idx):
    if seq_len<=self.cache_size-self.window_size:
        return past_key_values
    # Cache budget management
    if prefill:
        pref_scores = past_key_values.pref_scores
        layer_budgets = [pref_score/sum(pref_scores)*self.total_budget for pref_score in pref_scores]
        # The following loop can be executed in parallel for each layer
        for layer_idx, budget_size in enumerate(layer_budgets):   
            budget_size = min(budget_size, seq_len - self.window_size)
            past_key_value = evict_layer_kvcache(past_key_value, layer_idx, budget_size)
            past_key_value.layers_budget[layer_idx] = budget_size
    else:
        budget_size = past_key_values.layer_budget[current_layer_idx]
        past_key_values = self.evcit_layer_kvcache(past_key_values, current_layer_idx, budget_size)
    return past_key_values

def evict_layer_kvcache(self, past_key_values, layer_idx, budget_size):
    bsz, num_key_value_heads, seq_len, head_dim = past_key_values.key_cache[layer_idx].shape
    num_key_value_groups = self.num_heads // num_key_value_heads
    indicator = past_key_values.evict_indicators[layer_idx]
    indices = indicator.topk(budget, dim=-1).indices
    # Update the eviction indicators
    past_key_values.evict_indicators[layer_idx] = compress_kv(indicator, indices, self.window_size)
    indices = indices.unsqueeze(-1).expand(-1, -1, -1, head_dim)
    # Update the KV cache
    past_key_values.key_cache[layer_idx] = compress_kv(key_states, indices, self.window_size) 
    past_key_values.value_cache[layer_idx] = compress_kv(value_states, indices, self.window_size) 
    return past_key_values
\end{lstlisting}

        


\section{Proofs}
\label{proofs}
\begin{proof}[Proof of Proposition~\ref{th1}]
Given the budget calculation formula in Eq.(\ref{updateb}), we have:
\begin{equation}
B_l^{(m)} = \frac{\mathcal{P}_l}{\sum_{k=0}^m \mathcal{P}_k}\cdot B_{\text{total}},
\end{equation}
\begin{equation}
B_l^{(m+1)} = \frac{\mathcal{P}_l}{\sum_{k=0}^{m+1} \mathcal{P}_k}\cdot B_{\text{total}}.
\end{equation}

And the inequality Eq.(\ref{thiq1}) is equivalent to:
\begin{equation}
\frac{\mathcal{P}_l}{\sum_{k=0}^{m+1} \mathcal{P}_k} < \frac{\mathcal{P}_l}{\sum_{k=0}^m \mathcal{P}_k}.
\end{equation}

Since preference scores are always positive, this inequality holds. 
Therefore, we complete the proof that the allocated budget size monotonically decreases. 
\end{proof}

\begin{proof}[Proof of Theorem\,\ref{th2}]
Recall that the eviction operation on layer $l$ essentially retains KV pairs based on top-$B_l$ selection indices $\mathbf{D}_l$ on the values of $\mathbf{I}_l$, where $B_l$ is the target budget size. Therefore, proving Eq.(\ref{evict}) is equivalent to demonstrating:
\begin{equation}\label{topk_equiv}
\text{TopK}(\mathbf{I}_l^{(m)}, B_l^{(m)})_{m=l}^{L-1} = \text{TopK}(\mathbf{I}_l, B_l).
\end{equation}

The left-hand side of the above equation can be expanded as a cascading process:
\begin{equation}\label{topk_process}
\begin{aligned}
\mathbf{D}_l^{(l)}& = \text{TopK}(\mathbf{I}_l^{(l)}, B_l^{(l)}),\quad \mathbf{D}_l^{(l)}\in \mathbb{R}^{B_l^{(l)}},\\
\mathbf{D}_l^{(l+1)}& = \text{TopK}(\mathbf{I}_l^{(l+1)}, B_l^{(l+1)}),\quad\mathbf{D}_l^{(l+1)}\in \mathbb{R}^{B_l^{(l+1)}},  \\
&\quad \quad\vdots \\
\mathbf{D}_l^{(L-1)}& = \text{TopK}(\mathbf{I}_l^{(L-1)}, B_l^{(L-1)}) ,\quad\mathbf{D}_l^{(L-1)}\in \mathbb{R}^{B_l^{(L-1)}}. \\
\end{aligned}
\end{equation}

For the vanilla method, $\mathbf{D}_l$ is obtained by a single TopK operation:
\begin{equation}
\mathbf{D}_l = \text{TopK}(\mathbf{I}_l, B_l).
\end{equation}

Given these definitions, our goal simplifies to proving that:
\begin{equation}
\mathbf{D}_l^{(L-1)} = \mathbf{D}_l.
\end{equation}

To prove this equality, we first show that these two sets have the same number of elements. Note that $|\mathbf{D}_l^{(L-1)}| = B_l^{(L-1)}$ and $|\mathbf{D}_l| = B_l$. At the final stage, both methods have access to the preference scores of all layers, and the budget calculation for layer $l$ is identical in both cases:
\begin{equation}
B_l^{(L-1)} = B_l = \frac{\mathcal{P}_l}{\sum_{k=0}^{L-1} \mathcal{P}_k} \cdot B_{\text{total}},
\end{equation}
where $\mathcal{P}_l$ is the preference score for layer $l$, and $B_{\text{total}}$ is the total cache budget. Thus, $|\mathbf{D}_l^{(L-1)}| = |\mathbf{D}_l|$.
Next, we further prove that elements in
 $\mathbf{D}_l^{(L-1)}$ and $\mathbf{D}_l$ are identical. Given that $B_l^{(m)}$ is decreasing as $m$ increases, for any stage $m\in[l,L-1)$, we have:
\begin{equation}
\text{TopK}(\mathbf{I}_l^{(m)}, B_l^{(m)}) \supset \text{TopK}(\mathbf{I}_l, B_l^{(m+1)}) =\text{TopK}(\mathbf{I}_l^{(m)}, B_l^{(m+1)}).
\end{equation}

Therefore, we have:
\begin{equation}
\mathbf{D}_l^{(l)} \supset \mathbf{D}_l^{(l+1)} \supset ... \supset \mathbf{D}_l^{(L-1)} = \text{TopK}(\mathbf{I}_l^{(L-1)}, B_l^{(L-1)}) = \text{TopK}(\mathbf{I}_l, B_l).
\end{equation}
In the final stage, set $\mathbf{D}_l^{(L-1)}$ contains exactly the $B_l^{(L-1)} = B_l$ highest-scoring elements from $\mathbf{I}_l$, which is identical to $\mathbf{D}_l$. Thus, we have proven that $\mathbf{D}_l^{(L-1)} = \mathbf{D}_l$, which completes the proof of Theorem\,\ref{th2}.

\end{proof}


\section{Comparison and Compatibility with Quantization Methods}
\label{ccqm}

\begin{figure}[!t]
    \centering
    \includegraphics[width=0.9\textwidth]{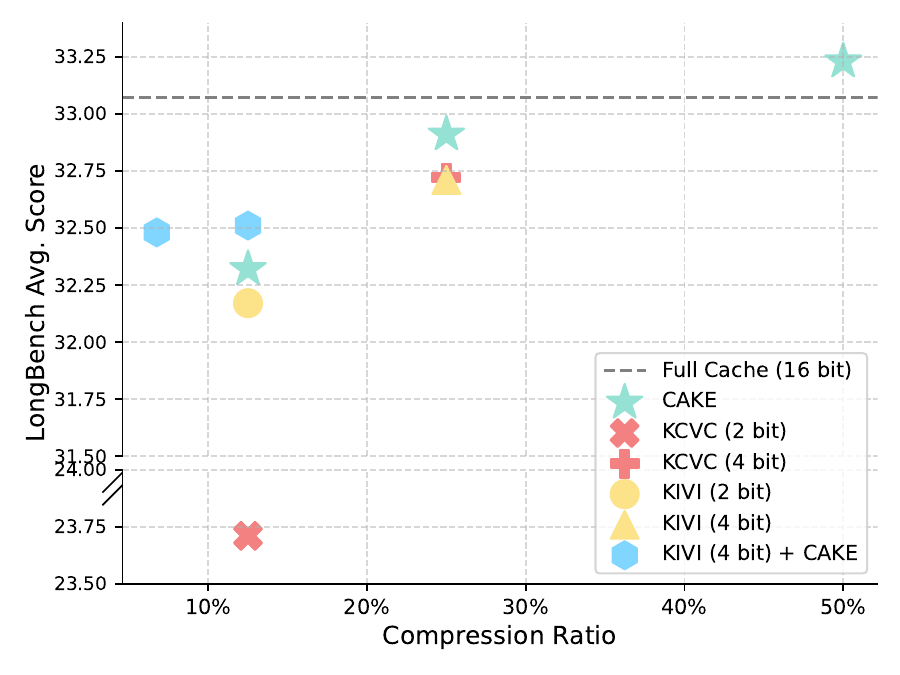}
    \caption{Comparison of CAKE, KV cache quantization methods (KIVI and KCVC), and their combination (KIVI with CAKE) on Llama2-7B-Chat using LongBench dataset.}
    \label{fig:compression_rate}
\end{figure}


While CAKE focuses on selectively dropping less important KV pairs to reduce memory footprint, quantization approaches aim to compress the cache by reducing the bit precision of stored values. These two strategies are orthogonal and potentially complementary in nature.

In this section, we evaluate CAKE against two quantization approaches: KIVI~\citep{liu2024kivi}, a state-of-the-art method using asymmetric KCVT quantization (quantizes Key cache per-channel and Value cache per-token), and a computationally efficient channel-wise quantization variant for both key and value caches (denoted as KCVC).
As is shown in Figure\,\ref{fig:compression_rate}, under matched compression ratios, CAKE outperforms full cache (33.23 \emph{vs.} 33.07) and both quantization methods at 25\% and 12.5\% compression ratios. 
Notably, when combining KIVI-INT4 with CAKE, it demonstrates significant advantages in maintaining performance at high compression ratios. For instance, KIVI-INT4 with CAKE achieves 32.51 at 12.5\% compression ratio, outperforming KIVI-INT2 (32.17) while using the same storage overhead. Even at a very aggressive 6.25\% compression ratio, the combined approach still maintains strong performance (32.48), showing the effectiveness of integrating eviction and quantization strategies.

\section{Extended Experimental Results on LongBench}
\label{addlongbench}
In this section, we provide comprehensive experimental results on LongBench~\citep{bai2023longbench}, a benchmark focused on long-context understanding with input lengths ranging from 1,235 to 18,409 tokens. We present detailed performance evaluations across cache budgets from 64L to 2048L for three base models: Llama2-7B-Chat~\citep{touvron2023llama}, Llama3.1-8B-Instruct~\citep{dubey2024llama}, and Mistral-7B-Instruct-v0.3~\citep{jiang2023mistral} (Appendix~\ref{addlongbench1}).
To demonstrate CAKE's generalizability, we conduct additional experiments on two more LLM architectures: Qwen2.5-7B-Instruct~\citep{qwen2.5} and Gemma-7B-Instruct~\citep{team2024gemma} (Appendix~\ref{addlongbench2}). Furthermore, we evaluate CAKE's scalability on larger models ranging from 13B to 70B parameters, including Llama2-13B-Chat, Qwen2.5-32B-Instruct, and Llama3-70B-Instruct (Appendix~\ref{addlongbench3}).

 LongBench is a long context dataset comprised of six task categories covering key long-text application scenarios: single-document QA, multi-document QA, summarization, few-shot learning, synthetic tasks and code completion. Our experiments are conducted on all of its 14 English subtasks and 2 code subtasks, spanning throughout all six categories (Table\,\ref{longbench_data_desc}), with input lengths ranging from 1,235 to 18,409 tokens. 
 
\begin{table}[!t]
  \centering
  \caption{LongBench Task Type.}
    \begin{tabular}{lcc}
    \toprule
    Task Type & \#English Task & \#Code Task \\
    \midrule
    Multi-document QA & 3     & - \\
    Single-document QA & 3     & - \\
    Summarization & 3     & - \\
    Few-shot learning & 3     & - \\
    Synthetic Tasks & 2     & - \\
    Code Completion & -     & 2 \\
    \bottomrule
    \end{tabular}%
  \label{longbench_data_desc}%
\end{table}%

\subsection{Detailed Performance Analysis Across Cache Budgets} \label{addlongbench1}

Tables\,\ref{longbench_llama2}, \ref{longbench_llama3}
 and \ref{longbench_mistral} present the detailed LongBench results of CAKE and comparative methods applied to Llama2-7B-Chat, Llama3.1-8B-Instruct, and Mistral-7B-Instruct-v0.3, respectively. 

 \textbf{Results on Llama2-7B-Chat}. As shown in Table\,\ref{longbench_llama2}, CAKE consistently outperforms other KV eviction methods in terms of average score. Additionally, CAKE achieves top or second across each individual subtask and at all cache sizes. Notice that when using with $512L$ or more total budget size, CAKE achieves evaluation scores that are comparable with full cache across all subtasks.

\textbf{Results on Llama3.1-8B-Instruct}. The evaluation results are detailed in Table\,\ref{longbench_llama3}. The Llama3.1 series models have all incorporated the Grouped Query Attention (GQA) mechanism, and are able to support longer context length. The results show that CAKE continues to perform outstandingly across all subtasks in models based on the GQA structure.

\textbf{Results on Mistral-7B-Instruct-v0.3}. Table\,\ref{longbench_mistral} contains the results of Mistral-7B-Instruct-v0.3. Around all 16 subtasks, CAKE shows consistent advantage throughout all the cache size budgets we have tested. Especially when the budget is limited, say $64L$, CAKE, equipped with the robust indicator and strategy, expresses relatively strong resistance to score decline.
 
The results as a whole demonstrate that CAKE, compared with other methods, consistently outperforms across all task types in LongBench when applied to the tested models and under various cache size budgets ranging from $64L$ to $2048L$. This demonstrates CAKE's effectiveness and broad applicability in KV cache-efficient long-context processing across various domains and across widely used open-source LLMs.

\begin{table*}[!t]
\centering
\caption{Performance comparison over 16 datasets of LongBench on Llama2-7B-Chat for cache budgets from $64L$ to $2048L$. The best result is highlighted in \textbf{bold}, the second best in \underline{underline}.}
\label{longbench_llama2}
\resizebox{\textwidth}{!}{%
\setlength{\tabcolsep}{1pt}
\renewcommand\arraystretch{1.5}
\begin{tabular}{@{}lllllllllllllllllllllllll@{}}
\toprule

\multirow{2}{*}[-20pt]{\makecell[l]{\hspace{10pt}\raisebox{0pt}{Method}}} & \multicolumn{3}{c}{Single-Document QA} & \multicolumn{3}{c}{Multi-Document QA} & \multicolumn{3}{c}{Summarization} & \multicolumn{3}{c}{Few-shot Learning} & \multicolumn{2}{c}{Synthetic} & \multicolumn{2}{c}{Code} & \multirow{2}{*}[-20pt]{Avg.} \\ \cmidrule(lr){2-4} \cmidrule(lr){5-7} \cmidrule(lr){8-10} \cmidrule(lr){11-13} \cmidrule(lr){14-15} \cmidrule(lr){16-17} 

&\rotatebox{45}{NrtvQA} & \rotatebox{45}{Qasper} & \rotatebox{45}{MF-en} & \rotatebox{45}{HotpotQA}& \rotatebox{45}{2WikiMQA} & \rotatebox{45}{Musique} & \rotatebox{45}{GovReport} & \rotatebox{45}{QMSum} & \rotatebox{45}{MultiNews} & \rotatebox{45}{TREC} & \rotatebox{45}{TriviaQA} & \rotatebox{45}{SAMSum} & \rotatebox{45}{PCount} & \rotatebox{45}{PR-en} & \rotatebox{45}{Lcc} & \rotatebox{45}{RB-P} & \\

\midrule
\multicolumn{18}{c}{Llama2-7B-Chat, $B_{\text{total}}=Full$}\\
\midrule

Full & 20.68 & 20.19 & 36.69 & 27.83 & 31.45 & 8.21 & 27.09 & 20.71 & 26.24 & 64.0 & 83.09 & 41.41 & 2.94 & 7.75 & 58.51 & 52.29 & 33.07 \\

\midrule
\multicolumn{18}{c}{Llama2-7B-Chat, $B_{\text{total}}=64L$}\\
\midrule

    StreamingLLM & 7.35  & \underline{16.62 } & 6.84  & 10.73  & 14.68  & 2.84  & 13.34  & 18.42  & 12.76  & 17.50  & 19.38  & 11.44  & 0.25  & \textbf{4.50 } & 20.87  & 14.91  & 12.03  \\
    H2O   & \textbf{12.77 } & 13.76  & 21.00  & 19.16  & 26.53  & 4.47  & 14.09  & \underline{20.01 } & \textbf{18.49 } & 32.00  & 70.92  & 34.79  & 2.75  & 3.00  & 42.09  & \underline{41.36 } & 23.57  \\
    TOVA  & 8.64  & 14.06  & 12.61  & 13.71  & 13.56  & 3.26  & 11.92  & 16.59  & 13.73  & \textbf{36.00 } & \underline{74.88 } & 31.88  & 2.15  & 4.00  & 36.72  & 31.94  & 20.35  \\
    SnapKV & \underline{12.63 } & 14.95  & 19.93  & 15.48  & 22.94  & 4.78  & \underline{15.54 } & 19.50  & 15.10  & 34.50  & 72.02  & 34.16  & 2.00  & 4.12  & 40.03  & 36.29  & 22.75  \\
    PyramidKV & 10.94  & 15.53  & \textbf{28.43 } & \textbf{23.53 } & \textbf{28.82 } & \underline{5.10 } & 15.33  & 19.79  & 15.29  & 34.50  & \textbf{76.16 } & \underline{34.84 } & \underline{2.88 } & 2.56  & \underline{44.82 } & 38.16  & \underline{24.79 } \\
    CAKE  & 12.62  & \textbf{16.64 } & \underline{28.04 } & \underline{21.46 } & \underline{28.00 } & \textbf{5.19 } & \textbf{17.20 } & \textbf{20.16 } & \underline{17.23 } & \underline{35.00 } & 74.30  & \textbf{35.02 } & \textbf{3.33 } & \textbf{4.50 } & \textbf{45.86 } & \textbf{42.33 } & \textbf{25.43 } \\

\midrule
\multicolumn{18}{c}{Llama2-7B-Chat, $B_{\text{total}}=128L$}\\
\midrule
    StreamingLLM & 13.57  & 13.33  & 22.39  & 20.16  & 17.12  & 6.38  & 16.72  & 19.58  & 15.37  & 31.50  & 74.84  & 33.00  & 1.67  & 4.00  & 43.21  & 40.45  & 23.33  \\
    H2O   & \underline{14.56 } & 14.59  & 26.52  & 20.05  & \underline{28.73 } & 5.00  & 16.20  & 20.30  & \underline{20.34 } & 38.00  & 72.75  & 37.10  & 2.33  & 3.50  & 49.68  & 47.37  & 26.06  \\
    TOVA  & 13.56  & 15.46  & 20.28  & 22.22  & 25.01  & 4.79  & 15.72  & 19.11  & 15.58  & 42.50  & 79.58  & 37.06  & \underline{3.45 } & 4.61  & 46.34  & 39.70  & 25.31  \\
    SnapKV & 13.70  & 16.27  & \textbf{32.52 } & 22.76  & 28.59  & \textbf{7.56 } & 18.87  & 19.96  & 20.05  & 43.50  & 79.90  & 36.74  & 2.79  & 7.00  & \textbf{51.86 } & 47.41  & 28.09  \\
    PyramidKV & 13.45  & \textbf{16.61 } & 31.21  & \underline{23.74 } & 28.43  & 6.89  & \underline{18.88 } & \underline{20.56 } & 19.81  & \underline{44.50 } & \underline{80.32 } & \textbf{37.82 } & 2.29  & \underline{8.00 } & 51.19  & \underline{47.56 } & \underline{28.20 } \\
    CAKE  & \textbf{15.15 } & \underline{16.38 } & \underline{32.17 } & \textbf{25.25 } & \textbf{31.09 } & \underline{7.00 } & \textbf{19.47 } & \textbf{21.07 } & \textbf{20.36 } & \textbf{48.50 } & \textbf{80.66 } & \underline{37.77 } & \textbf{4.42 } & \textbf{9.00 } & \underline{51.48 } & \textbf{48.90 } & \textbf{29.29 } \\
\midrule
\multicolumn{18}{c}{Llama2-7B-Chat, $B_{\text{total}}=256L$}\\
\midrule
    StreamingLLM & 13.20  & 13.45  & 24.36  & 21.50  & 24.08  & 5.74  & 17.28  & 18.97  & 18.12  & 44.00  & 78.43  & 37.55  & 1.58  & 4.50  & 53.85  & 47.03  & 26.48  \\
    H2O   & \underline{15.82 } & 14.72  & 27.26  & 21.07  & 27.00  & 6.32  & 19.75  & 20.29  & \underline{22.23 } & 45.50  & 79.64  & 37.93  & \underline{2.93 } & 4.00  & 53.91  & \textbf{50.33 } & 28.04  \\
    TOVA  & 13.78  & 14.69  & 23.81  & \underline{25.92 } & 28.93  & 7.24  & 18.00  & 19.48  & 19.04  & \underline{58.00 } & \underline{82.33 } & \textbf{39.33 } & 2.69  & 7.00  & 51.42  & 47.16  & 28.68  \\
    SnapKV & 15.47  & \underline{15.94 } & \underline{34.51 } & 25.17  & \underline{30.10 } & \textbf{9.30 } & \underline{20.44 } & 20.89  & 21.78  & 57.50  & 81.71  & \underline{39.22 } & 2.80  & \underline{7.50 } & \underline{55.93 } & \underline{50.10 } & \underline{30.52 } \\
    PyramidKV & 15.13  & 15.86  & 33.93  & 25.58  & 29.51  & \underline{9.23 } & 20.35  & \textbf{21.22 } & 22.01  & \underline{58.00 } & \textbf{82.39 } & 38.47  & 2.15  & \underline{7.50 } & 55.81  & 49.53  & 30.42  \\
    CAKE  & \textbf{15.93 } & \textbf{16.80 } & \textbf{36.25 } & \textbf{26.26 } & \textbf{30.54 } & 9.01  & \textbf{20.79 } & \underline{20.98 } & \textbf{22.56 } & \textbf{61.00 } & 82.00  & 39.01  & \textbf{3.24 } & \textbf{8.00 } & \textbf{56.97 } & 50.04  & \textbf{31.21 } \\
\midrule
\multicolumn{18}{c}{Llama2-7B-Chat, $B_{\text{total}}=512L$}\\
\midrule

    StreamingLLM & 12.61  & 14.49  & 25.87  & 24.21  & 23.86  & 5.40  & 19.51  & 19.19  & 21.43  & 56.00  & 80.56  & 38.92  & 1.70  & 2.50  & 55.57  & 48.56  & 28.15  \\
    H2O   & 15.86  & 15.89  & 30.14  & 22.14  & 28.89  & 7.01  & 21.40  & 20.89  & 23.54  & 52.50  & 81.81  & \underline{40.55 } & 3.19  & 5.00  & 55.68  & 51.07  & 29.72  \\
    TOVA  & 13.63  & 15.47  & 26.26  & \underline{26.55 } & \underline{31.25 } & 7.63  & 19.81  & 19.88  & 22.19  & 62.50  & \textbf{83.10 } & \textbf{40.63 } & 2.72  & \underline{8.00 } & 55.83  & 50.28  & 30.36  \\
    SnapKV & \underline{16.42 } & 17.19  & \underline{36.52 } & 26.32  & 31.15  & \underline{9.09 } & 21.33  & 21.11  & 23.58  & \textbf{64.00 } & 81.82  & 39.29  & \textbf{3.33 } & \textbf{8.50 } & \underline{58.00 } & \textbf{52.46 } & \underline{31.88 } \\
    PyramidKV & \textbf{16.91 } & \underline{18.30 } & 36.02  & \textbf{26.74 } & 29.53  & \textbf{9.13 } & \underline{21.53 } & \underline{21.22 } & \underline{23.72 } & 63.50  & 81.88  & 39.76  & 3.00  & \underline{8.00 } & \textbf{58.02 } & 51.14  & 31.77  \\
    CAKE  & 16.36  & \textbf{19.31 } & \textbf{37.82 } & 26.51  & \textbf{31.34 } & 8.10  & \textbf{21.77 } & \textbf{21.42 } & \textbf{24.36 } & \textbf{64.00 } & \underline{82.63 } & 39.62  & \textbf{3.33 } & \underline{8.00 } & 57.73  & \underline{52.33 } & \textbf{32.16 } \\
\midrule
\multicolumn{18}{c}{Llama2-7B-Chat, $B_{\text{total}}=1024L$}\\
\midrule
    StreamingLLM & 13.73  & 16.75  & 26.28  & 25.97  & 25.67  & 5.87  & 22.21  & 19.27  & 24.07  & 61.00  & 80.88  & 40.90  & 2.28  & 1.00  & 56.71  & 48.79  & 29.46  \\
    H2O   & 16.58  & 17.67  & 32.04  & 26.01  & 28.73  & 7.81  & 22.87  & 20.99  & 25.05  & 60.00  & 83.02  & 40.06  & 2.90  & 3.50  & 57.57  & 52.02  & 31.05  \\
    TOVA  & 15.00  & 17.32  & 29.57  & 27.48  & \textbf{31.71 } & 7.04  & 21.86  & 20.54  & 24.78  & \textbf{63.50 } & \textbf{83.35 } & \textbf{41.78 } & 2.63  & \underline{8.00 } & 56.69  & 52.11  & 31.46  \\
    SnapKV & \underline{18.05 } & 19.73  & 38.07  & \underline{27.57 } & 30.86  & \underline{8.41 } & \underline{23.41 } & 20.79  & 25.22  & \textbf{63.50 } & 82.13  & 40.98  & \underline{3.33 } & 7.50  & \textbf{58.20 } & \underline{52.23 } & 32.50  \\
    PyramidKV & 17.94  & \underline{20.68 } & \textbf{38.85 } & 27.25  & 29.99  & 7.91  & 23.17  & \textbf{21.55 } & \textbf{25.45 } & \textbf{63.50 } & 82.97  & 40.79  & \textbf{3.57 } & 7.50  & 57.72  & 51.93  & \underline{32.55 } \\
    CAKE  & \textbf{18.35 } & \textbf{20.93 } & \underline{38.49 } & \textbf{27.99 } & \underline{31.15 } & \textbf{8.59 } & \textbf{24.29 } & \underline{21.06 } & \underline{25.40 } & \textbf{63.50 } & \underline{83.33 } & \underline{41.30 } & 3.11  & \textbf{9.50 } & \underline{57.82 } & \textbf{52.93 } & \textbf{32.98 } \\
\midrule
\multicolumn{18}{c}{Llama2-7B-Chat, $B_{\text{total}}=2048L$}\\
\midrule
    StreamingLLM & 16.95  & 17.98  & 32.25  & \underline{28.15 } & 30.31  & 7.66  & 23.79  & 19.97  & 25.75  & 63.50  & 83.17  & 40.36  & 2.05  & 5.17  & 57.36  & 51.30  & 31.61  \\
    H2O   & 18.35  & 20.26  & 36.71  & \textbf{28.65 } & 30.85  & 8.33  & 25.05  & \underline{21.20 } & \textbf{26.31 } & 63.00  & \textbf{83.32 } & 40.78  & 2.60  & 7.00  & 58.14  & \textbf{52.92 } & 32.72  \\
    TOVA  & 18.73  & 20.52  & 35.38  & 28.10  & 30.91  & \underline{8.87 } & 24.86  & 20.87  & \underline{26.22 } & \textbf{64.00 } & 83.13  & \textbf{41.99 } & 2.75  & 7.00  & 58.07  & \underline{52.77 } & 32.76  \\
    SnapKV & \underline{18.80 } & \underline{22.18 } & \underline{38.18 } & 27.72  & \underline{30.92 } & 8.37  & 25.87  & \textbf{21.35 } & 25.92  & \textbf{64.00 } & 83.13  & 40.95  & \underline{3.32 } & \underline{8.50 } & \underline{58.18 } & 52.23  & \underline{33.10 } \\
    PyramidKV & 18.51  & \textbf{22.68 } & 37.04  & 27.76  & 27.84  & 7.95  & \textbf{26.42 } & 20.95  & 26.08  & \textbf{64.00 } & \underline{83.26 } & 40.72  & 3.05  & 7.75  & \textbf{58.80 } & 52.55  & 32.83  \\
    CAKE  & \textbf{19.25 } & 22.16  & \textbf{38.70 } & 27.72  & \textbf{31.58 } & \textbf{8.91 } & \underline{26.19 } & 20.70  & 26.19  & \textbf{64.00 } & 83.25  & \underline{41.02 } & \textbf{3.86 } & \textbf{9.00 } & 57.92  & 52.76  & \textbf{33.33 } \\

\bottomrule
\end{tabular}%
}
\end{table*}

\begin{table*}[!t]
\centering
\caption{Performance comparison over 16 datasets of LongBench on Llama3.1-8B-Instruct for cache budgets from $64L$ to $2048L$ . The best result is highlighted in \textbf{bold}, the second best in \underline{underline}.}
\label{longbench_llama3}
\resizebox{\textwidth}{!}{%
\setlength{\tabcolsep}{1pt}
\renewcommand\arraystretch{1.5}
\begin{tabular}{@{}lllllllllllllllllllllllll@{}}
\toprule

\multirow{2}{*}[-20pt]{\makecell[l]{\hspace{10pt}\raisebox{0pt}{Method}}} & \multicolumn{3}{c}{Single-Document QA} & \multicolumn{3}{c}{Multi-Document QA} & \multicolumn{3}{c}{Summarization} & \multicolumn{3}{c}{Few-shot Learning} & \multicolumn{2}{c}{Synthetic} & \multicolumn{2}{c}{Code} & \multirow{2}{*}[-20pt]{Avg.} \\ \cmidrule(lr){2-4} \cmidrule(lr){5-7} \cmidrule(lr){8-10} \cmidrule(lr){11-13} \cmidrule(lr){14-15} \cmidrule(lr){16-17} 

&\rotatebox{45}{NrtvQA} & \rotatebox{45}{Qasper} & \rotatebox{45}{MF-en} & \rotatebox{45}{HotpotQA}& \rotatebox{45}{2WikiMQA} & \rotatebox{45}{Musique} & \rotatebox{45}{GovReport} & \rotatebox{45}{QMSum} & \rotatebox{45}{MultiNews} & \rotatebox{45}{TREC} & \rotatebox{45}{TriviaQA} & \rotatebox{45}{SAMSum} & \rotatebox{45}{PCount} & \rotatebox{45}{PR-en} & \rotatebox{45}{Lcc} & \rotatebox{45}{RB-P} & \\

\midrule
\multicolumn{18}{c}{Llama3.1-8B-Instruct, $B_{\text{total}}=Full$}\\
\midrule

Full &  31.06 & 45.43 & 53.78 & 55.04 & 47.14 & 31.29 & 34.87 & 25.33 & 27.49 & 72.5 & 91.65 & 43.81 & 6.0 & 99.5 & 63.36 & 56.65 & 49.06 \\
\midrule
\multicolumn{18}{c}{Llama3.1-8B-Instruct, $B_{\text{total}}=64L$}\\
\midrule
    StreamingLLM & 22.25  & 22.52  & 30.62  & 46.45  & 40.02  & 23.96  & 16.92  & 20.95  & 15.18  & 38.50  & 82.71  & 34.97  & \textbf{6.50 } & \textbf{99.50 } & 52.01  & 45.07  & 37.38  \\
    H2O   & \textbf{27.35 } & \underline{24.95 } & 40.70  & 51.75  & 43.57  & 26.72  & \underline{20.23 } & \textbf{22.06 } & \textbf{19.35 } & 39.00  & 88.30  & 38.69  & 6.10  & 98.00  & \textbf{54.48 } & \underline{46.41 } & \underline{40.48 } \\
    TOVA  & \underline{27.14 } & 23.83  & \underline{41.82 } & \textbf{53.54 } & 43.07  & \underline{27.42 } & 20.06  & 16.67  & 18.11  & \textbf{47.50 } & \textbf{89.82 } & \underline{39.12 } & \textbf{6.50 } & 98.00  & 47.41  & 38.46  & 39.90  \\
    SnapKV & 26.09  & 24.06  & 41.22  & 51.32  & \textbf{45.11 } & 25.71  & 17.38  & \underline{21.47 } & 15.81  & 38.50  & 84.98  & 37.36  & \textbf{6.50 } & \textbf{99.50 } & 53.16  & 46.09  & 39.64  \\
    PyramidKV & 24.71  & 24.08  & 39.66  & 50.56  & 43.50  & 25.58  & 17.14  & 20.50  & 15.96  & 39.00  & 83.54  & 37.27  & \textbf{6.50 } & 98.50  & 52.38  & 45.54  & 39.03  \\
    CAKE  & 27.03  & \textbf{26.56 } & \textbf{42.96 } & \underline{52.42 } & \underline{45.00 } & \textbf{27.73 } & \textbf{21.29 } & 18.57  & \underline{18.26 } & \underline{40.50 } & \underline{88.42 } & \textbf{39.31 } & 6.07  & 99.00  & \underline{54.38 } & \textbf{47.35 } & \textbf{40.93 } \\
\midrule
\multicolumn{18}{c}{Llama3.1-8B-Instruct,$B_{\text{total}}=128L$}\\
\midrule
    StreamingLLM & 23.01  & 22.50  & 31.00  & 46.79  & 40.52  & 24.76  & 18.38  & 20.89  & 16.96  & 40.50  & 86.00  & 38.55  & \textbf{7.00 } & \textbf{99.50 } & 57.06  & 47.16  & 38.79  \\
    H2O   & 26.18  & 24.83  & 44.08  & 51.33  & 43.29  & 27.78  & 20.79  & 22.36  & \underline{20.69 } & 41.50  & \underline{89.88 } & \underline{40.94 } & 6.20  & 99.00  & \textbf{58.12 } & 49.82  & 41.67  \\
    TOVA  & \textbf{29.44 } & 27.03  & 45.20  & \textbf{54.15 } & \underline{44.87 } & \textbf{28.87 } & \textbf{22.21 } & 20.54  & 20.52  & \textbf{53.50 } & \textbf{92.27 } & \textbf{40.99 } & 6.17  & \textbf{99.50 } & 51.14  & 43.15  & 42.47  \\
    SnapKV & 26.40  & \underline{27.67 } & \underline{47.83 } & \underline{53.40 } & 44.20  & \underline{28.68 } & 20.56  & \underline{23.11 } & 20.13  & 45.50  & 89.36  & 39.98  & 6.33  & 99.00  & 57.69  & \textbf{50.69 } & \underline{42.53 } \\
    PyramidKV & 25.57  & 27.40  & 46.36  & 53.14  & 44.51  & 27.16  & 20.74  & 22.26  & 19.90  & 45.50  & 87.06  & 40.13  & \underline{6.50 } & \textbf{99.50 } & 56.98  & 48.26  & 41.94  \\
    CAKE  & \underline{27.41 } & \textbf{32.01 } & \textbf{49.58 } & 52.99  & \textbf{45.76 } & 28.32  & \underline{21.85 } & \textbf{23.15 } & \textbf{21.56 } & \underline{47.50 } & 89.61  & 40.90  & 6.33  & \textbf{99.50 } & \underline{57.92 } & \underline{49.83 } & \textbf{43.39 } \\
\midrule
\multicolumn{18}{c}{Llama3.1-8B-Instruct, $B_{\text{total}}=256L$}\\
\midrule
    StreamingLLM & 23.57  & 24.73  & 30.31  & 46.42  & 39.19  & 24.23  & 20.61  & 21.10  & 19.46  & 45.50  & 87.50  & 41.01  & \textbf{6.50 } & \textbf{99.50 } & 59.55  & 48.70  & 39.87  \\
    H2O   & 25.93  & 27.16  & 43.86  & 52.23  & 43.20  & 28.28  & 22.10  & 22.37  & 22.75  & 47.50  & 91.23  & \underline{41.99 } & \underline{6.28 } & 99.00  & \textbf{61.02 } & \underline{52.02 } & 42.93  \\
    TOVA  & \underline{29.70 } & 30.85  & 48.80  & 54.36  & \textbf{46.71 } & \textbf{30.00 } & \underline{23.21 } & 22.18  & \underline{22.88 } & \textbf{57.50 } & \textbf{91.83 } & \textbf{42.77 } & 6.05  & \textbf{99.50 } & 54.83  & 46.15  & 44.21  \\
    SnapKV & 27.22  & 35.25  & 50.96  & 54.50  & \underline{45.73 } & 28.56  & 22.75  & \underline{23.80 } & 22.83  & 55.50  & \underline{91.28 } & 41.42  & 6.03  & \textbf{99.50 } & 60.64  & \textbf{52.80 } & \underline{44.92 } \\
    PyramidKV & 28.73  & \underline{36.14 } & \underline{51.01 } & \textbf{54.57 } & 44.55  & \underline{29.58 } & 22.68  & 23.33  & 22.34  & 56.00  & 90.37  & 41.09  & 6.25  & \textbf{99.50 } & 60.03  & 51.45  & 44.85  \\
    CAKE  & \textbf{29.75 } & \textbf{38.74 } & \textbf{52.27 } & \underline{54.52 } & 45.39  & 29.57  & \textbf{24.15 } & \textbf{24.24 } & \textbf{23.77 } & \underline{57.00 } & 90.97  & 41.82  & 6.10  & \textbf{99.50 } & \underline{61.01 } & 51.90  & \textbf{45.67 } \\
\midrule
\multicolumn{18}{c}{Llama3.1-8B-Instruct, $B_{\text{total}}=512L$}\\
\midrule

    StreamingLLM & 25.64  & 27.48  & 33.30  & 47.36  & 40.06  & 24.80  & 23.16  & 20.80  & 22.85  & 57.50  & 87.69  & 42.08  & \textbf{6.50 } & 97.00  & 60.51  & 51.28  & 41.75  \\
    H2O   & 27.76  & 29.01  & 44.75  & 52.78  & 44.31  & 29.22  & 24.71  & 23.11  & 24.56  & 54.50  & 91.38  & 42.10  & \underline{6.36 } & 99.00  & 62.30  & \underline{54.33 } & 44.39  \\
    TOVA  & 30.11  & 35.77  & 49.88  & \underline{54.58 } & 45.86  & 29.73  & \underline{25.45 } & 22.73  & \underline{25.00 } & \underline{63.50 } & \underline{91.90 } & \textbf{43.33 } & 6.00  & \textbf{99.50 } & 58.38  & 48.58  & 45.64  \\
    SnapKV & \underline{30.76 } & 42.03  & \underline{52.13 } & 54.15  & \underline{46.14 } & 30.51  & 24.98  & 24.24  & 24.65  & \textbf{64.00 } & \textbf{92.05 } & 42.04  & 6.08  & \textbf{99.50 } & \textbf{62.62 } & \textbf{54.90 } & \underline{46.92 } \\
    PyramidKV & 30.47  & \underline{42.15 } & \textbf{52.17 } & \textbf{54.67 } & 45.25  & \underline{30.60 } & 25.00  & \underline{24.33 } & 24.51  & 62.50  & 91.24  & 41.67  & 5.95  & \textbf{99.50 } & 61.58  & 53.89  & 46.59  \\
    CAKE  & \textbf{31.82 } & \textbf{42.99 } & 51.65  & 54.37  & \textbf{46.89 } & \textbf{30.73 } & \textbf{26.36 } & \textbf{24.94 } & \textbf{25.27 } & \underline{63.50 } & 91.54  & \underline{42.52 } & 6.33  & \textbf{99.50 } & \underline{62.31 } & 54.30  & \textbf{47.19 } \\

\midrule
\multicolumn{18}{c}{Llama3.1-8B-Instruct, $B_{\text{total}}=1024L$}\\
\midrule
    StreamingLLM & 26.64  & 30.77  & 35.59  & 47.31  & 42.03  & 24.17  & 25.81  & 21.31  & 25.66  & 63.50  & 88.84  & 42.76  & \textbf{6.50 } & 88.00  & 61.36  & 53.47  & 42.73  \\
    H2O   & 29.57  & 36.15  & 45.94  & 54.43  & 44.81  & 29.04  & 27.64  & 23.31  & \textbf{26.47 } & 62.00  & 91.83  & \underline{43.14 } & 6.36  & 99.00  & \underline{62.74 } & 55.39  & 46.11  \\
    TOVA  & 30.66  & 40.95  & 51.09  & 54.58  & 46.51  & 30.62  & \underline{28.12 } & 23.61  & 26.24  & \underline{68.00 } & 91.49  & \textbf{43.80 } & 5.92  & \textbf{99.50 } & 60.73  & 52.64  & 47.15  \\
    SnapKV & \textbf{30.95 } & \underline{44.74 } & \underline{52.58 } & 55.09  & \underline{46.83 } & 30.37  & 27.87  & \underline{24.57 } & 25.99  & \underline{68.00 } & \underline{92.03 } & 42.60  & \textbf{6.50 } & \textbf{99.50 } & \textbf{63.00 } & \underline{56.50 } & \underline{47.95 } \\
    PyramidKV & 30.54  & 43.64  & \textbf{52.73 } & \underline{55.29 } & 46.29  & \textbf{31.28 } & 27.53  & 24.50  & 26.00  & \underline{68.00 } & \textbf{92.09 } & 41.75  & 6.05  & \textbf{99.50 } & 62.35  & 55.44  & 47.69  \\
    CAKE  & \underline{30.88 } & \textbf{44.95 } & 52.38  & \textbf{55.49 } & \textbf{46.99 } & \underline{30.82 } & \textbf{28.68 } & \textbf{24.91 } & \underline{26.39 } & \textbf{69.00 } & 91.94  & 42.60  & 6.00  & \textbf{99.50 } & 62.65  & \textbf{56.89 } & \textbf{48.13 } \\
\midrule
\multicolumn{18}{c}{Llama3.1-8B-Instruct, $B_{\text{total}}=2048L$}\\
\midrule
    StreamingLLM & 27.40  & 36.91  & 37.85  & 49.23  & 44.66  & 24.31  & 28.57  & 21.67  & 27.12  & 67.50  & 90.98  & 42.49  & \textbf{6.12 } & 87.00  & 63.06  & 55.39  & 44.39  \\
    H2O   & 29.65  & 39.53  & 48.64  & 54.23  & 46.50  & 29.28  & 29.97  & 23.68  & 27.21  & 68.00  & \underline{91.48 } & 43.06  & \underline{6.11 } & \textbf{99.50 } & 63.06  & \textbf{56.91 } & 47.30  \\
    TOVA  & 30.30  & 43.40  & 52.78  & 55.12  & 46.38  & 30.62  & \underline{30.80 } & 24.10  & 27.09  & \underline{70.50 } & \underline{91.48 } & \textbf{43.63 } & 6.00  & \textbf{99.50 } & 62.76  & 55.32  & 48.11  \\
    SnapKV & \underline{30.99 } & \underline{45.06 } & 53.15  & 55.25  & 46.56  & \underline{30.78 } & 30.24  & 24.63  & \underline{27.32 } & \underline{70.50 } & \underline{91.48 } & 42.37  & 6.00  & \textbf{99.50 } & 63.23  & 56.66  & 48.36  \\
    PyramidKV & \textbf{31.13 } & \underline{45.06 } & \textbf{53.80 } & \textbf{55.78 } & \underline{46.59 } & \textbf{30.89 } & 30.25  & \textbf{24.82 } & \textbf{27.35 } & \textbf{71.00 } & \textbf{91.65 } & 42.62  & 6.00  & \textbf{99.50 } & \underline{63.27 } & 56.44  & \underline{48.51 } \\
    CAKE  & 30.79  & \textbf{45.83 } & \underline{53.57 } & \underline{55.56 } & \textbf{46.60 } & 30.47  & \textbf{31.12 } & \underline{24.67 } & 27.16  & \underline{70.50 } & \underline{91.48 } & \underline{43.48 } & 6.00  & \textbf{99.50 } & \textbf{63.28 } & \underline{56.84 } & \textbf{48.55 } \\
\bottomrule
\end{tabular}%
}
\vspace{1.0cm}
\end{table*}

\begin{table*}[!t]
\centering
\caption{Performance comparison over 16 datasets of LongBench on Mistral-7B-Instruct for cache budgets from $64L$ to $2048L$. The best result is highlighted in \textbf{bold}, the second best in \underline{underline}.}
\label{longbench_mistral}
\resizebox{\textwidth}{!}{%
\setlength{\tabcolsep}{1pt}
\renewcommand\arraystretch{1.5}
\begin{tabular}{@{}lllllllllllllllllllllllll@{}}
\toprule

\multirow{2}{*}[-20pt]{\makecell[l]{\hspace{10pt}\raisebox{0pt}{Method}}} & \multicolumn{3}{c}{Single-Document QA} & \multicolumn{3}{c}{Multi-Document QA} & \multicolumn{3}{c}{Summarization} & \multicolumn{3}{c}{Few-shot Learning} & \multicolumn{2}{c}{Synthetic} & \multicolumn{2}{c}{Code} & \multirow{2}{*}[-20pt]{Avg.} \\ \cmidrule(lr){2-4} \cmidrule(lr){5-7} \cmidrule(lr){8-10} \cmidrule(lr){11-13} \cmidrule(lr){14-15} \cmidrule(lr){16-17} 

&\rotatebox{45}{NrtvQA} & \rotatebox{45}{Qasper} & \rotatebox{45}{MF-en} & \rotatebox{45}{HotpotQA}& \rotatebox{45}{2WikiMQA} & \rotatebox{45}{Musique} & \rotatebox{45}{GovReport} & \rotatebox{45}{QMSum} & \rotatebox{45}{MultiNews} & \rotatebox{45}{TREC} & \rotatebox{45}{TriviaQA} & \rotatebox{45}{SAMSum} & \rotatebox{45}{PCount} & \rotatebox{45}{PR-en} & \rotatebox{45}{Lcc} & \rotatebox{45}{RB-P} & \\

\midrule
\multicolumn{18}{c}{Mistral-7B-Instruct-v0.3, $B_{\text{total}}=Full$}\\
\midrule

Full & 28.26 & 39.33 & 49.47 & 45.29 & 33.21 & 24.57 & 32.25 & 22.16 & 24.58 & 75.5 & 88.36 & 43.78 & 0.5 & 83.0 & 49.79 & 51.3 & 43.21 \\

\midrule
\multicolumn{18}{c}{Mistral-7B-Instruct-v0.3, $B_{\text{total}}=64L$}\\
\midrule
    StreamingLLM & 17.59  & 21.81  & 23.39  & 34.93  & 26.73  & 15.14  & 13.11  & 18.29  & 12.59  & 37.50  & 82.27  & 33.97  & 0.00  & 28.00  & 36.66  & 36.79  & 27.42  \\
    H2O   & 21.04  & \underline{26.57 } & 34.31  & 39.34  & 28.84  & 17.95  & \underline{20.11 } & \textbf{19.25 } & \underline{16.64 } & 36.50  & 85.91  & 36.79  & \textbf{2.00 } & \underline{70.00 } & \textbf{41.53 } & \underline{41.28 } & \underline{33.63 } \\
    TOVA  & \textbf{21.87 } & 23.48  & 34.36  & \textbf{41.60 } & 28.35  & \underline{19.46 } & 17.79  & 18.41  & 15.24  & \textbf{50.50 } & \textbf{89.16 } & \textbf{37.18 } & 0.00  & 48.50  & 32.78  & 33.28  & 32.00  \\
    SnapKV & 20.63  & 24.03  & \underline{34.69 } & \underline{41.22 } & \textbf{29.94 } & 18.04  & 15.42  & 18.82  & 12.62  & 36.50  & 86.44  & 36.70  & \underline{1.00 } & 46.50  & 37.77  & 40.65  & 31.31  \\
    PyramidKV & 20.33  & 21.96  & 29.01  & 40.13  & 28.59  & 18.17  & 15.34  & 18.46  & 12.55  & 36.50  & 86.39  & 36.65  & 0.00  & 46.00  & 38.64  & 39.30  & 30.50  \\
    CAKE  & \underline{21.83 } & \textbf{26.85 } & \textbf{35.19 } & 39.70  & \underline{28.89 } & \textbf{19.70 } & \textbf{20.12 } & \underline{19.19 } & \textbf{17.21 } & \underline{38.00 } & \underline{86.49 } & \underline{37.02 } & 0.50  & \textbf{75.00 } & \underline{41.17 } & \textbf{42.18 } & \textbf{34.31 } \\
\midrule
\multicolumn{18}{c}{Mistral-7B-Instruct-v0.3, $B_{\text{total}}=128L$}\\
\midrule
    StreamingLLM & 16.91  & 21.51  & 24.85  & 34.14  & 26.99  & 16.64  & 15.67  & 18.61  & 14.40  & 43.50  & 83.26  & 37.00  & 0.00  & 23.00  & 41.63  & 42.12  & 28.76  \\
    H2O   & 21.25  & 26.66  & 35.13  & 38.82  & \underline{29.80 } & 18.88  & \underline{21.00 } & 19.50  & \underline{18.63 } & 41.00  & 87.64  & \underline{38.25 } & \textbf{1.50 } & 67.00  & 44.35  & \textbf{47.29 } & 34.79  \\
    TOVA  & \textbf{22.47 } & 24.26  & 37.22  & 42.26  & 28.85  & 19.97  & 19.40  & 18.70  & 17.86  & \textbf{63.00 } & \textbf{88.98 } & 37.71  & 0.50  & 56.50  & 37.42  & 37.22  & 34.52  \\
    SnapKV & 21.02  & \underline{27.26 } & 41.25  & \textbf{45.15 } & 29.23  & \underline{22.75 } & 20.47  & \underline{20.17 } & 17.75  & 42.50  & 87.28  & 38.01  & 0.50  & \underline{69.50 } & \underline{44.48 } & 45.69  & \underline{35.81 } \\
    PyramidKV & 21.73  & 26.60  & \underline{41.46 } & 43.20  & 29.32  & 21.47  & 20.23  & 19.82  & 17.46  & 40.00  & 87.64  & 37.11  & \underline{1.05 } & 69.00  & 42.55  & 42.98  & 35.14  \\
    CAKE  & \underline{22.31 } & \textbf{29.15 } & \textbf{43.51 } & \underline{44.51 } & \textbf{30.36 } & \textbf{22.85 } & \textbf{21.56 } & \textbf{20.47 } & \textbf{18.96 } & \underline{47.00 } & \underline{88.60 } & \textbf{39.36 } & 1.00  & \textbf{76.50 } & \textbf{44.96 } & \underline{46.19 } & \textbf{37.33 } \\
\midrule
\multicolumn{18}{c}{Mistral-7B-Instruct-v0.3, $B_{\text{total}}=256L$}\\
\midrule
    StreamingLLM & 18.10  & 23.41  & 25.99  & 34.55  & 26.78  & 16.25  & 17.97  & 18.82  & 16.56  & 52.50  & 84.26  & 39.42  & 1.00  & 16.50  & 44.76  & 44.99  & 30.12  \\
    H2O   & 22.36  & 27.14  & 37.79  & 39.21  & 30.25  & 19.79  & 22.35  & 19.40  & 20.06  & 50.50  & 87.34  & 39.49  & \textbf{1.50 } & 62.00  & 47.01  & 48.11  & 35.89  \\
    TOVA  & 22.52  & 27.29  & 41.32  & 44.49  & \underline{31.08 } & 22.36  & 21.99  & 18.96  & 20.00  & \textbf{68.00 } & \textbf{88.58 } & \underline{39.76 } & 0.50  & 66.00  & 40.71  & 41.35  & 37.18  \\
    SnapKV & \underline{23.06 } & \underline{29.43 } & 44.61  & \underline{45.05 } & 30.39  & \underline{22.89 } & 22.32  & 20.77  & \underline{20.19 } & 56.00  & 88.21  & 39.34  & 0.00  & 83.00  & \textbf{48.09 } & \textbf{48.72 } & \underline{38.88 } \\
    PyramidKV & 22.66  & 28.85  & \underline{45.84 } & 44.30  & 30.43  & 21.78  & \underline{22.48 } & \textbf{20.97 } & 19.94  & 53.00  & \underline{88.41 } & 38.88  & \underline{1.05 } & \underline{85.00 } & 46.11  & 47.78  & 38.59  \\
    CAKE  & \textbf{24.01 } & \textbf{30.78 } & \textbf{45.88 } & \textbf{45.06 } & \textbf{31.45 } & \textbf{23.43 } & \textbf{22.81 } & \underline{20.87 } & \textbf{20.92 } & \underline{58.50 } & 88.29  & \textbf{40.16 } & 0.50  & \textbf{87.00 } & \underline{47.76 } & \underline{48.43 } & \textbf{39.74 } \\
\midrule
\multicolumn{18}{c}{Mistral-7B-Instruct-v0.3, $B_{\text{total}}=512L$}\\
\midrule
    StreamingLLM & 19.93  & 25.83  & 28.25  & 36.21  & 27.60  & 15.41  & 21.23  & 18.67  & 19.50  & 64.00  & 86.45  & 39.75  & 1.00  & 16.50  & 46.70  & 47.57  & 32.16  \\
    H2O   & 22.34  & 28.01  & 38.32  & 40.12  & 30.25  & 18.38  & 24.37  & 20.38  & \underline{21.80 } & 59.50  & 87.33  & \underline{41.30 } & \textbf{2.00 } & 65.50  & 48.68  & \underline{49.76 } & 37.38  \\
    TOVA  & \underline{25.06 } & 31.69  & 44.22  & 44.90  & \textbf{32.48 } & \textbf{23.44 } & 24.33  & 19.81  & 21.77  & \textbf{71.50 } & 88.46  & \textbf{42.39 } & 0.00  & 77.50  & 44.54  & 45.66  & 39.86  \\
    SnapKV & 24.88  & 32.92  & \textbf{47.31 } & \underline{45.23 } & 31.78  & 23.08  & \underline{24.60 } & \textbf{21.80 } & 21.52  & 65.50  & \textbf{88.61 } & 40.43  & 0.50  & \underline{88.50 } & \textbf{50.59 } & \textbf{50.22 } & \underline{41.09 } \\
    PyramidKV & 24.15  & \underline{33.10 } & 46.62  & 44.16  & \underline{32.14 } & 23.37  & 24.22  & 21.15  & 21.29  & \underline{68.00 } & \underline{88.49 } & 40.27  & \underline{1.55 } & 87.50  & 48.74  & 49.17  & 40.87  \\
    CAKE  & \textbf{25.82 } & \textbf{34.24 } & \underline{46.93 } & \textbf{45.34 } & 31.71  & \underline{23.38 } & \textbf{25.81 } & \underline{21.47 } & \textbf{22.41 } & \underline{68.00 } & 88.46  & 41.27  & 0.50  & \textbf{90.00 } & \underline{49.04 } & 49.67  & \textbf{41.50 } \\
\midrule
\multicolumn{18}{c}{Mistral-7B-Instruct-v0.3, $B_{\text{total}}=1024L$}\\
\midrule
    StreamingLLM & 20.96  & 28.05  & 30.03  & 37.06  & 27.56  & 16.03  & 24.03  & 19.07  & 22.79  & 67.00  & 87.61  & 40.96  & \textbf{1.50 } & 21.50  & 48.05  & 48.87  & 33.82  \\
    H2O   & 23.78  & 31.63  & 41.31  & 43.24  & 31.07  & 20.43  & 26.74  & 20.41  & \underline{23.93 } & 67.50  & \textbf{88.84 } & 42.62  & \textbf{1.50 } & 72.00  & \underline{50.60 } & 49.87  & 39.72  \\
    TOVA  & \textbf{26.97 } & 34.51  & 45.58  & 44.32  & \underline{32.58 } & 22.83  & \underline{26.91 } & 20.75  & 23.49  & \textbf{75.00 } & \underline{88.66 } & \textbf{43.17 } & 1.00  & \underline{91.00 } & 47.51  & 47.06  & 41.96  \\
    SnapKV & \underline{26.63 } & 35.78  & \textbf{48.11 } & \underline{45.75 } & 32.20  & 23.37  & 26.71  & \textbf{21.84 } & 23.18  & 70.50  & 88.61  & 41.37  & 0.50  & 88.00  & \underline{50.60 } & \textbf{51.79 } & \underline{42.18 } \\
    PyramidKV & 25.51  & \underline{36.02 } & 47.72  & 44.74  & \textbf{33.16 } & \textbf{23.91 } & 26.55  & \underline{21.83 } & 23.27  & 70.50  & 88.41  & 40.94  & 1.00  & 87.00  & 50.17  & 50.93  & 41.98  \\
    CAKE  & 26.09  & \textbf{36.34 } & \textbf{48.11 } & \textbf{45.97 } & 32.39  & \underline{23.49 } & \textbf{27.56 } & 21.45  & \textbf{24.03 } & \underline{72.50 } & 88.61  & \underline{42.71 } & 0.00  & \textbf{91.50 } & \textbf{51.06 } & \underline{51.25 } & \textbf{42.69 } \\
\midrule
\multicolumn{18}{c}{Mistral-7B-Instruct-v0.3, $B_{\text{total}}=2048L$}\\
\midrule

    StreamingLLM & 22.60  & 32.48  & 33.26  & 40.53  & 29.57  & 15.69  & 26.64  & 19.70  & 23.64  & 69.00  & \underline{88.41 } & 42.44  & \textbf{2.00 } & 28.50  & 48.69  & 50.85  & 35.88  \\
    H2O   & 25.76  & 34.28  & 44.42  & 44.41  & 32.17  & 20.14  & 28.79  & 21.23  & 24.38  & 71.00  & \underline{88.41 } & \textbf{43.49 } & \underline{1.50 } & 82.00  & \underline{50.51 } & 50.39  & 41.43  \\
    TOVA  & \textbf{27.48 } & 37.20  & 47.47  & 45.25  & \underline{33.14 } & 23.73  & \underline{29.37 } & 21.73  & \textbf{24.52 } & \textbf{75.00 } & \textbf{88.66 } & 43.33  & 1.00  & 82.00  & 49.08  & 50.18  & 42.45  \\
    SnapKV & \underline{26.43 } & 37.60  & \underline{48.27 } & 45.37  & 33.08  & \underline{24.02 } & 28.81  & 21.43  & \underline{24.40 } & 73.50  & 88.36  & 42.34  & 1.00  & \underline{85.50 } & 49.62  & \underline{51.28 } & 42.56  \\
    PyramidKV & 26.00  & \underline{37.63 } & 48.25  & \textbf{45.82 } & \textbf{37.13 } & 24.00  & 28.90  & \textbf{22.13 } & 24.35  & 72.50  & 88.36  & 42.28  & 1.00  & 84.50  & 50.06  & 51.07  & \underline{42.75 } \\
    CAKE  & 25.43  & \textbf{37.84 } & \textbf{48.30 } & \underline{45.60 } & 32.21  & \textbf{24.14 } & \textbf{30.00 } & \underline{21.83 } & 24.35  & \underline{74.50 } & 88.36  & \underline{43.38 } & 0.50  & \textbf{89.00 } & \textbf{50.68 } & \textbf{51.48 } & \textbf{42.98 } \\
\bottomrule
\end{tabular}%
}
\vspace{1.0cm}
\end{table*}

\begin{table*}[!t]
\centering
\caption{Performance comparison over 16 datasets of LongBench on Qwen2.5-7B-Instruct and Gemma-7B-Instruct. The best result is highlighted in \textbf{bold}, the second best in \underline{underline}.}
\label{longbench_moremodels}
\resizebox{\textwidth}{!}{%
\setlength{\tabcolsep}{1pt}
\renewcommand\arraystretch{1.5}
\begin{tabular}{@{}lllllllllllllllllllllllll@{}}
\toprule

\multirow{2}{*}[-20pt]{\makecell[l]{\hspace{10pt}\raisebox{0pt}{Method}}} & \multicolumn{3}{c}{Single-Document QA} & \multicolumn{3}{c}{Multi-Document QA} & \multicolumn{3}{c}{Summarization} & \multicolumn{3}{c}{Few-shot Learning} & \multicolumn{2}{c}{Synthetic} & \multicolumn{2}{c}{Code} & \multirow{2}{*}[-20pt]{Avg.} \\ \cmidrule(lr){2-4} \cmidrule(lr){5-7} \cmidrule(lr){8-10} \cmidrule(lr){11-13} \cmidrule(lr){14-15} \cmidrule(lr){16-17} 

&\rotatebox{45}{NrtvQA} & \rotatebox{45}{Qasper} & \rotatebox{45}{MF-en} & \rotatebox{45}{HotpotQA}& \rotatebox{45}{2WikiMQA} & \rotatebox{45}{Musique} & \rotatebox{45}{GovReport} & \rotatebox{45}{QMSum} & \rotatebox{45}{MultiNews} & \rotatebox{45}{TREC} & \rotatebox{45}{TriviaQA} & \rotatebox{45}{SAMSum} & \rotatebox{45}{PCount} & \rotatebox{45}{PR-en} & \rotatebox{45}{Lcc} & \rotatebox{45}{RB-P} & \\

\midrule
\multicolumn{18}{c}{Qwen2.5-7B-Instruct, $B_{\text{total}}=Full$}\\
\midrule

Full &  29.05 & 43.34 & 52.52 & 57.59 & 47.05 & 30.24 & 31.78 & 23.64 & 23.96 & 72.5 & 89.47 & 45.61 & 8.5 & 100.0 & 59.61 & 67.12 & 48.87 \\

\midrule
\multicolumn{18}{c}{Qwen2.5-7B-Instruct, $B_{\text{total}}=128L$}\\
\midrule

    StreamingLLM & 18.80  & 23.57  & 26.71  & 40.69  & 37.20  & 16.24  & 15.70  & 18.01  & 12.61  & 42.50  & 81.22  & 40.10  & 8.50  & 25.00  & 46.77  & 47.47  & 31.32  \\
    H2O   & 22.73  & 24.52  & 31.04  & 43.89  & 40.19  & 18.92  & 18.28  & \underline{19.82 } & \textbf{16.03 } & 42.00  & 83.11  & \underline{41.88 } & 8.50  & \underline{96.50 } & \underline{50.39 } & \underline{55.05 } & 38.30  \\
    TOVA  & 21.01  & 23.98  & 33.65  & 47.65  & 38.97  & 19.42  & \textbf{19.84 } & 19.41  & 15.54  & \textbf{51.50 } & \textbf{85.43 } & \textbf{42.49 } & 8.50  & 93.50  & 42.60  & 44.48  & 38.00  \\
    SnapKV & \underline{23.85 } & \underline{27.19 } & \underline{42.43 } & \underline{51.31 } & \underline{42.59 } & \underline{24.38 } & 18.19  & 19.76  & 14.54  & 42.50  & 84.24  & 41.10  & \textbf{9.00 } & \textbf{97.00 } & 49.45  & 53.95  & \underline{40.09 } \\
    PyramidKV & 19.56  & 23.39  & 37.72  & 48.33  & 38.33  & 19.53  & 15.10  & 19.32  & 12.05  & 42.00  & 83.29  & 38.84  & \textbf{9.00 } & 84.50  & 47.87  & 50.39  & 36.83  \\
    CAKE  & \textbf{25.99 } & \textbf{33.00 } & \textbf{46.22 } & \textbf{54.57 } & \textbf{45.55 } & \textbf{25.59 } & \underline{19.51 } & \textbf{20.95 } & \underline{15.82 } & \underline{43.00 } & \underline{84.88 } & 40.99  & 8.50  & \underline{96.50 } & \textbf{50.62 } & \textbf{55.27 } & \textbf{41.68 } \\

\midrule
\multicolumn{18}{c}{Qwen2.5-7B-Instruct, $B_{\text{total}}=1024L$}\\
\midrule




    StreamingLLM & 22.72  & 29.42  & 31.47  & 43.57  & 38.18  & 17.99  & 24.33  & 19.47  & 22.46  & 61.00  & 87.53  & 43.79  & \textbf{8.50 } & 34.00  & 55.17  & 58.43  & 37.38  \\
    H2O   & 26.45  & 34.94  & 40.49  & 48.63  & 42.02  & 22.27  & 25.67  & 20.90  & 22.41  & 59.00  & 87.83  & \underline{45.07 } & \textbf{8.50 } & 98.48  & \textbf{59.77 } & 63.88  & 44.14  \\
    TOVA  & 26.36  & 37.97  & 47.88  & 55.32  & \underline{45.72 } & 28.38  & \underline{26.15 } & 21.64  & \underline{22.70 } & \textbf{69.50 } & 88.64  & 44.23  & \textbf{8.50 } & \textbf{100.00 } & 56.66  & 60.07  & 46.23  \\
    SnapKV & 29.24  & \underline{41.61 } & \underline{50.93 } & \textbf{57.60 } & 45.50  & \textbf{29.39 } & 25.63  & \underline{23.06 } & 22.26  & 65.50  & \underline{88.92 } & 44.65  & \textbf{8.50 } & \textbf{100.00 } & 58.16  & \textbf{65.30 } & \underline{47.27 } \\
    PyramidKV & \underline{29.34 } & 38.60  & 50.17  & 55.67  & 45.12  & 27.82  & 23.26  & 22.16  & 20.55  & 62.50  & 86.85  & 43.26  & \textbf{8.50 } & \textbf{100.00 } & 57.76  & 61.99  & 45.85  \\
    CAKE  & \textbf{29.47 } & \textbf{42.71 } & \textbf{52.12 } & \underline{56.11 } & \textbf{46.41 } & \underline{29.13 } & \textbf{26.86 } & \textbf{23.12 } & \textbf{22.72 } & \underline{67.50 } & \textbf{89.23 } & \textbf{45.46 } & \textbf{8.50 } & \textbf{100.00 } & \underline{59.11 } & \underline{64.79 } & \textbf{47.70 } \\

\midrule
\multicolumn{18}{c}{Gemma-7B-Instruct, $B_{\text{total}}=Full$}\\
\midrule

Full & 15.33 & 34.27 & 47.43 & 28.2 & 22.69 & 7.48 & 26.64 & 19.67 & 23.84 & 69.0 & 79.41 & 32.59 & 1.0 & 44.5 & 47.39 & 45.96 & 34.09\\
\midrule
\multicolumn{18}{c}{Gemma-7B-Instruct, $B_{\text{total}}=128L$}\\
\midrule


    StreamingLLM & 10.65  & 20.76  & 27.41  & 24.74  & 20.06  & 5.91  & 13.53  & 16.98  & 14.30  & 47.00  & 74.51  & 29.16  & \textbf{4.00 } & 9.50  & 46.68  & 45.70  & 25.68  \\
    H2O   & 12.98  & 27.27  & 35.97  & 27.70  & 23.58  & \underline{7.21 } & \underline{17.70 } & 18.17  & \underline{18.23 } & 44.00  & 79.12  & \underline{31.84 } & 1.63  & 46.50  & 47.28  & 48.62  & 30.49  \\
    TOVA  & \underline{13.73 } & 27.38  & 42.38  & \underline{28.73 } & 23.34  & \textbf{7.80 } & 17.06  & 17.76  & 17.61  & \textbf{57.50 } & \textbf{79.65 } & 31.35  & 1.50  & \textbf{49.00 } & 41.85  & 43.22  & 31.24  \\
    SnapKV & 12.98  & 27.53  & \underline{42.90 } & \textbf{29.69 } & \textbf{25.45 } & 7.01  & 17.14  & \underline{18.29 } & 17.65  & 50.00  & 79.07  & 31.54  & 0.57  & \underline{48.50 } & \underline{47.64 } & \underline{49.20 } & \underline{31.57 } \\
    PyramidKV & 12.51  & \underline{28.02 } & 42.24  & 28.18  & 23.33  & 6.66  & 16.82  & 17.94  & 17.11  & 46.50  & 77.94  & 30.94  & \underline{2.00 } & 46.00  & 45.21  & 48.35  & 30.61  \\
    CAKE  & \textbf{14.09 } & \textbf{30.33 } & \textbf{44.33 } & 28.63  & \underline{24.47 } & 6.70  & \textbf{18.20 } & \textbf{18.83 } & \textbf{18.98 } & \underline{53.00 } & \underline{79.54 } & \textbf{32.14 } & 1.50  & 48.00  & \textbf{48.71 } & \textbf{50.59 } & \textbf{32.38 } \\

\midrule
\multicolumn{18}{c}{Gemma-7B-Instruct, $B_{\text{total}}=1024L$}\\
\midrule


    StreamingLLM & 13.23  & 27.04  & 29.08  & 25.39  & 22.15  & 5.67  & 20.94  & 17.74  & 21.97  & 67.50  & 79.28  & \underline{33.64 } & \underline{1.50 } & 11.50  & \textbf{49.80 } & \textbf{49.25 } & 29.73  \\
    H2O   & 13.39  & 30.25  & 41.23  & 28.75  & \underline{24.41 } & 7.31  & \underline{22.85 } & 18.68  & 22.73  & 66.50  & 79.78  & 33.12  & \textbf{1.57 } & \textbf{44.50 } & \underline{49.49 } & 48.55  & 33.32  \\
    TOVA  & 13.58  & 32.92  & 45.60  & \underline{28.97 } & \textbf{24.50 } & \textbf{7.96 } & 22.19  & 18.95  & \underline{22.86 } & \textbf{73.00 } & \underline{79.94 } & 32.61  & \underline{1.50 } & \textbf{44.50 } & 48.74  & 47.29  & \underline{34.07 } \\
    SnapKV & 13.72  & \underline{33.76 } & 46.00  & \textbf{29.19 } & 23.85  & \underline{7.49 } & 22.44  & 19.35  & 22.73  & \underline{71.50 } & \textbf{79.96 } & 33.35  & \underline{1.50 } & 43.50  & 48.64  & 47.86  & 34.05  \\
    PyramidKV & \underline{14.32 } & 33.24  & \underline{46.35 } & 28.75  & 23.51  & 6.91  & 21.66  & \underline{19.37 } & 22.47  & 66.00  & 78.95  & 33.52  & 1.00  & 43.50  & 47.43  & 48.07  & 33.44  \\
    CAKE  & \textbf{15.16 } & \textbf{35.12 } & \textbf{47.67 } & 28.24  & 23.12  & 6.59  & \textbf{23.28 } & \textbf{19.72 } & \textbf{23.40 } & 68.50  & 79.47  & \textbf{34.07 } & \underline{1.50 } & 43.00  & 48.84  & \underline{49.20 } & \textbf{34.18 } \\

\bottomrule
\end{tabular}%
}
\end{table*}

\newpage

\subsection{Extended Evaluation on Additional Model Architectures} \label{addlongbench2}
To demonstrate the generalizability of CAKE across different model architectures, we conduct additional experiments on Qwen2.5-7B-Instruct and Gemma-7B-Instruct. The experiments are conducted on two memory scenarios: a low setting ($B_{\text{total}} = 128L$) and a high setting ($B_{\text{total}} = 1024L$).
As shown in Table\,\ref{longbench_moremodels}, similar to results on Llama and Mistral, CAKE consistently outperforms baseline methods across both low-memory and high-memory scenarios on Qwen and Gemma architectures. CAKE demonstrates significant advantages in low-memory settings by rationally allocating cache sizes and evicting KV pairs with more robust indicators. Under the high-budget setting, CAKE achieves superior performance, even exceeding the full-cache baseline on Gemma (34.18 vs. 34.09).
These results further validate CAKE's adaptability across different model architectures, maintaining its performance advantages regardless of the underlying model design.

\newpage
\subsection{Extended Evaluation on Larger-scale Models} \label{addlongbench3}
We further extend our experiments to models ranging from 13B to 70B parameters, including Llama2-13B-Chat, Qwen2.5-32B-Instruct, and Llama3-70B-Instruct, under two settings: $B_{\text{total}} = 128L$ and $B_{\text{total}} = 1024L$. In Table\,\ref{longbench_lmodel}, CAKE outperforms baseline methods across all model sizes. Notably, under the high memory setting ($B_{\text{total}} = 1024L$), CAKE achieves even better performance than full-cache settings for both Llama2-13B (29.98 \emph{vs}. 29.95) and Llama3-70B (45.83 \emph{vs}. 45.79), showing that our method scales well to larger models while maintaining its efficiency advantages.

\begin{table*}[!t]
\centering
\caption{Performance comparison over 16 datasets of LongBench on different models with sizes from 13B to 70B. The best result is highlighted in \textbf{bold}, the second best in \underline{underline}.}
\label{longbench_lmodel}
\resizebox{\textwidth}{!}{%
\setlength{\tabcolsep}{1pt}
\renewcommand\arraystretch{1.2}
\begin{tabular}{@{}lllllllllllllllllllllllll@{}}
\toprule

\multirow{2}{*}[-20pt]{\makecell[l]{\hspace{10pt}\raisebox{0pt}{Method}}} & \multicolumn{3}{c}{Single-Document QA} & \multicolumn{3}{c}{Multi-Document QA} & \multicolumn{3}{c}{Summarization} & \multicolumn{3}{c}{Few-shot Learning} & \multicolumn{2}{c}{Synthetic} & \multicolumn{2}{c}{Code} & \multirow{2}{*}[-20pt]{Avg.} \\ \cmidrule(lr){2-4} \cmidrule(lr){5-7} \cmidrule(lr){8-10} \cmidrule(lr){11-13} \cmidrule(lr){14-15} \cmidrule(lr){16-17} 

&\rotatebox{45}{NrtvQA} & \rotatebox{45}{Qasper} & \rotatebox{45}{MF-en} & \rotatebox{45}{HotpotQA}& \rotatebox{45}{2WikiMQA} & \rotatebox{45}{Musique} & \rotatebox{45}{GovReport} & \rotatebox{45}{QMSum} & \rotatebox{45}{MultiNews} & \rotatebox{45}{TREC} & \rotatebox{45}{TriviaQA} & \rotatebox{45}{SAMSum} & \rotatebox{45}{PCount} & \rotatebox{45}{PR-en} & \rotatebox{45}{Lcc} & \rotatebox{45}{RB-P} & \\
\midrule
\multicolumn{18}{c}{Llama2-13B-Chat, $B_{\text{total}}=Full$}\\
\midrule

Full & 14.39 & 17.07 & 27.52 & 12.66 & 13.21 & 5.01 & 27.52 & 20.89 & 26.61 & 68.5 & 87.75 & 42.44 & 2.27 & 15.25 & 48.27 & 49.87 & 29.95 \\

\midrule
\multicolumn{18}{c}{Llama2-13B-Chat, $B_{\text{total}}=128L$}\\
\midrule

    StreamingLLM  & 10.48  & 13.44  & 19.16  & 12.86  & 13.81  & \textbf{4.90 } & 15.63  & 18.91  & 16.26  & 40.00  & 81.66  & 34.00  & 2.29  & 5.00  & 40.38  & 35.16  & 22.75  \\
    H2O   & 12.74  & \textbf{17.63 } & \underline{25.71 } & \underline{14.12 } & 13.29  & 3.84  & 19.35  & 19.71  & \textbf{21.51 } & 40.00  & 80.62  & \underline{39.37 } & 2.03  & 10.50  & 40.75  & 41.31  & 25.15  \\
    TOVA  & 11.18  & 13.55  & 19.65  & 13.61  & 13.20  & 3.95  & 17.46  & 18.17  & 16.51  & \textbf{58.00 } & \textbf{88.77 } & 38.91  & 2.36  & 7.50  & 40.10  & 37.21  & 25.01  \\
    SnapKV  & \textbf{13.33 } & 14.73  & 24.34  & 13.47  & \underline{15.10 } & \underline{4.68 } & \textbf{20.37 } & \underline{20.01 } & 20.44  & 42.00  & 86.71  & 38.35  & \textbf{3.21 } & 11.50  & 43.07  & \underline{42.28 } & 25.85  \\
    PyramidKV  & \underline{13.04 } & 13.91  & 25.26  & \textbf{14.37 } & \textbf{15.24 } & \underline{4.68 } & 20.01  & 19.74  & 20.52  & \underline{43.50 } & 85.37  & 38.83  & 2.80  & \textbf{13.00 } & \underline{43.57 } & 41.31  & \underline{25.95 } \\
    CAKE  & 12.19  & \underline{17.17 } & \textbf{27.28 } & 13.12  & 14.73  & 4.33  & \underline{20.04 } & \textbf{20.14 } & \underline{20.74 } & 42.00  & \underline{87.75 } & \textbf{40.60 } & \underline{2.96 } & \underline{12.50 } & \textbf{45.52 } & \textbf{43.95 } & \textbf{26.56 } \\

\midrule
\multicolumn{18}{c}{Llama2-13B-Chat, $B_{\text{total}}=1024L$}\\
\midrule


    StreamingLLM  & 12.47  & 15.03  & 19.85  & 11.39  & 13.68  & 3.90  & 23.48  & 19.33  & 24.88  & 63.00  & 86.37  & 40.12  & \textbf{4.40 } & 6.50  & 47.24  & 45.56  & 27.32  \\
    H2O   & \underline{14.76 } & \textbf{19.89 } & \textbf{29.10 } & \textbf{14.84 } & 14.05  & 4.41  & 23.38  & 20.18  & 25.03  & 62.50  & 82.14  & 41.85  & 1.13  & 12.50  & 47.34  & 47.07  & 28.76  \\
    TOVA  & 13.69  & 16.10  & 24.41  & 12.79  & 13.43  & 4.96  & 23.81  & 20.22  & 24.99  & \textbf{69.00 } & \underline{88.15 } & 42.18  & \underline{3.19 } & 13.25  & 46.71  & \underline{48.77 } & 29.10  \\
    SnapKV  & 14.39  & 17.87  & 27.99  & 12.94  & 13.51  & 4.95  & \underline{24.65 } & \underline{20.74 } & 25.08  & \underline{68.50 } & 86.47  & 42.26  & 2.62  & 14.00  & \underline{48.27 } & \textbf{49.42 } & 29.60  \\
    PyramidKV  & 14.15  & \underline{18.30 } & 28.35  & 12.98  & \textbf{14.42 } & \underline{5.10 } & \textbf{25.05 } & \textbf{20.84 } & \textbf{25.62 } & \underline{68.50 } & \textbf{88.21 } & \underline{42.52 } & 2.28  & \underline{15.00 } & 47.99  & 48.12  & \underline{29.84 } \\
    CAKE  & \textbf{15.33 } & 17.84  & \underline{29.05 } & \underline{13.99 } & \underline{14.30 } & \textbf{5.46 } & 24.43  & 20.60  & \underline{25.29 } & 68.00  & 87.42  & \textbf{42.58 } & 3.14  & \textbf{15.50 } & \textbf{48.40 } & 48.41  & \textbf{29.98 } \\

\midrule
\multicolumn{18}{c}{Qwen2.5-32B-Instruct, $B_{\text{total}}=Full$}\\
\midrule

Full & 29.25 & 45.24 & 51.9 & 63.31 & 61.5 & 38.15 & 30.33 & 23.38 & 23.01 & 73.5 & 87.86 & 45.36 & 12.0 & 100.0 & 51.3 & 38.17 & 48.39 \\

\midrule
\multicolumn{18}{c}{Qwen2.5-32B-Instruct, $B_{\text{total}}=128L$}\\
\midrule




    StreamingLLM  & 16.26  & 22.62  & 25.95  & 44.14  & 47.57  & 23.37  & 15.66  & 17.50  & 13.22  & 41.50  & 81.47  & 38.03  & 12.00  & 58.04  & 43.51  & 32.46  & 33.33  \\
    H2O   & \underline{24.03 } & 24.48  & 31.10  & 52.53  & 52.38  & 30.84  & 17.00  & \underline{19.67 } & \textbf{16.27 } & 43.00  & 83.64  & \textbf{41.41 } & 12.22  & 94.58  & \underline{45.85 } & \textbf{34.35 } & 38.96  \\
    TOVA  & \textbf{25.53 } & \underline{27.73 } & 38.07  & \underline{57.08 } & 54.92  & \underline{32.92 } & \textbf{19.28 } & 17.47  & 15.08  & \textbf{60.50 } & 77.55  & \underline{41.02 } & 11.81  & \textbf{97.58 } & 38.05  & 26.75  & 40.08  \\
    SnapKV  & 21.28  & 26.68  & \underline{40.43 } & 56.74  & \underline{55.61 } & \textbf{33.75 } & 17.15  & 19.65  & 14.55  & 48.50  & \textbf{86.46 } & 40.86  & \textbf{12.50 } & 95.58  & 45.75  & 33.55  & \underline{40.56 } \\
    PyramidKV  & 18.31  & 24.62  & 36.85  & 55.30  & 55.40  & 32.38  & 15.33  & 18.02  & 13.15  & 46.50  & \underline{84.50 } & 39.62  & 11.50  & 88.83  & 44.05  & 31.76  & 38.51  \\
    CAKE  & \underline{24.03 } & \textbf{29.35 } & \textbf{41.64 } & \textbf{57.42 } & \textbf{57.37 } & 31.85  & \underline{18.54 } & \textbf{20.42 } & \underline{16.26 } & \underline{50.00 } & 82.90  & 40.63  & \textbf{12.50 } & \underline{97.42 } & \textbf{46.59 } & \underline{33.86 } & \textbf{41.30 } \\

\midrule
\multicolumn{18}{c}{Qwen2.5-32B-Instruct, $B_{\text{total}}=1024L$}\\
\midrule

    StreamingLLM  & 21.42  & 29.11  & 31.07  & 46.44  & 47.31  & 26.17  & 23.76  & 19.07  & 21.68  & 63.50  & 86.12  & 42.61  & \textbf{12.50 } & 63.08  & 49.84  & 36.13  & 38.74  \\
    H2O   & 25.96  & 32.84  & 40.35  & 56.46  & 55.57  & 32.91  & \underline{25.06 } & 21.02  & 21.94  & 60.50  & 86.63  & \underline{44.85 } & \textbf{12.50 } & 97.58  & 50.96  & 37.15  & 43.89  \\
    TOVA  & 29.38  & 40.61  & 48.18  & 59.65  & 60.91  & 34.77  & 24.86  & 20.69  & \underline{22.03 } & \textbf{73.00 } & 87.34  & \textbf{45.09 } & 12.00  & \textbf{100.00 } & 49.11  & 35.90  & 46.47  \\
    SnapKV  & \underline{29.93 } & \underline{42.29 } & \underline{50.00 } & \underline{61.97 } & \underline{61.37 } & 37.22  & 25.01  & \underline{22.47 } & 21.94  & 69.50  & 87.31  & 44.42  & \textbf{12.50 } & \textbf{100.00 } & \textbf{51.54 } & 37.32  & \underline{47.17 } \\
    PyramidKV  & 29.39  & 41.56  & 49.31  & \textbf{63.12 } & \textbf{61.52 } & \textbf{38.25 } & 22.47  & 21.87  & 18.36  & 70.50  & \textbf{87.78 } & 44.01  & 11.83  & \textbf{100.00 } & 50.27  & \underline{37.38 } & 46.73  \\
    CAKE  & \textbf{30.77 } & \textbf{44.49 } & \textbf{50.73 } & 61.59  & 60.60  & \underline{37.39 } & \textbf{25.86 } & \textbf{22.77 } & \textbf{22.18 } & \underline{72.50 } & \textbf{87.78 } & 43.84  & 12.12  & \textbf{100.00 } & \underline{51.13 } & \textbf{37.71 } & \textbf{47.59 } \\

\midrule
\multicolumn{18}{c}{Llama3-70B-Instruct, $B_{\text{total}}=Full$}\\
\midrule
Full & 26.08 & 47.26 & 49.73 & 49.61 & 54.84 & 27.51 & 31.01 & 22.48 & 27.99 & 73.5 & 92.88 & 45.21 & 11.0 & 68.5 & 37.72 & 67.28 & 45.79 \\
\midrule
\multicolumn{18}{c}{Llama3-70B-Instruct, $B_{\text{total}}=128L$}\\
\midrule
    StreamingLLM  & 22.33  & 28.13  & 28.07  & 42.85  & 50.73  & 22.18  & 17.75  & 20.02  & 18.23  & 44.00  & 82.95  & 40.05  & \textbf{11.50 } & 67.50  & 44.44  & 59.76  & 37.53  \\
    H2O   & 24.92  & 29.15  & 36.50  & 45.98  & 50.40  & 24.52  & \underline{19.38 } & 20.61  & \underline{21.40 } & 43.50  & \textbf{91.78 } & \underline{42.90 } & \underline{10.55 } & \textbf{70.15 } & \textbf{46.92 } & 63.45  & 40.13  \\
    TOVA  & 20.61  & 25.79  & 30.59  & 43.67  & 47.05  & 24.33  & 6.18  & 17.45  & 18.22  & \textbf{58.50 } & 91.52  & 37.04  & 7.46  & 34.62  & 41.71  & 51.35  & 34.76  \\
    SnapKV  & \underline{25.29 } & 33.37  & \underline{44.25 } & \underline{48.01 } & \underline{52.93 } & \underline{27.01 } & 18.90  & \underline{21.53 } & 20.80  & 48.50  & \underline{91.63 } & 41.53  & 7.50  & \underline{68.00 } & 45.87  & \textbf{64.16 } & \underline{41.20 } \\
    PyramidKV  & \textbf{25.87 } & \underline{34.16 } & 41.26  & \textbf{48.55 } & 52.24  & 24.19  & 19.17  & 21.20  & 21.02  & 46.50  & 91.43  & 41.90  & 9.00  & \underline{68.00 } & \underline{46.52 } & 63.02  & 40.88  \\
    CAKE  & 25.21  & \textbf{39.26 } & \textbf{44.39 } & 47.47  & \textbf{54.48 } & \textbf{27.22 } & \textbf{21.26 } & \textbf{22.37 } & \textbf{22.25 } & \underline{57.00 } & 91.28  & \textbf{43.69 } & 10.00  & 67.50  & 44.41  & \textbf{64.16 } & \textbf{42.62 } \\

\midrule
\multicolumn{18}{c}{Llama3-70B-Instruct, $B_{\text{total}}=1024L$}\\
\midrule

    StreamingLLM  & 25.18  & 35.93  & 32.89  & 45.57  & 49.49  & 21.63  & 24.42  & 20.28  & 26.03  & 66.00  & 91.32  & 43.53  & \textbf{11.00 } & 68.00  & \underline{42.61 } & 65.87  & 41.86  \\
    H2O   & 26.66  & 41.54  & 44.56  & 47.47  & 53.84  & 26.26  & 26.05  & 21.52  & \underline{26.75 } & 67.00  & \underline{92.88 } & 44.90  & \textbf{11.00 } & 68.00  & 40.43  & \textbf{69.09 } & 44.25  \\
    TOVA  & \textbf{27.03 } & 45.73  & 47.61  & \textbf{49.77 } & \underline{55.54 } & 27.02  & 26.04  & 21.59  & 26.58  & \underline{73.00 } & \textbf{92.95 } & \textbf{46.24 } & \textbf{11.00 } & 68.00  & 39.76  & 62.59  & 45.03  \\
    SnapKV  & 26.63  & \underline{45.96 } & 47.66  & 48.57  & 54.92  & \textbf{27.60 } & \underline{26.31 } & 22.34  & \underline{26.75 } & 72.00  & \underline{92.88 } & 45.10  & 10.00  & \textbf{68.50 } & 40.72  & 67.22  & \underline{45.20 } \\
    PyramidKV  & \underline{26.99 } & 45.44  & \underline{48.01 } & \underline{49.04 } & 54.17  & 27.37  & 26.08  & \textbf{22.45 } & 26.22  & \underline{73.00 } & 91.95  & 45.10  & 10.50  & \textbf{68.50 } & 38.44  & 67.95  & 45.08  \\
    CAKE  & 26.95  & \textbf{46.94 } & \textbf{48.51 } & 47.91  & \textbf{55.98 } & \underline{27.46 } & \textbf{26.39 } & \underline{22.44 } & \textbf{27.44 } & \textbf{73.50 } & 92.44  & \underline{45.60 } & \textbf{11.00 } & 68.00  & \textbf{44.23 } & \underline{68.47 } & \textbf{45.83 } \\

\bottomrule

\bottomrule
\end{tabular}%
}
\end{table*}



\clearpage
\section{Extended Experimental Results on Needlebench}
\label{needlebench_details}

In this section, we provide more detailed experimental results on NeedleBench~\citep{li2024needlebench}, using Mistral-7B-Instruct-v0.3~\citep{jiang2023mistral} and Llama3.1-8B-Instruct~\citep{dubey2024llama} as backbone LLMs. We examine three distinct tasks on NeedleBench:
(1) \textit{Single-Needle Retrieval (S-RT):} Assessing precision in locating individual details within extensive texts.
(2) \textit{Multi-Needle Retrieval (M-RT):} Evaluating the retrieval of multiple related information pieces dispersed throughout lengthy texts.
(3) \textit{Multi-Needle Reasoning (M-RS):} Measuring complex reasoning abilities by extracting multiple information elements from extended contexts to answer questions.


All experiments are conducted rigorously following the original NeedleBench protocol: Levenshtein distance is used to measure the similarity between predictions and references. Each case is repeated ten times to ensure stable scores, and the results are weighted-averaged to obtain an overall score, providing a balanced representation of each task.

\textbf{Results on Mistral-7B-Instruct-v0.3}. The results presented in Table\,\ref{needlebenchdetail_mistral32k} and Table\,\ref{needlebenchdetail_mistral8k} demonstrate that CAKE consistently outperforms other KV cache eviction methods across all evaluated tasks. CAKE exhibits the smallest decrease in performance scores when compared with the full cache baseline. The robustness of CAKE is evident across all tested cache sizes, with particularly notable improvements in Multi-Needle Retrieval tasks (Figure\,\ref{fig:needlebench_mtrt_case_mistral} and Figure\,\ref{fig:needlebench_mtrt_case_512}). This superior performance indicates CAKE's enhanced ability to tolerate attention shift in long contexts, which can be attributed to its eviction strategy that incorporates both sustained importance and attention variability. The proposed indicator and strategy used in CAKE proves to be effective in maintaining performance even under constrained cache conditions.

\textbf{Results on Llama3.1-8B-Instruct}. We further evaluate CAKE's performance on Llama3.1-8B-Instruct using NeedleBench 32K. The results presented in Table\,\ref{needlebenchdetail_llama32k} illustrate CAKE's comprehensive advantages across all three tasks, and Figure\,\ref{fig:needlebench_mtrt_case_llama} showcases CAKE's outstanding performance on multibench tasks, confirming its effectiveness on models with GQA architecture as well.

On the whole, CAKE achieves better overall performance across the two models and various cache sizes, especially for Multi-Needle Retrieval tasks. Yet, given extremely small cache size budgets, Multi-Needle Retrieval tasks performance experiences a sharp decline in performance. This may be attributed to the stringent cache limitations that are inadequate to support the information demanded by multiple needles. Nevertheless, compared to alternative approaches, CAKE demonstrates superior capability in mitigating this issue. Future research should focus on exploring more sophisticated methods for retaining KV pairs, as this remains a promising avenue for further improvement.

\begin{table*}[!b]
\centering
\caption{Performance comparison over three subtasks of NeedleBench 32K benchmark on Mistral-7B-Instruct-v0.3. The best is highlighted in \textbf{bold}, the second best is in \underline{underline}.}
\resizebox{0.8\textwidth}{!}{%
\begin{tabular}{@{}lllllllllll@{}}
\toprule
\multirow{2}{*}{Method} &
\multicolumn{3}{c}{Single-Retrieval} & 
\multicolumn{3}{c}{Multi-Retrieval} & 
\multicolumn{3}{c}{Multi-Reasoning} & 
\multirow{2}{*}{Overall} \\
\cmidrule(lr){2-4} \cmidrule(lr){5-7} \cmidrule(lr){8-10}
 & ZH & EN & Overall & ZH & EN & Overall & ZH & EN & Overall \\

\midrule
\multicolumn{11}{c}{Mistral-7B-Instruct-v0.3, $B_{\text{total}}= Full$}\\
\midrule
Full & 91.72 & 83.01 & 87.37 & 90.36 & 96.95 & 93.66 & 55.64 & 45.89 & 50.77 & 78.28 \\
\midrule
\multicolumn{11}{c}{Mistral-7B-Instruct-v0.3, $B_{\text{total}}=1024L$}\\
\midrule
    H2O   & 59.20  & 39.75  & 49.48  & \underline{39.50 } & 32.91  & 36.20  & 36.95  & \textbf{46.58 } & 41.76  & 43.18  \\
    SnapKV & \underline{91.73 } & \underline{90.25 } & \underline{90.99 } & 32.27  & \underline{81.59 } & \underline{56.93 } & \underline{48.87 } & 44.34  & \underline{46.61 } & \underline{67.46 } \\
    PyramidKV & \textbf{91.94 } & \textbf{90.69 } & \textbf{91.32 } & 25.23  & 70.55  & 47.89  & 47.68  & 45.32  & 46.50  & 64.84  \\
    CAKE  & 89.43  & 89.89  & 89.66  & \textbf{51.68 } & \textbf{90.32 } & \textbf{71.00 } & \textbf{52.50 } & \underline{45.57 } & \textbf{49.04 } & \textbf{71.87 } \\
 \midrule
 \multicolumn{11}{c}{Mistral-7B-Instruct-v0.3, $B_{\text{total}}=512L$}\\
 \midrule
    H2O   & 60.87  & 38.44  & 49.66  & 4.50  & 12.68  & 8.59  & 36.21  & \underline{46.15 } & 41.18  & 34.79  \\
    SnapKV & \textbf{91.21 } & \underline{90.49 } & \underline{90.85 } & \underline{4.95 } & \underline{33.32 } & \underline{19.14 } & \underline{43.88 } & 45.20  & \underline{44.54 } & \underline{55.44 } \\
    PyramidKV & \underline{90.06 } & 88.42  & 89.24  & 3.50  & 17.91  & 10.70  & 42.43  & 45.77  & 44.10  & 52.14  \\
    CAKE  & 89.97  & \textbf{92.06 } & \textbf{91.01 } & \textbf{26.18 } & \textbf{70.91 } & \textbf{48.55 } & \textbf{47.60 } & \textbf{46.28 } & \textbf{46.94 } & \textbf{65.05 } \\
\midrule
\multicolumn{11}{c}{Mistral-7B-Instruct-v0.3, $B_{\text{total}}=256L$}\\
\midrule
    H2O   & 66.54  & 40.69  & 53.61  & 0.23  & \underline{4.64 } & \underline{2.43 } & 34.61  & \textbf{47.74 } & 41.17  & 34.53  \\
    SnapKV & \underline{89.27 } & \underline{89.10 } & \underline{89.18 } & \underline{0.59 } & 2.95  & 1.77  & \underline{35.67 } & 47.10  & \underline{41.39 } & \underline{48.62 } \\
    PyramidKV & 81.51  & 84.87  & 83.19  & 0.18  & 4.27  & 2.23  & 33.41  & 46.90  & 40.16  & 45.99  \\
    CAKE  & \textbf{90.33 } & \textbf{89.83 } & \textbf{90.08 } & \textbf{1.27 } & \textbf{7.50 } & \textbf{4.39 } & \textbf{39.03 } & \underline{47.21 } & \textbf{43.12 } & \textbf{50.29 } \\
\bottomrule
\end{tabular}%
\label{needlebenchdetail_mistral32k}
}
\end{table*}

\begin{table*}[!t]
\centering
\caption{Performance comparison over three subtasks of NeedleBench 8K benchmark on Mistral-7B-Instruct-v0.3. The best is highlighted in \textbf{bold}, the second best is in \underline{underline}.}
\resizebox{0.8\textwidth}{!}{%
\begin{tabular}{@{}lllllllllll@{}}
\toprule
\multirow{2}{*}{Method} &
\multicolumn{3}{c}{Single-Retrieval} & 
\multicolumn{3}{c}{Multi-Retrieval} & 
\multicolumn{3}{c}{Multi-Reasoning} & 
\multirow{2}{*}{Overall} \\
\cmidrule(lr){2-4} \cmidrule(lr){5-7} \cmidrule(lr){8-10}
 & ZH & EN & Overall & ZH & EN & Overall & ZH & EN & Overall \\

\midrule
\multicolumn{11}{c}{Mistral-7B-Instruct-v0.3, $B_{\text{total}}= Full$}\\
\midrule
Full & 98.33 & 84.96 & 91.65 & 98.25 & 96.85 & 97.55 & 63.59 & 57.14 & 60.37 & 84.03 \\
\midrule
\multicolumn{11}{c}{Mistral-7B-Instruct-v0.3, $B_{\text{total}}=1024L$}\\
\midrule
    H2O   & 77.93  & 54.86  & 66.39  & 5.15  & 42.20  & 23.67  & 43.55  & 57.39  & 50.47  & 48.80  \\
    SnapKV & 97.66  & \underline{89.92 } & \underline{93.79 } & \underline{12.25 } & \underline{50.65 } & \underline{31.45 } & \underline{57.38 } & \underline{57.52 } & \underline{57.45 } & \underline{64.19 } \\
    PyramidKV & \textbf{98.68 } & 87.55  & 93.11  & 9.95  & 37.65  & 23.80  & 55.59  & 56.77  & 56.18  & 61.24  \\
    CAKE  & \underline{98.20 } & \textbf{90.27 } & \textbf{94.24 } & \textbf{53.80 } & \textbf{87.25 } & \textbf{70.53 } & \textbf{57.75 } & \textbf{57.57 } & \textbf{57.66 } & \textbf{76.15 } \\
 \midrule
 \multicolumn{11}{c}{Mistral-7B-Instruct-v0.3, $B_{\text{total}}=512L$}\\
 \midrule
    H2O   & 72.73  & 50.71  & 61.72  & 0.85  & 2.75  & 1.80  & 39.79  & 57.86  & 48.82  & 39.87  \\
    SnapKV & \textbf{98.11 } & \underline{90.58 } & \underline{94.35 } & \underline{2.25 } & \underline{3.85 } & \underline{3.05 } & \underline{52.71 } & 57.78  & \underline{55.25 } & \underline{55.23 } \\
    PyramidKV & 97.45  & 87.87  & 92.66  & 1.25  & 3.00  & 2.12  & 49.23  & \textbf{58.52 } & 53.87  & 53.86  \\
    CAKE  & \underline{98.08 } & \textbf{92.33 } & \textbf{95.20 } & \textbf{16.60 } & \textbf{45.25 } & \textbf{30.93 } & \textbf{52.78 } & \underline{58.48 } & \textbf{55.63 } & \textbf{64.05 } \\
\midrule
\multicolumn{11}{c}{Mistral-7B-Instruct-v0.3, $B_{\text{total}}=256L$}\\
\midrule
    H2O   & 74.35  & 48.40  & 61.37  & 0.20  & 0.20  & 0.20  & 37.38  & 57.87  & 47.62  & 38.90  \\
    SnapKV & \textbf{97.47 } & \underline{91.89 } & \textbf{94.68 } & \underline{0.60 } & 0.50  & \underline{0.55 } & \underline{42.28 } & \underline{57.93 } & \underline{50.10 } & \underline{53.07 } \\
    PyramidKV & 92.02  & 88.24  & 90.13  & 0.30  & \underline{0.55 } & 0.43  & 39.88  & 57.02  & 48.45  & 50.71  \\
    CAKE  & \underline{97.22 } & \textbf{91.93 } & \underline{94.57 } & \textbf{1.00 } & \textbf{1.45 } & \textbf{1.22 } & \textbf{44.21 } & \textbf{58.73 } & \textbf{51.47 } & \textbf{53.64 } \\
\bottomrule
\end{tabular}%
\label{needlebenchdetail_mistral8k}
}
\vspace{2cm}
\end{table*}

\begin{figure}[h]
    \centering
    \begin{subfigure}[b]{0.45\textwidth}
        \includegraphics[width=\textwidth]{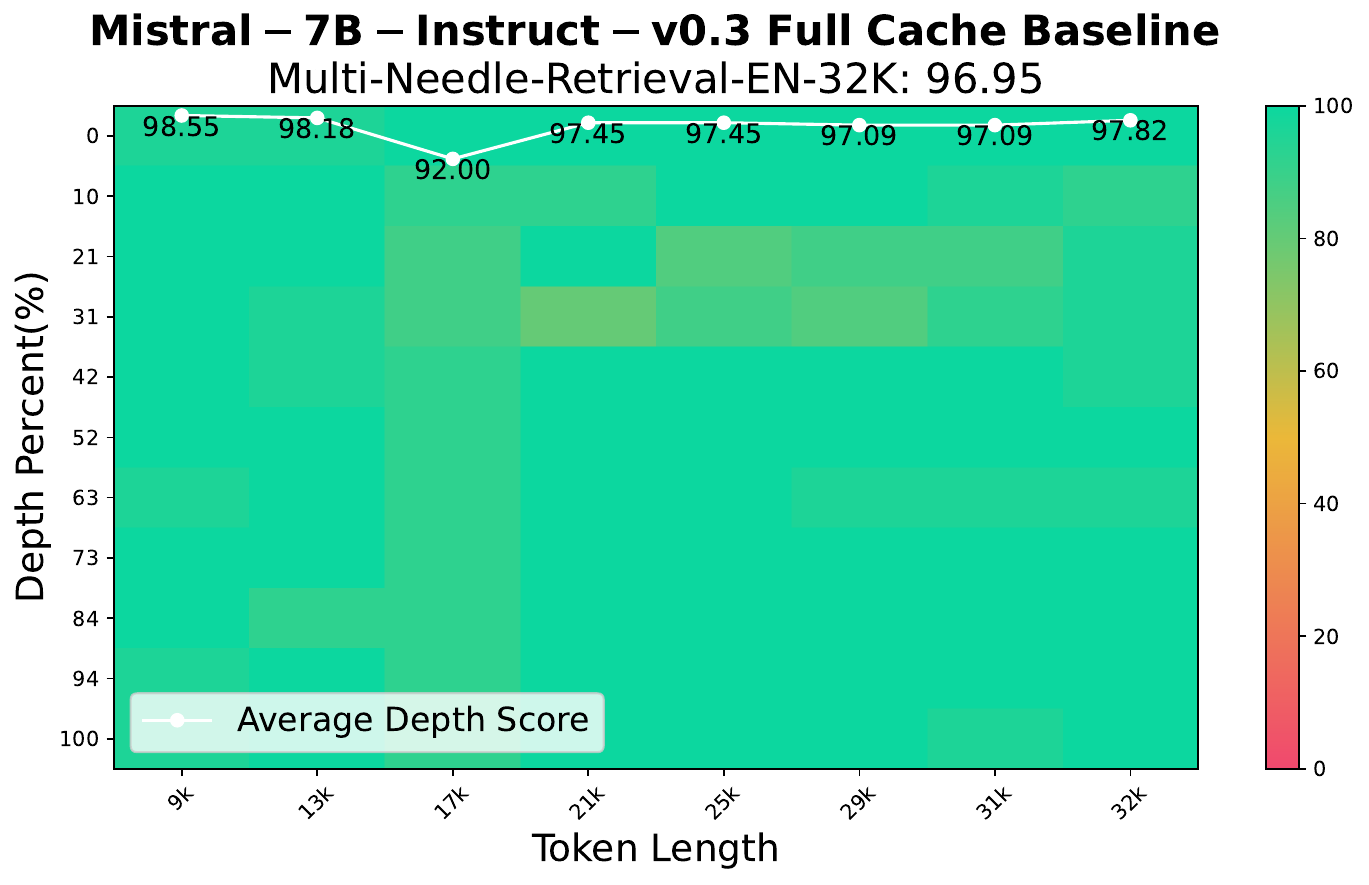}
        \caption{Mistral-7B-Instruct-v0.3 full cache baseline}
        \label{fig:needle-mt-rt-full}
    \end{subfigure}
    \hfill
    \begin{subfigure}[b]{0.45\textwidth}
        \includegraphics[width=\textwidth]{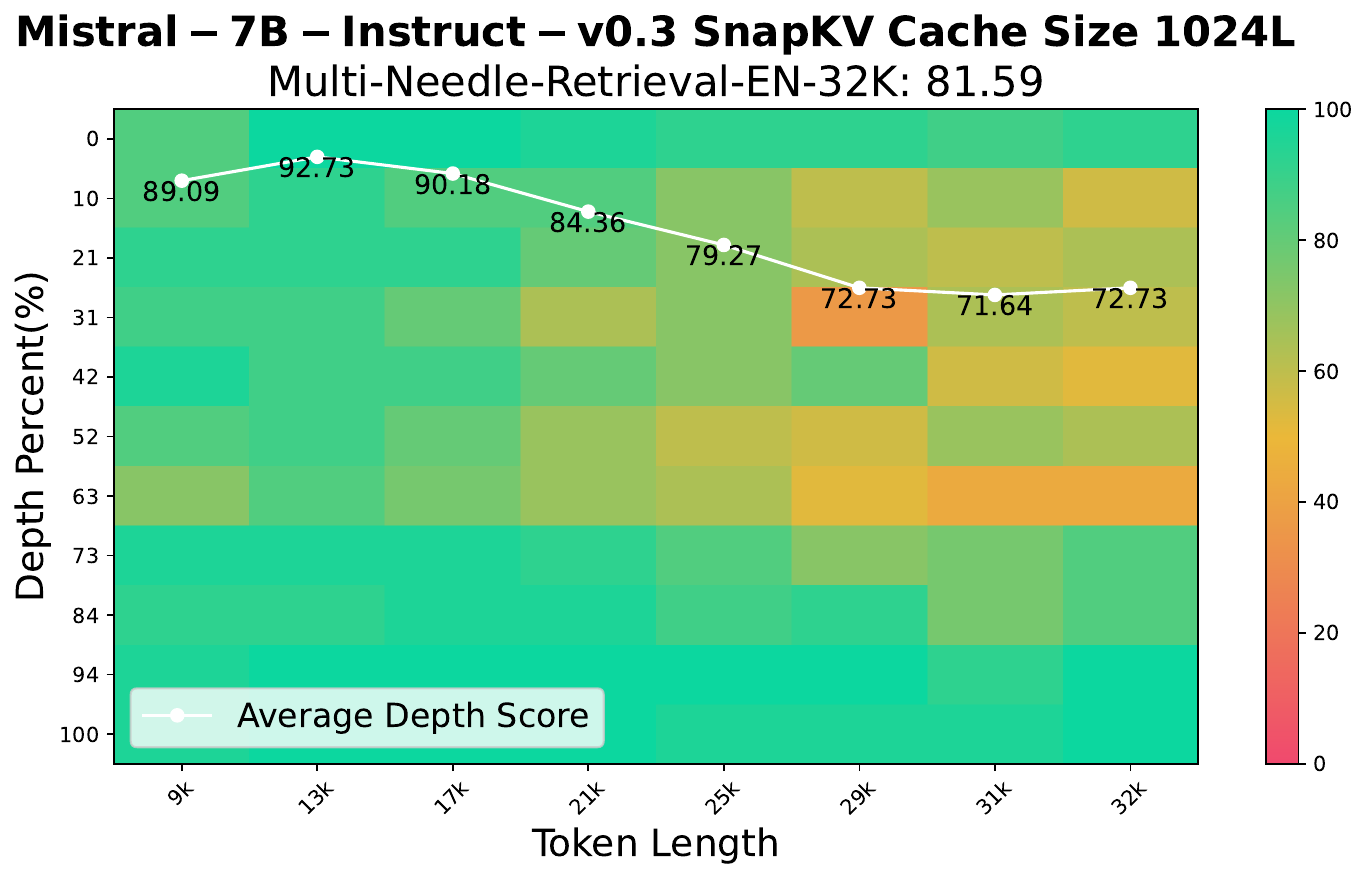}
        \caption{SnapKV: cache budget $B_\text{total}=1024L$}
        \label{fig:needle-mt-rt-snap1024}
    \end{subfigure}

    \vskip\baselineskip

    \begin{subfigure}[b]{0.45\textwidth}
        \includegraphics[width=\textwidth]{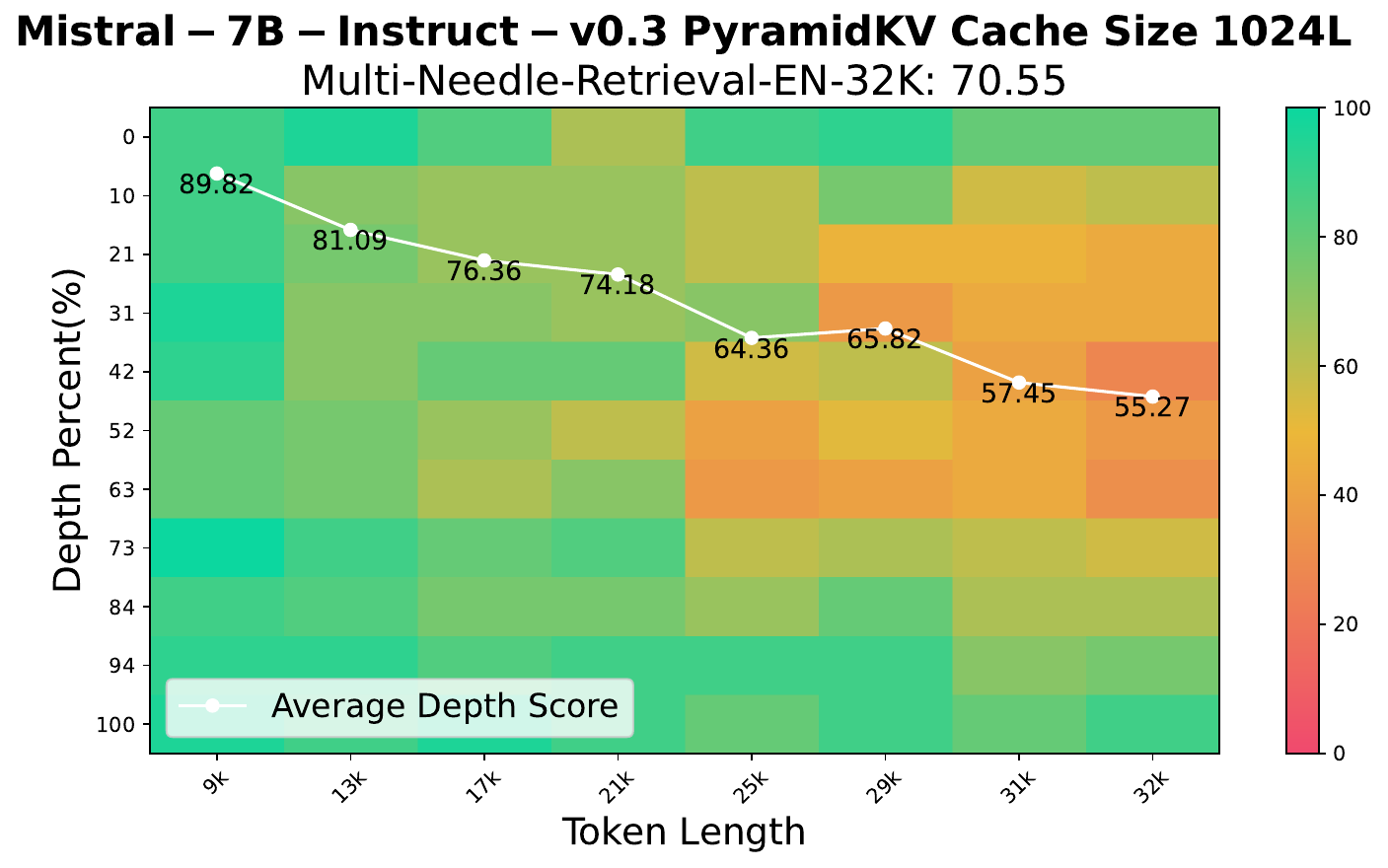}
        \caption{PyramidKV: cache budget $B_\text{total}=1024L$}
        \label{fig:needle-mt-rt-pyramid1024}
    \end{subfigure}
    \hfill
    \begin{subfigure}[b]{0.45\textwidth}
        \includegraphics[width=\textwidth]{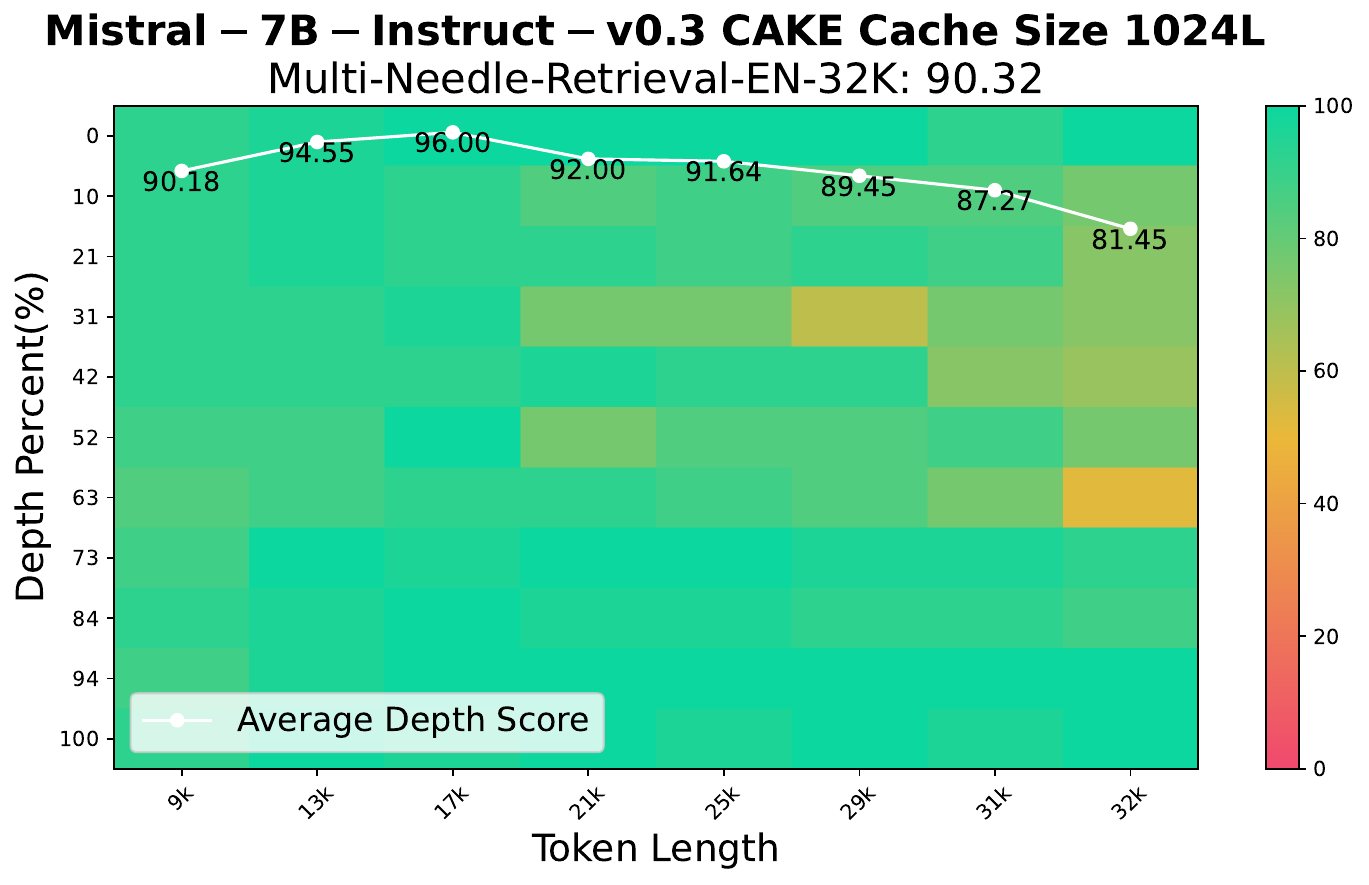}
        \caption{CAKE: cache budget $B_\text{total}=1024L$}
        \label{fig:needle-mt-rt-cake1024}
    \end{subfigure}





    \caption{Performance comparison of Mistral-7B-Instruct-v0.3 on NeedleBench 32K Multi-Needle Retrieval Task (EN) using different KV eviction methods under $B_\text{total}=1024L$.}
    \label{fig:needlebench_mtrt_case_mistral}
\end{figure}


\begin{figure}[!t]
    \centering
    \begin{subfigure}[b]{0.45\textwidth}
        \includegraphics[width=\textwidth]{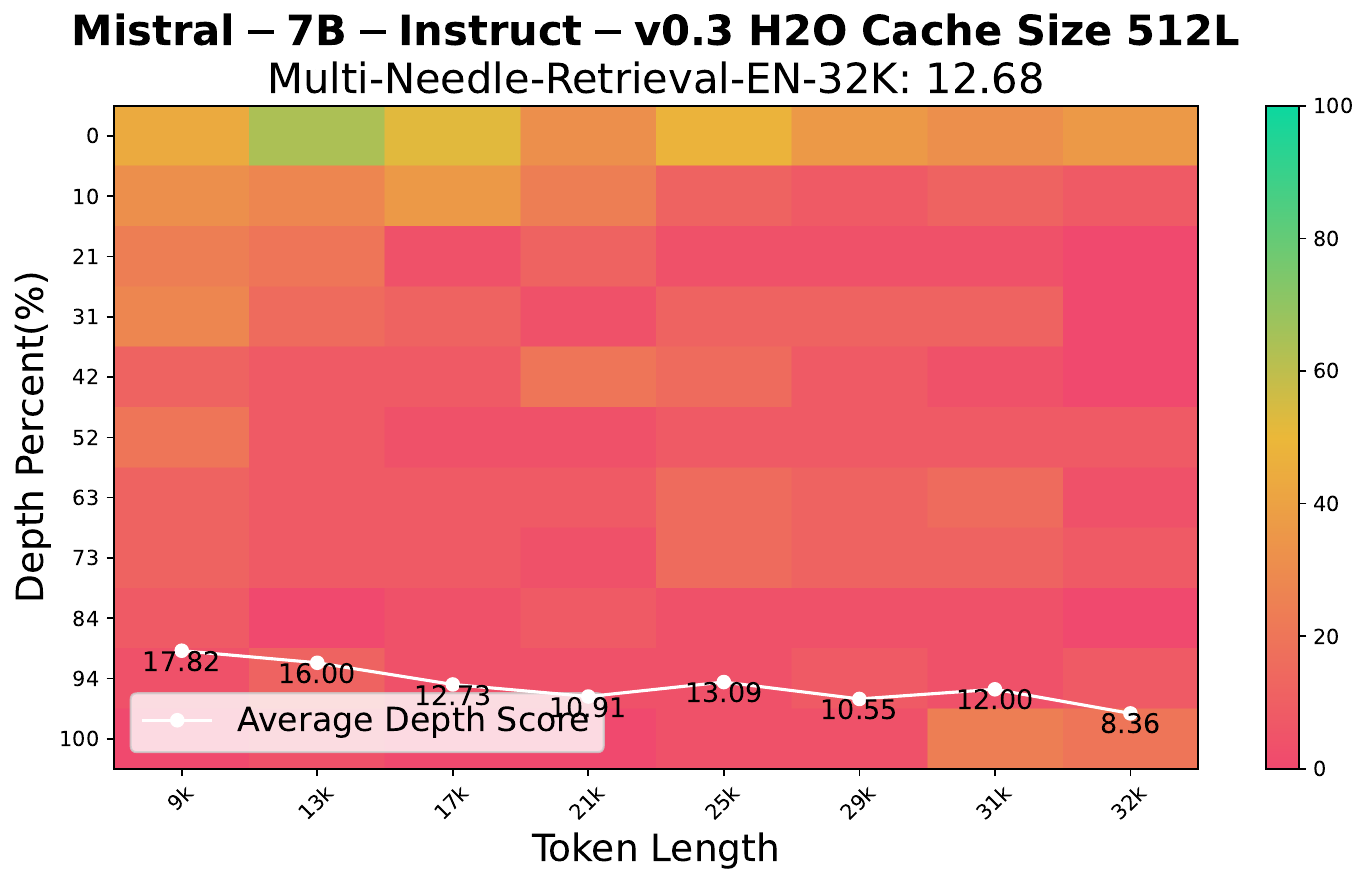}
        \caption{H2O: cache budget $B_\text{total}=512L$}
        \label{fig:needle-mt-rt-h2o512}
    \end{subfigure}
    \hfill
    \begin{subfigure}[b]{0.45\textwidth}
        \includegraphics[width=\textwidth]{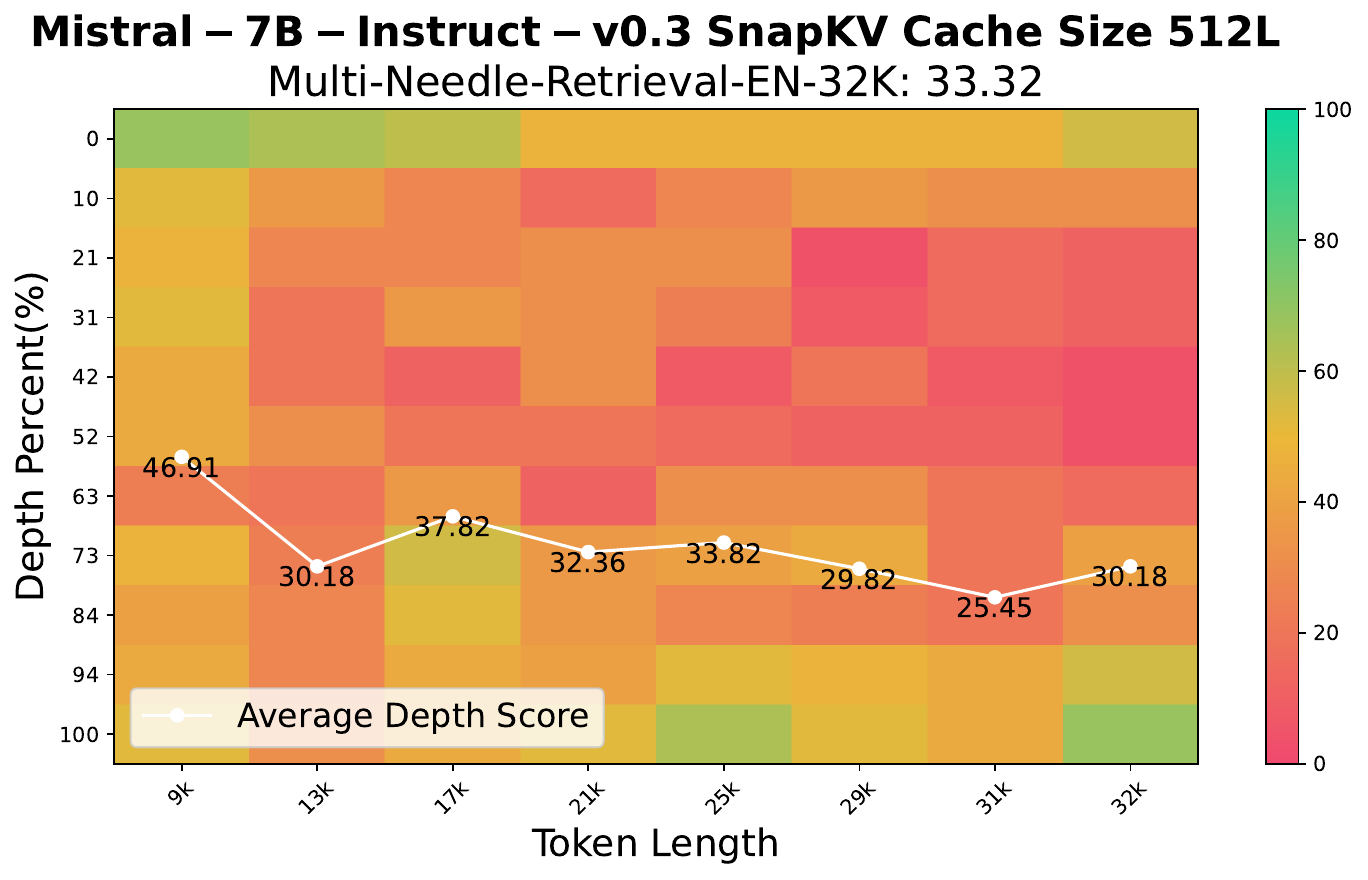}
        \caption{SnapKV: cache budget $B_\text{total}=512L$}
        \label{fig:needle-mt-rt-snap512}
    \end{subfigure}

    \vskip\baselineskip

    \begin{subfigure}[b]{0.45\textwidth}
        \includegraphics[width=\textwidth]{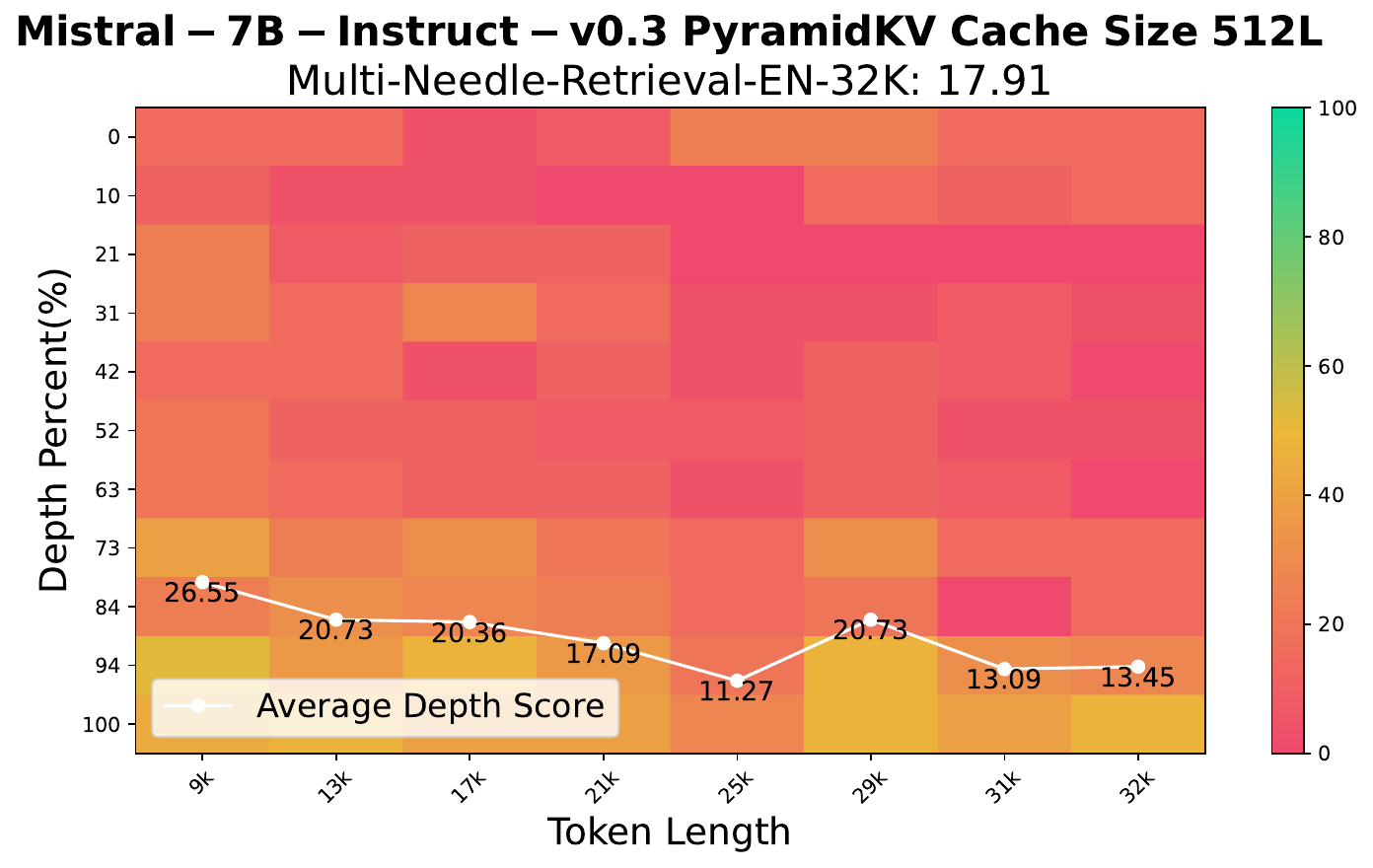}
        \caption{PyramidKV: cache budget $B_\text{total}=512L$}
        \label{fig:needle-mt-rt-pyramid512}
    \end{subfigure}
    \hfill
    \begin{subfigure}[b]{0.45\textwidth}
        \includegraphics[width=\textwidth]{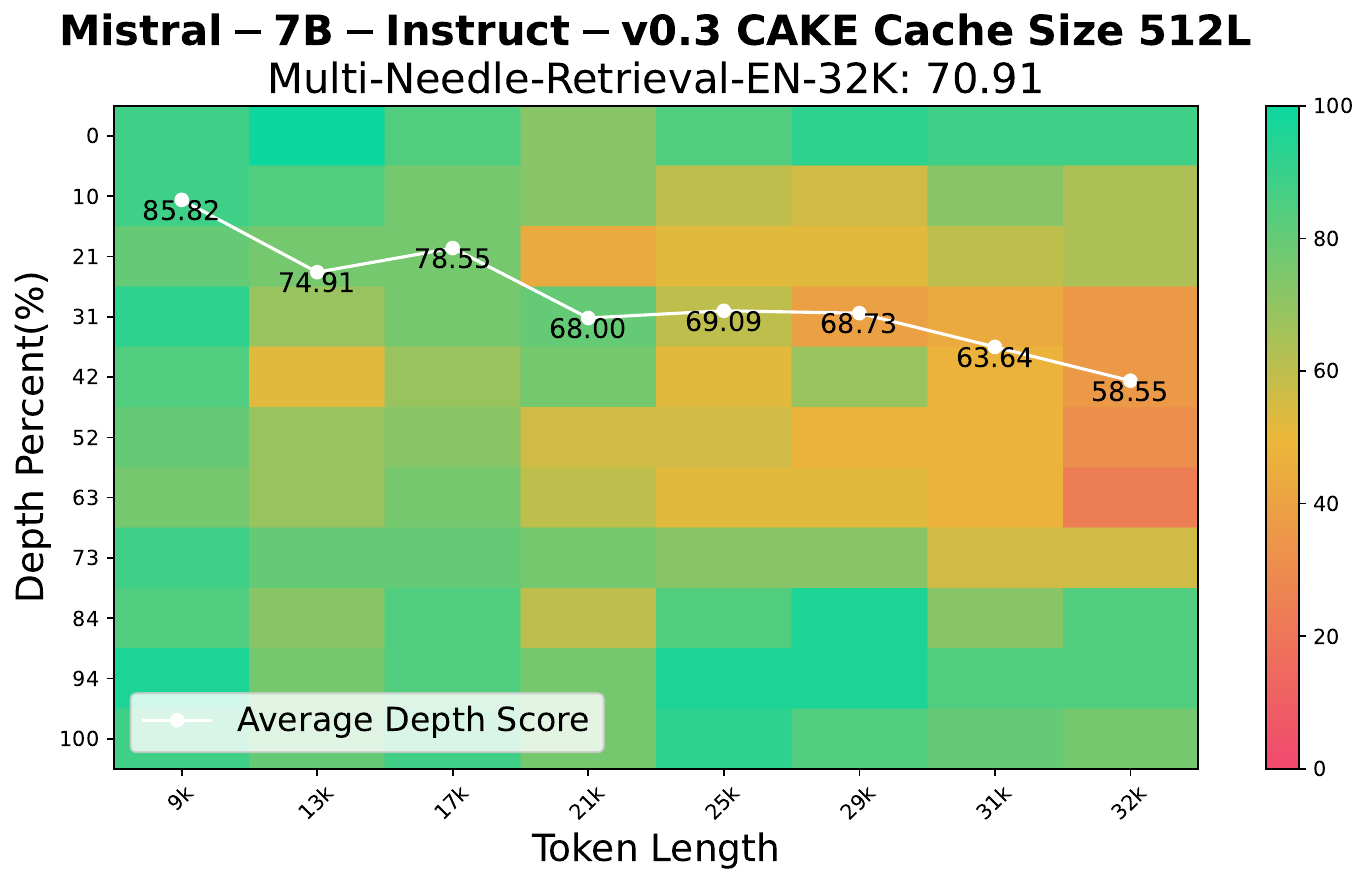}
        \caption{CAKE: cache budget $B_\text{total}=512L$}
        \label{fig:needle-mt-rt-cake512}
    \end{subfigure}
    \caption{Performance comparison of Mistral-7B-Instruct-v0.3 on NeedleBench 32K Multi-Needle Retrieval Task (EN) using different KV eviction methods under $B_\text{total}=512L$.}
    \label{fig:needlebench_mtrt_case_512}
\vspace{2.0cm}
\end{figure}

\begin{table*}[!h]
\centering
\caption{Performance comparison over three subtasks of NeedleBench 32K benchmark on Llama3.1-8B Instruct. The best is highlighted in \textbf{bold}, the second best is in \underline{underline}.}
\resizebox{0.8\textwidth}{!}{%
\begin{tabular}{@{}lllllllllll@{}}
\toprule
\multirow{2}{*}{Method} &
\multicolumn{3}{c}{Single-Retrieval} & 
\multicolumn{3}{c}{Multi-Retrieval} & 
\multicolumn{3}{c}{Multi-Reasoning} & 
\multirow{2}{*}{Overall} \\
\cmidrule(lr){2-4} \cmidrule(lr){5-7} \cmidrule(lr){8-10}
 & ZH & EN & Overall & ZH & EN & Overall & ZH & EN & Overall \\

\midrule
\multicolumn{11}{c}{Llama3.1-8B Instruct, $B_{\text{total}}= Full$}\\
\midrule
Full  & 100.00 & 82.99 & 91.49 & 99.64 & 99.77 & 99.70 & 72.50 & 77.70 & 75.10 & 89.04 \\
\midrule
\multicolumn{11}{c}{Llama3.1-8B Instruct, $B_{\text{total}}=1024L$}\\
\midrule
    H2O   & 89.51  & 66.59  & 78.05  & \textbf{89.45 } & 91.77  & 90.61  & 54.52  & 71.91  & 63.21  & 77.37  \\
    SnapKV & 99.90  & \underline{84.21 } & \underline{92.06 } & 86.68  & \underline{98.23 } & \underline{92.45 } & \underline{71.11 } & 73.29  & \underline{72.20 } & \underline{86.22 } \\
    PyramidKV & \textbf{100.00 } & \textbf{85.53 } & \textbf{92.76 } & 56.77  & 88.55  & 72.66  & \textbf{71.59 } & \textbf{76.16 } & \textbf{73.87 } & 81.07  \\
    CAKE  & \textbf{100.00 } & 84.02  & 92.01  & \underline{89.41 } & \textbf{98.91 } & \textbf{94.16 } & 69.29  & \underline{73.42 } & 71.35  & \textbf{86.46 } \\
\midrule
 \multicolumn{11}{c}{Llama3.1-8B Instruct, $B_{\text{total}}=512L$}\\
 \midrule
    H2O   & 88.14  & 66.08  & 77.11  & 30.05  & \underline{80.14 } & 55.09  & 50.97  & 73.20  & 62.08  & 66.00  \\
    SnapKV & 99.90  & \underline{84.30 } & \underline{92.10 } & \underline{36.09 } & 78.05  & \underline{57.07 } & \textbf{69.20 } & \underline{73.41 } & \underline{71.31 } & \underline{75.35 } \\
    PyramidKV & \textbf{100.00 } & \textbf{86.17 } & \textbf{93.09 } & 11.00  & 49.36  & 30.18  & \underline{68.72 } & \textbf{77.13 } & \textbf{72.93 } & 68.17  \\
    CAKE  & \textbf{100.00 } & 82.07  & 91.04  & \textbf{47.36 } & \textbf{85.68 } & \textbf{66.52 } & 67.67  & 71.15  & 69.41  & \textbf{77.19 } \\
\midrule
\multicolumn{11}{c}{Llama3.1-8B Instruct, $B_{\text{total}}=256L$}\\
\midrule
    H2O   & 88.54  & 69.61  & 79.08  & \underline{4.95 } & 6.23  & 5.59  & 50.11  & 72.86  & 61.49  & 51.75  \\
    SnapKV & \underline{99.90 } & \underline{82.90 } & \underline{91.40 } & 4.05  & \underline{8.45 } & \underline{6.25 } & \textbf{66.68 } & \underline{74.48 } & \textbf{70.58 } & \underline{59.61 } \\
    PyramidKV & \textbf{100.00 } & \textbf{85.07 } & \textbf{92.53 } & 1.91  & 6.09  & 4.00  & 64.55  & \textbf{76.14 } & \underline{70.34 } & 59.32  \\
    CAKE  & 99.81  & 81.92  & 90.86  & \textbf{6.36 } & \textbf{27.64 } & \textbf{17.00 } & \underline{64.57 } & 68.61  & 66.59  & \textbf{61.42 } \\
\bottomrule
\end{tabular}%
\label{needlebenchdetail_llama32k}
}
\vspace{1cm}
\end{table*}

\begin{figure}[!t]
    \centering
    \begin{subfigure}[b]{0.45\textwidth}
        \includegraphics[width=\textwidth]{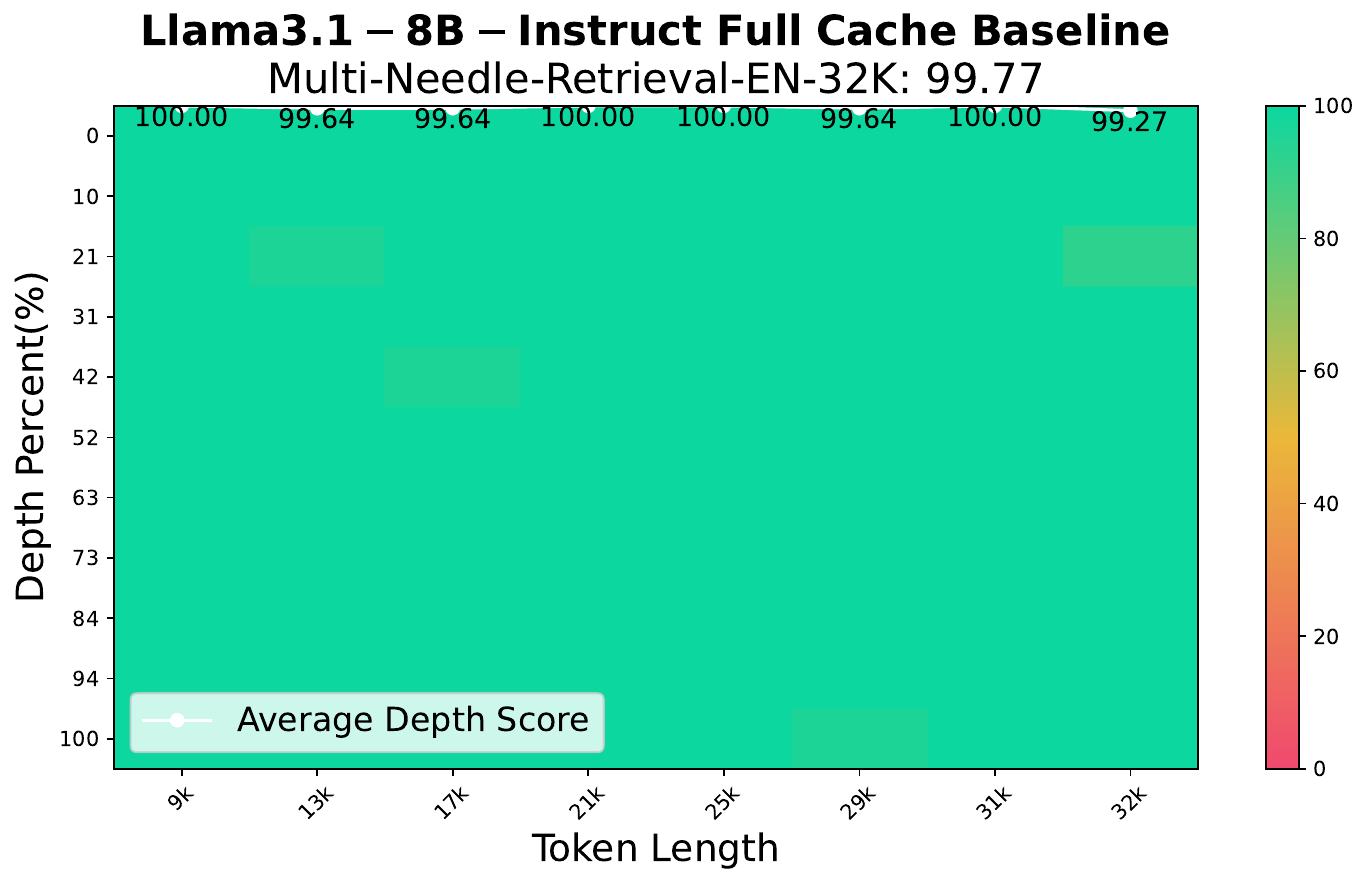}
        \caption{Llama3.1-8B-Instruct full cache baseline}
        \label{fig:needle-mt-rt-full_llama}
    \end{subfigure}
    \hfill
    \begin{subfigure}[b]{0.45\textwidth}
        \includegraphics[width=\textwidth]{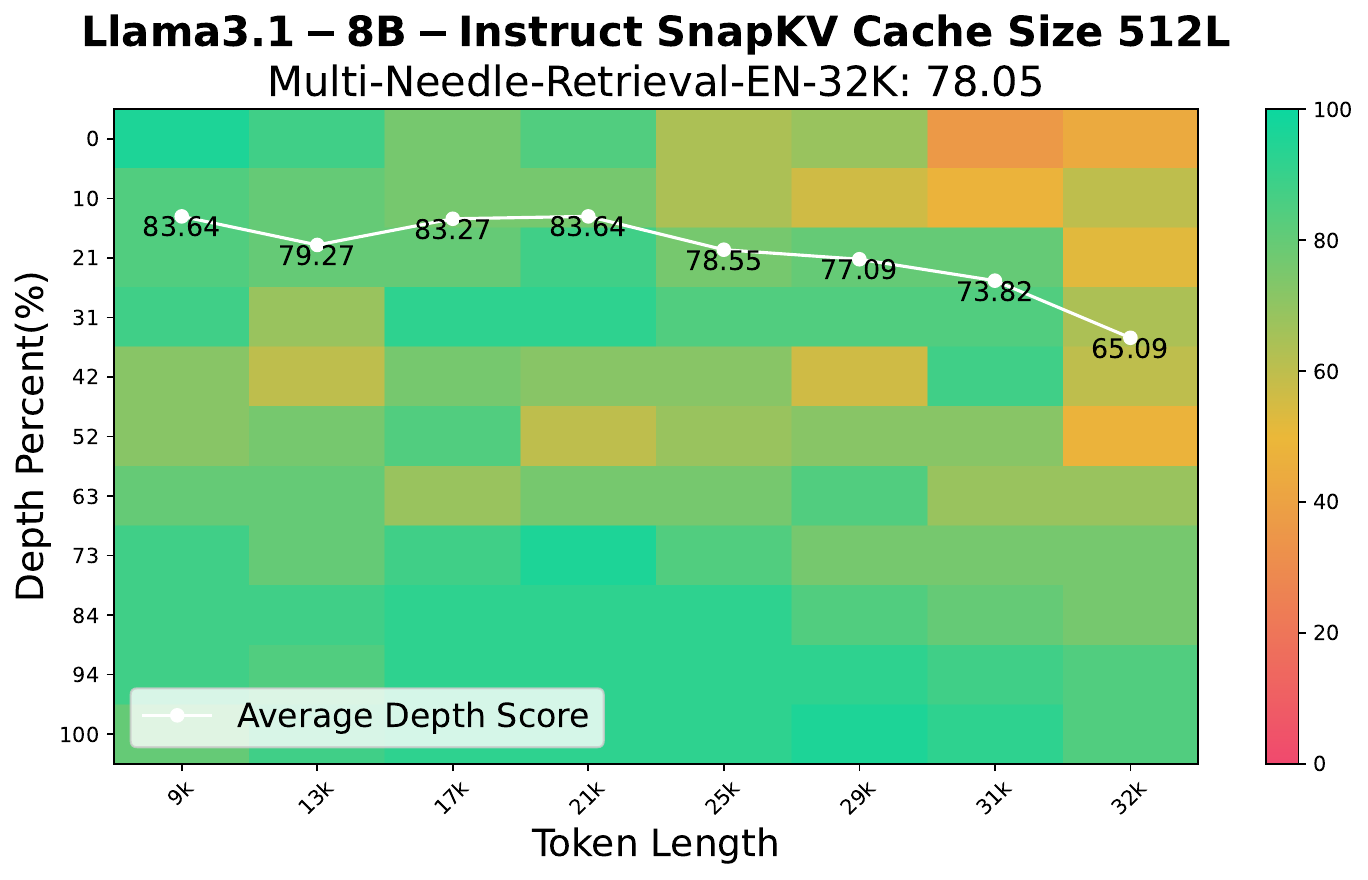}
        \caption{SnapKV: cache budget $B_\text{total}=512L$}
        \label{fig:needle-mt-rt-snap_llama512}
    \end{subfigure}

    \vskip\baselineskip

    \begin{subfigure}[b]{0.45\textwidth}
        \includegraphics[width=\textwidth]{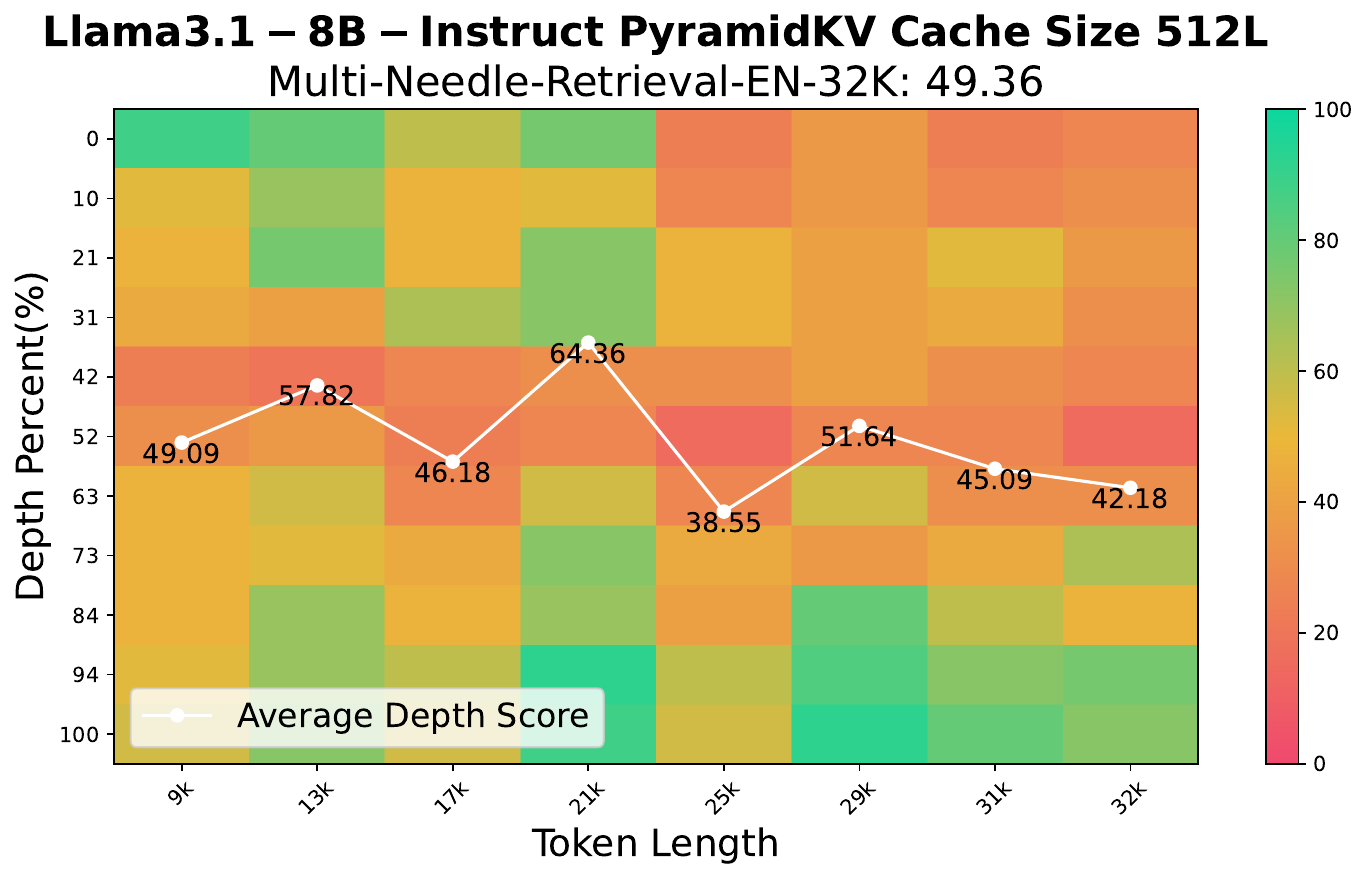}
        \caption{PyramidKV: cache budget $B_\text{total}=512L$}
        \label{fig:needle-mt-rt-pyramid_llama512}
    \end{subfigure}
    \hfill
    \begin{subfigure}[b]{0.45\textwidth}
        \includegraphics[width=\textwidth]{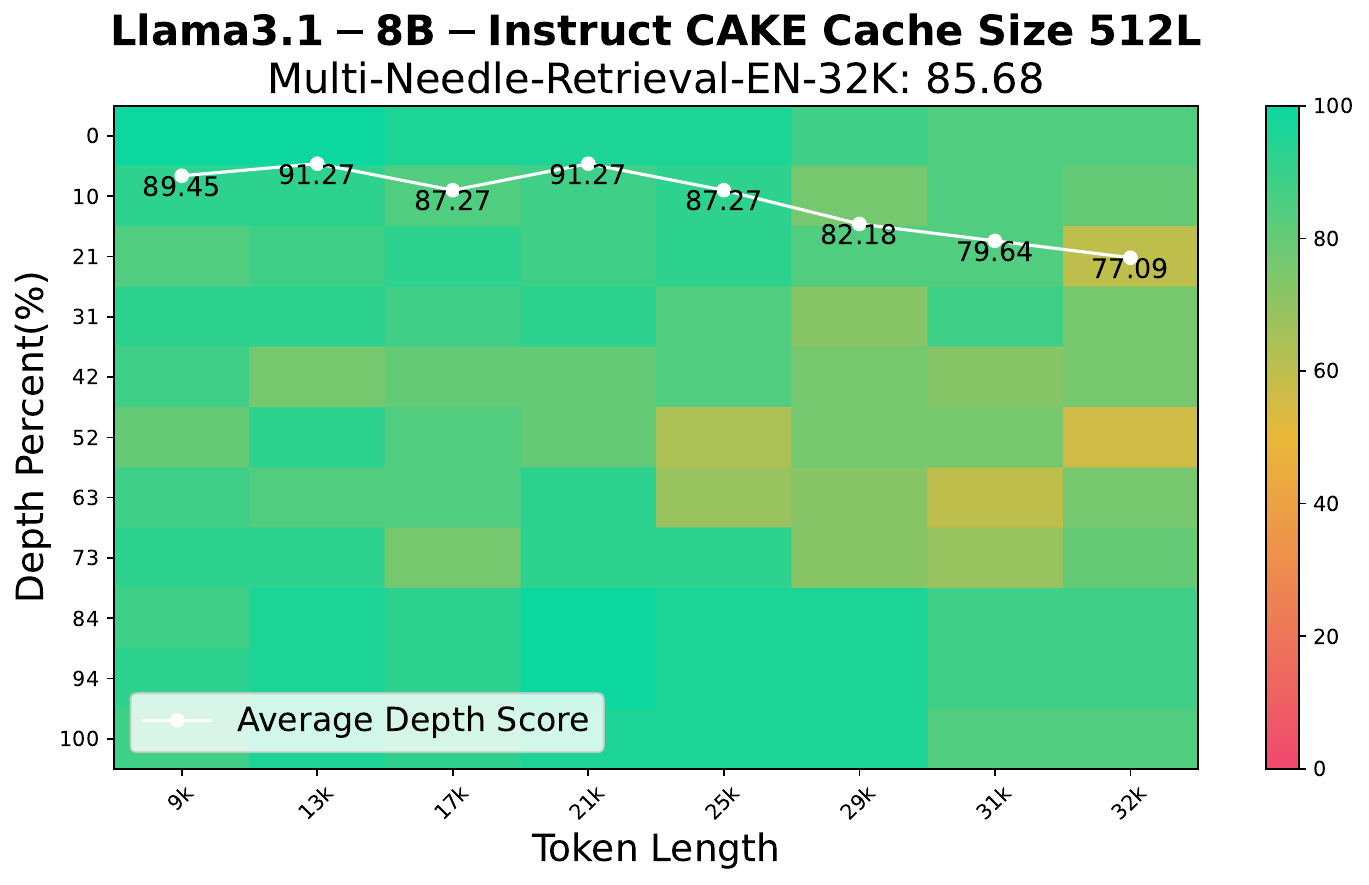}
        \caption{CAKE: cache budget $B_\text{total}=512L$}
        \label{fig:needle-mt-rt-cake_llama512}
    \end{subfigure}
    \caption{Performance comparison of Llama3.1-8B-Instruct  on NeedleBench 32K Multi-Needle Retrieval Task (EN) using different KV eviction methods under $B_\text{total}=512L$.}
    \label{fig:needlebench_mtrt_case_llama}
\vspace{2cm}
\end{figure}

\begin{table}[!t]
  \centering
  \scriptsize
  \caption{Prompt prefilling and token decoding latency comparison.}
  \begin{tabular}{clcccc}
    \toprule
    \multicolumn{1}{p{1.8cm}}{\centering Prefill+Decoding\\ Length} & Method & \multicolumn{1}{p{1.7cm}}{\centering Overall Generation Time (s)} & \multicolumn{1}{p{1.7cm}}{\centering Prompt Prefill Time (s)} & \multicolumn{1}{p{1.5cm}}{\centering Decoding Time (s)} & \multicolumn{1}{p{1.5cm}}{\centering Decoding Time per Token (ms)} \\
    \midrule
    \multirow{4}[2]{*}{7168+1024} & SnapKV & 33.74  & 0.71  & 33.03  & 32.25\\
          & PyramidKV & 32.29  & 0.78  & 31.50  & 30.77 \\
          & CAKE  & 31.77  & 0.81  & 30.96  & 30.24  \\
          & Full Cache & 34.54  & 0.70  & 33.83  & 33.04   \\
    \midrule
    \multirow{4}[2]{*}{15360+1024} & SnapKV & 34.78  & 1.66  & 33.12  & 32.34   \\
          & PyramidKV & 33.02  & 1.73  & 31.29  & 30.56  \\
          & CAKE  & 32.96  & 1.77  & 31.19  & 30.46   \\
          & Full Cache & 50.71  & 1.66  & 49.06  & 47.91   \\
    \midrule
    \multirow{4}[2]{*}{31744+1024} & SnapKV & 37.27  & 4.13  & 33.15  & 32.37 \\
          & PyramidKV & 35.78  & 4.20  & 31.58  & 30.84  \\
          & CAKE  & 36.42  & 4.29  & 32.12  & 31.37 \\
          & Full Cache & 86.42  & 4.14  & 82.28  & 80.35    \\
    \midrule
    \multirow{4}[2]{*}{64512+1024} & SnapKV & 44.29  & 11.33  & 32.96  & 32.19   \\
          & PyramidKV & 43.11  & 11.48  & 31.63  & 30.89  \\
          & CAKE  & 43.10  & 11.47  & 31.63  & 30.89   \\
          & Full Cache & 176.95  & 11.42  & 165.53  & 161.65 \\
    \midrule
    \multirow{4}[2]{*}{130048+1024} & SnapKV & 68.98  & 35.38  & 33.60  & 32.81    \\
          & PyramidKV & 67.61  & 35.51  & 32.10  & 31.35  \\
          & CAKE  & 66.83  & 35.55  & 31.28  & 30.54    \\
          & Full Cache & 364.15  & 35.86  & 328.28  & 320.59 \\
    \bottomrule
    
    \end{tabular}
  \label{latency_vmem}
  \vspace{0.5cm}
\end{table}

\section{Additional Experiments and Analysis on Efficiency}

In this section, we present a detailed time breakdown during both prompt prefilling and token decoding to better assess CAKE's effectiveness across different inference stages compared to other KV eviction methods, including SnapKV and PyramidKV with various allocation strategies. 
All methods are implemented as extensions of FlashAttention-2~\citep{dao2023flashattention2fasterattentionbetter} using Mistral-7B-Instruct-v0.3, and their performance was benchmarked against a full cache baseline utilizing FlashAttention. 
To measure computational time more precisely throughout the entire text generation phase, we have separately evaluated the prompt prefilling time and the duration of autoregressive decoding. By fixing the generation token count at 1024, the experiment simulates typical long-context processing scenarios in LLMs, which are characterized by lengthy inputs and concise outputs.

As shown in Table\,\ref{latency_vmem}, while dramatically lowering latency in the decoding phase, CAKE, SnapKV, and PyramidKV achieve comparable prompt prefilling times. For SnapKV and PyramidKV with fixed allocation strategies, the prompt prefilling stage requires waiting for KV caching and performing one eviction operation per layer, resulting in $L$ eviction operations. However, compared to the computation-intensive prefill stage, the eviction operation is relatively inexpensive and can be considered negligible.
Notably, while CAKE utilizes preference-guided cascading cache management, it achieves similar timing to SnapKV and PyramidKV with fixed allocation strategies. This is attributed to the parallelizable execution of KV cache eviction between layers (Algorithm\,\ref{alg:layer_wise_dynamic_management}, lines 9--14), which allows the dynamic update process to be incorporated into a single eviction operation. Therefore, the total execution time ultimately equates to that of performing $L$ eviction operations.

\section{Additional Ablation Studies}
\label{aas}

\subsection{Ablation Study on Method Components}
In this section, we conduct ablation studies on LongBench to analyze the contribution of each component in our method. These studies highlight the importance of our design choices, focusing on two key aspects: (1) the preference-prioritized adaptive allocation strategy, which combines attention dispersion ($\mathcal{H}$) for spatial influence and attention shift ($\mathcal{V}$) for temporal dynamics, and (2) the attention-shift tolerant eviction indicator, which integrates mean attention score (Mean) for identifying sustained important tokens and attention variance (Var) to capture dynamic attention changes.

As shown in Table\,\ref{compen_ablation_study}, incorporating each additional component leads to a noticeable performance improvement, demonstrating the synergistic effect of our method. The preference-prioritized adaptive allocation strategy improves performance from 28.14 to 28.82 when both $\mathcal{H}$ and $\mathcal{V}$ are included, underscoring the importance of capturing spatial and temporal attention dynamics. This highlights how our approach goes beyond traditional uniform strategies by tailoring cache allocation based on complex layer-specific attention patterns.
Furthermore, incorporating attention variance (Var) in the eviction indicator significantly enhances performance by accounting for attention fluctuations, further improving average performance to 29.29. This confirms that addressing both spatial distribution and temporal evolution of attention is crucial for effective KV cache management, especially in long-context scenarios where dynamic attention patterns heavily influence performance. Our comprehensive approach successfully balances these factors, yielding superior results compared to methods that overlook such nuanced attention characteristics.

\begin{table}[htbp]
\small
\centering
\caption{Ablation study on different components of our method. The experiments are conducted on Llama2-7b-Chat with total cache budget $B_\textbf{total}=128L$.}
\label{compen_ablation_study}
\resizebox{0.5\textwidth}{!}{\begin{tabular}{ccccc}
\hline
\multicolumn{2}{c}{\textbf{Allocation strategy}} & \multicolumn{2}{c}{\textbf{Eviction indicator}}  & \multirow{2}{*}{Avg.}\\
$\mathcal{H}$ & $\mathcal{V}$ & Mean & Var & \\
\hline
 &  & \checkmark &  & 28.07 \\
 \checkmark&  & \checkmark &  &  28.14\\
  \checkmark& \checkmark& \checkmark& & 28.82\\
\checkmark & \checkmark & \checkmark & \checkmark &  \textbf{29.29}\\
\hline
\end{tabular}}
\end{table}

\subsection{Influence of Temperature Parameter}
In this section, we examine the impact of temperature parameters $\tau_1$ and $\tau_2$, which modulate the influence of attention dispersion and shift in our preference-prioritized adaptive allocation strategy. We evaluated various settings on LLama3.1-8B-Instruct and Mistral-7B-Instruct-v0.3 models, using a cache budget of $B_\text{total} = 128L$. Our approach is compared against SnapKV, a state-of-the-art KV eviction method employing uniform cache allocation, which serves as our baseline.
As shown in Figure\, \ref{tau_comparison}, our method consistently outperforms the baseline across different configurations for both models. Mistral-7B-Instruct-v0.3 achieves a peak performance of 37.33 (baseline: 35.81), while LLama3.1-8B-Instruct reached 43.39 (baseline: 42.53), demonstrating the robustness and effectiveness of our strategy.
While our approach is inherently robust, we found that simple, low-cost adjustments to $\tau_1$ and $\tau_2$ can further enhance performance. These adjustments allow for better capture of model-specific attention patterns, optimizing cache allocation without expensive retraining or extensive hyperparameter searches. This characteristic is particularly valuable in industrial deployments where specific models need to be optimized under constrained memory budgets, offering a practical solution for resource-efficient inference in production environments.

\begin{figure}[htbp]
    \centering
    \begin{subfigure}[b]{0.45\textwidth}
        \centering
        \includegraphics[width=\textwidth]{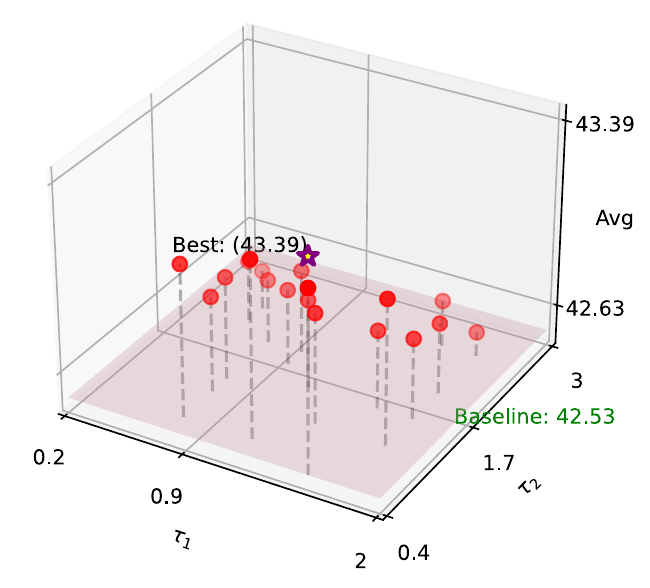}
        \caption{Llama3.1-8B-Instruct}
        \label{fig:mistral}
    \end{subfigure}
    \hspace{0.01\textwidth}
    \begin{subfigure}[b]{0.45\textwidth}
        \centering
        \includegraphics[width=\textwidth]{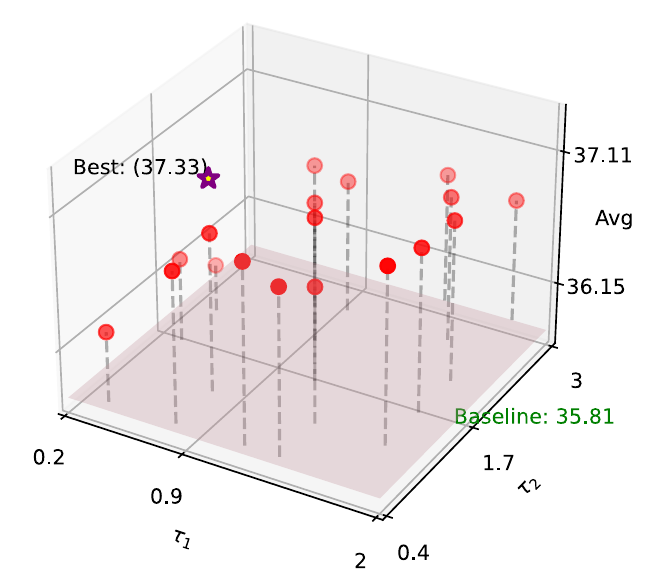}
        \caption{Mistral-8B-Instruct}
        \label{fig:llama}
    \end{subfigure}
    \caption{Performance comparison of different $\tau_1$ and $\tau_2$ configurations for Mistral-7B-Instruct-v0.3 and Llama3.1-8B-Instruct models. The plots show the impact of varying $\tau_1$ and $\tau_2$ on model performance, with the baseline (SnapKV) and best scores indicated.}
    \label{tau_comparison}
\end{figure}

\section{Additional Detailed Analysis and Visualization of Attention Dynamics}
\label{detailvis}
In this section, we provide a more intuitive and comprehensive analysis of the differences in attention patterns by presenting detailed visualizations of attention weights across various dimensions. To illustrate the variations between different layers, we visualize the model's attention weights across all layers by aggregating the attention weights of each attention head. We also highlight the differences in attention between different models by conducting experiments on both Llama3 and Mistral. To demonstrate the attention differences across various input contexts, we visualize attention weights for different task types on Llama3, including single-document QA, summarization, and code tasks. Additionally, we visualize attention weights for the same task types but with varying contexts on Mistral.

\textbf{Attention Differences across Layers}. As shown in Figure\,\ref{visllama} and Figure\,\ref{vismistral}, attention patterns exhibit significant differences across layers, as mentioned in the main text. Regardless of the model or input context, we can observe varying degrees of attention dispersion and attention shift across different layers. For example, we see layers with higher attention dispersion (\emph{e.g}., layer 0 in Figure\,\ref{visllama}(b)) or lower dispersion (\emph{e.g}., layer 1 in Figure\,\ref{visllama}(b)), as well as layers with higher shift (\emph{e.g}., layer 24 in Figure\,\ref{vismistral} (c)) or lower shift (\emph{e.g}., layer 35 in Figure\,\ref{vismistral}(c)). These observations demonstrate the importance of carefully considering the differences in attention mechanisms and allocating appropriate cache resources accordingly.

\textbf{Attention Differences across Models}. As shown in Figure\,\ref{visllama} and Figure\,\ref{vismistral}, these models have notable differences in attention patterns. For instance, Mistral often exhibits higher attention dispersion at the beginning and end of its layer stack. In contrast, Llama3 frequently displays high attention dispersion primarily in the early layers. These distinct patterns underscore the importance of model-specific considerations when designing caching strategies, as different architectures may have varying attention characteristics across their layer stacks.

\textbf{Attention Differences across Contexts}.  As evident in Figure\,\ref{visllama}(a)-(c), there are striking differences in attention patterns across different task types. Even within the same task type, while there may be some similarities in attention dispersion, attention shift patterns show marked differences. This can lead to different layer preferences for KV cache (comparing layers 10-16 in Figure\,\ref{vismistral}(a) with layers 10-16 in Figure\,\ref{vismistral}(b)). Therefore, whether the task types are the same or different, attention patterns are consistently dynamic and varied.

Given the dynamic nature of attention across layers, models, and even contexts, adopting a uniform allocation strategy, while safe, does not effectively utilize memory. On the other hand, using a fixed pattern allocation strategy fails to generalize effectively across multiple models and task scenarios. In contrast, our CAKE thoroughly considers layer preferences for KV cache, formulating a reasonable cache size allocation scheme from a global perspective. It takes into account the substantial impact of attention dynamics, allowing it to dynamically adapt to different models and contexts. This approach ensures that CAKE can effectively respond to the varied and changing attention patterns observed across different scenarios.

\begin{figure}[htbp]
    \centering

    
    \begin{subfigure}{1\textwidth}
        \centering
        \includegraphics[width=\textwidth]{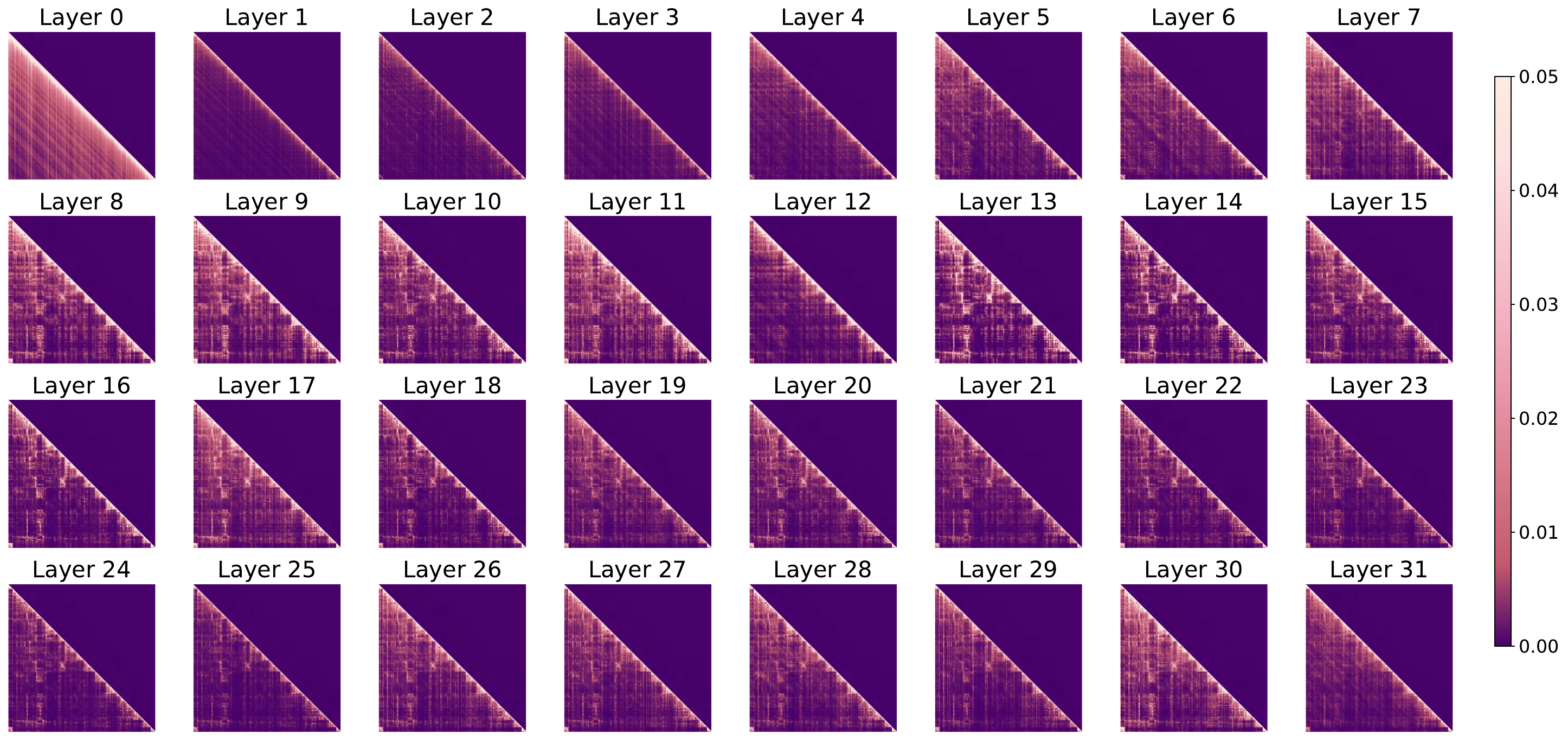}
        \caption{Visualization of attention weights on Qasper.}
    \end{subfigure}

    \begin{subfigure}{1\textwidth}
        \centering
        \includegraphics[width=\textwidth]{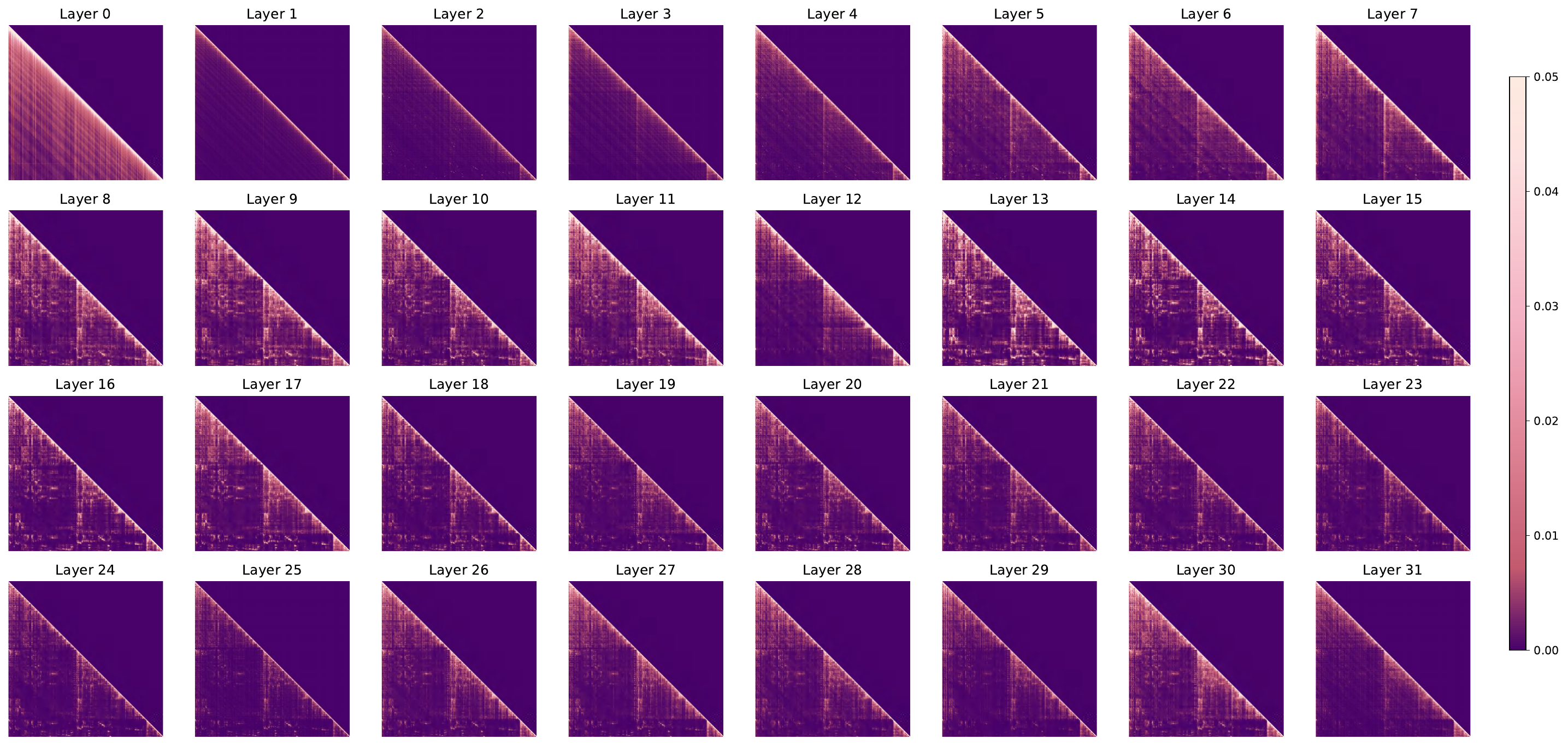
        }
        \caption{Visualization of attention weights on Mutinews.}
    \end{subfigure}
    
    
    
    
    \begin{subfigure}{1\textwidth}
        \centering
        \includegraphics[width=\textwidth]{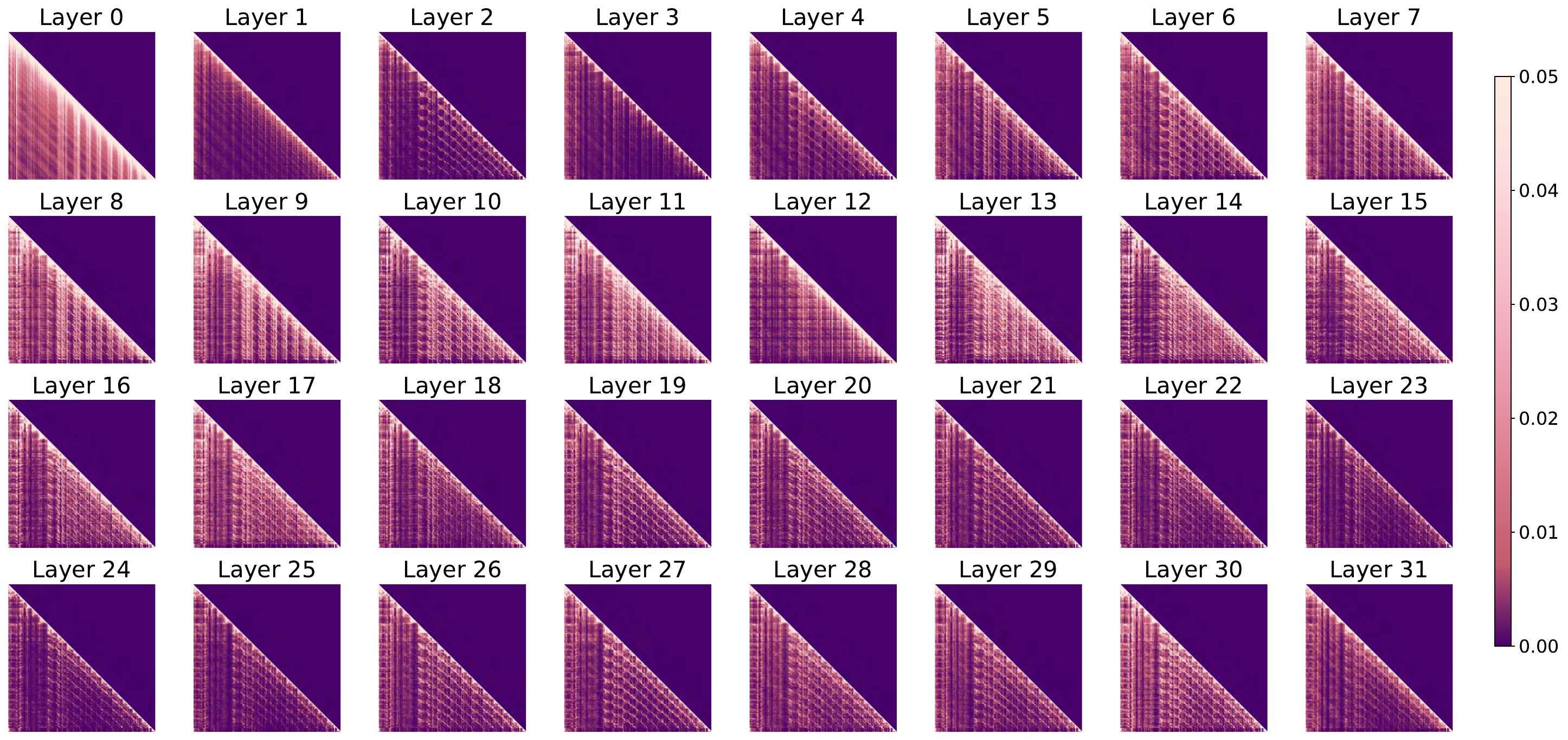}
        \caption{Visualization of attention weights on Lcc.}
    \end{subfigure}
    
    \caption{Visualization results of attention weights for Llama3.1-8B-Instruct across datasets from different tasks.}
    \label{visllama}
\end{figure}

\begin{figure}[htbp]
    \centering
    
    \begin{subfigure}{1\textwidth}
        \centering
        \includegraphics[width=\textwidth]{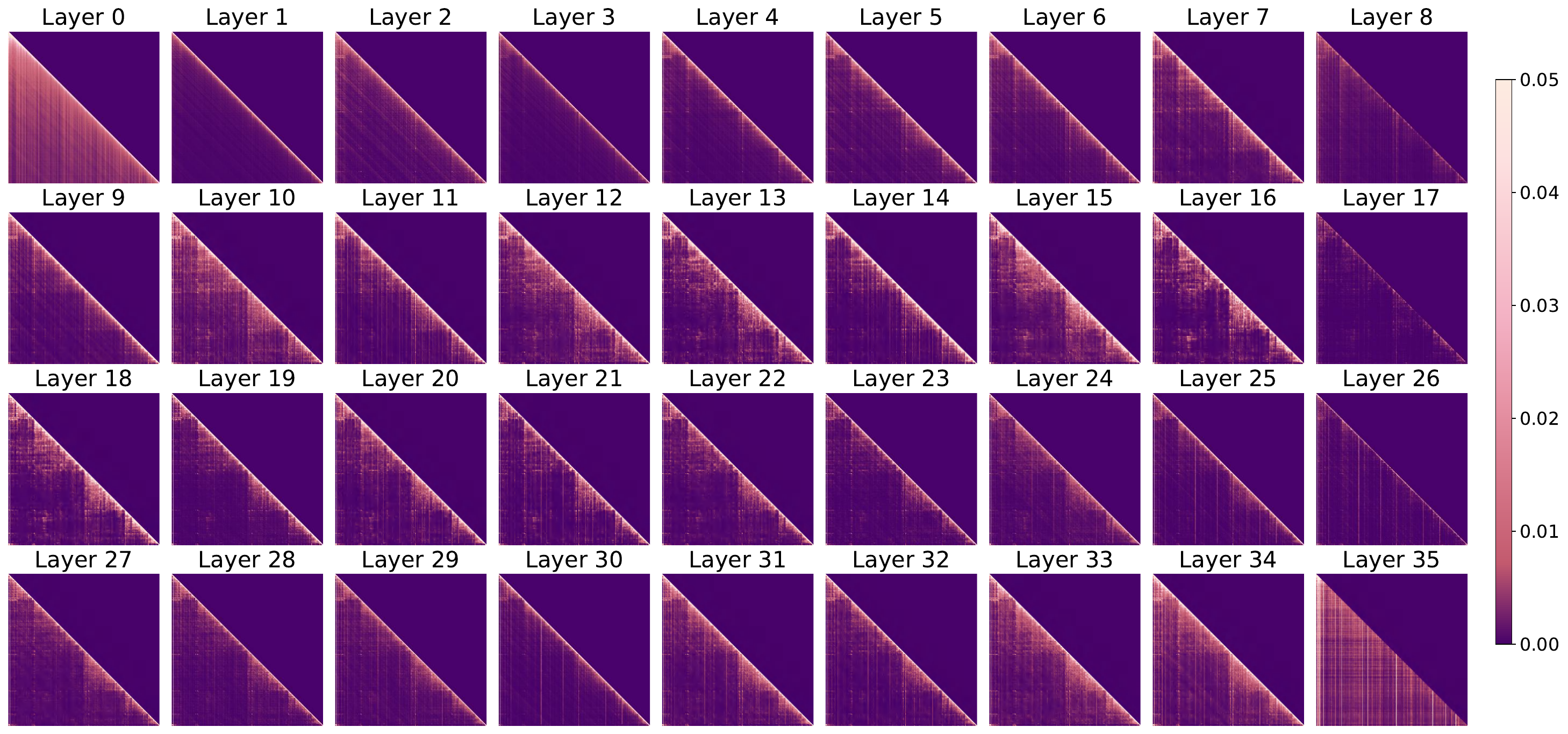}
        \caption{Visualization of attention weights on NarrativeQA.}
    \end{subfigure}
    
    
    \begin{subfigure}{1\textwidth}
        \centering
        \includegraphics[width=\textwidth]{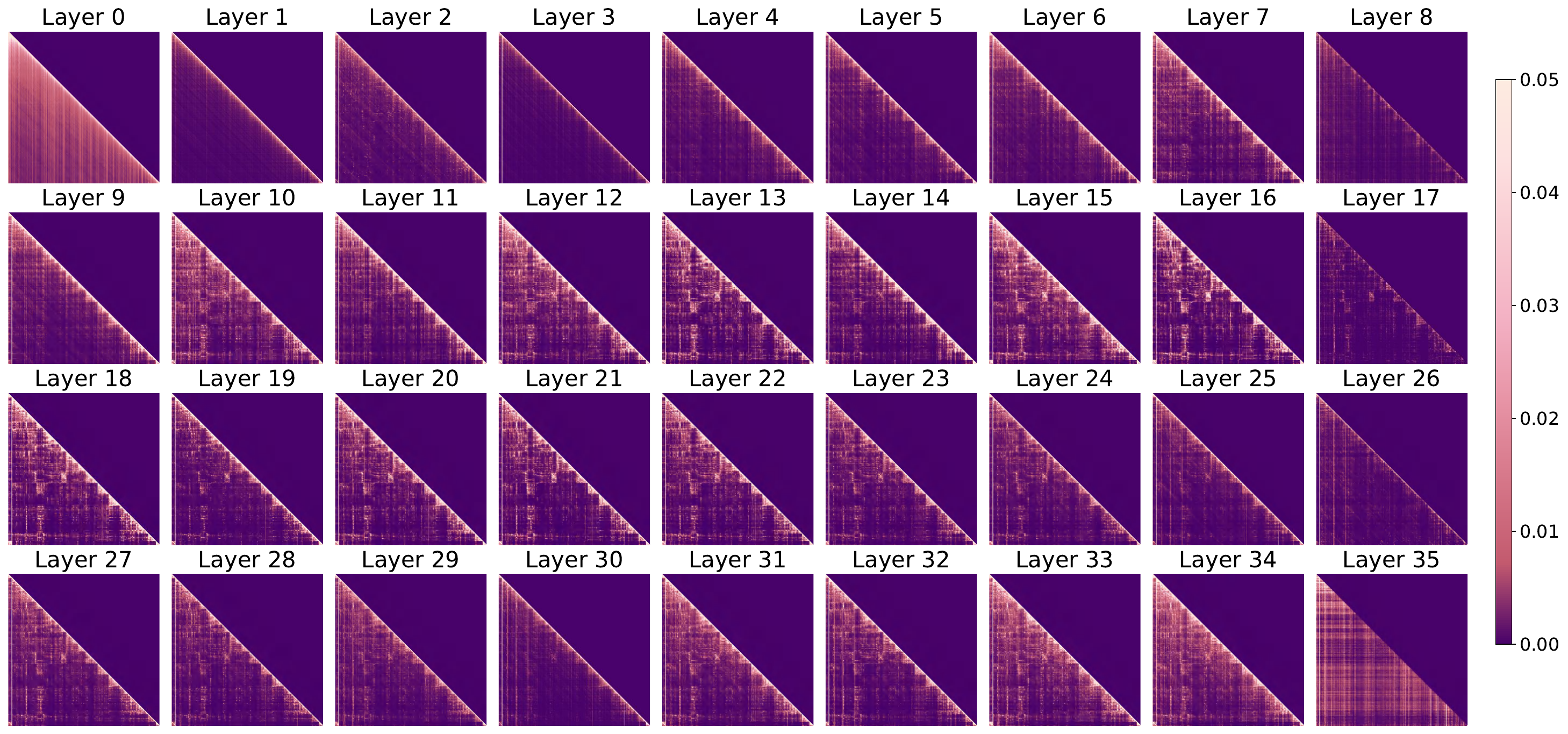}
        \caption{Visualization of attention weights on Qasper.}
    \end{subfigure}
    
    
    
    
    \begin{subfigure}{1\textwidth}
        \centering
        \includegraphics[width=\textwidth]{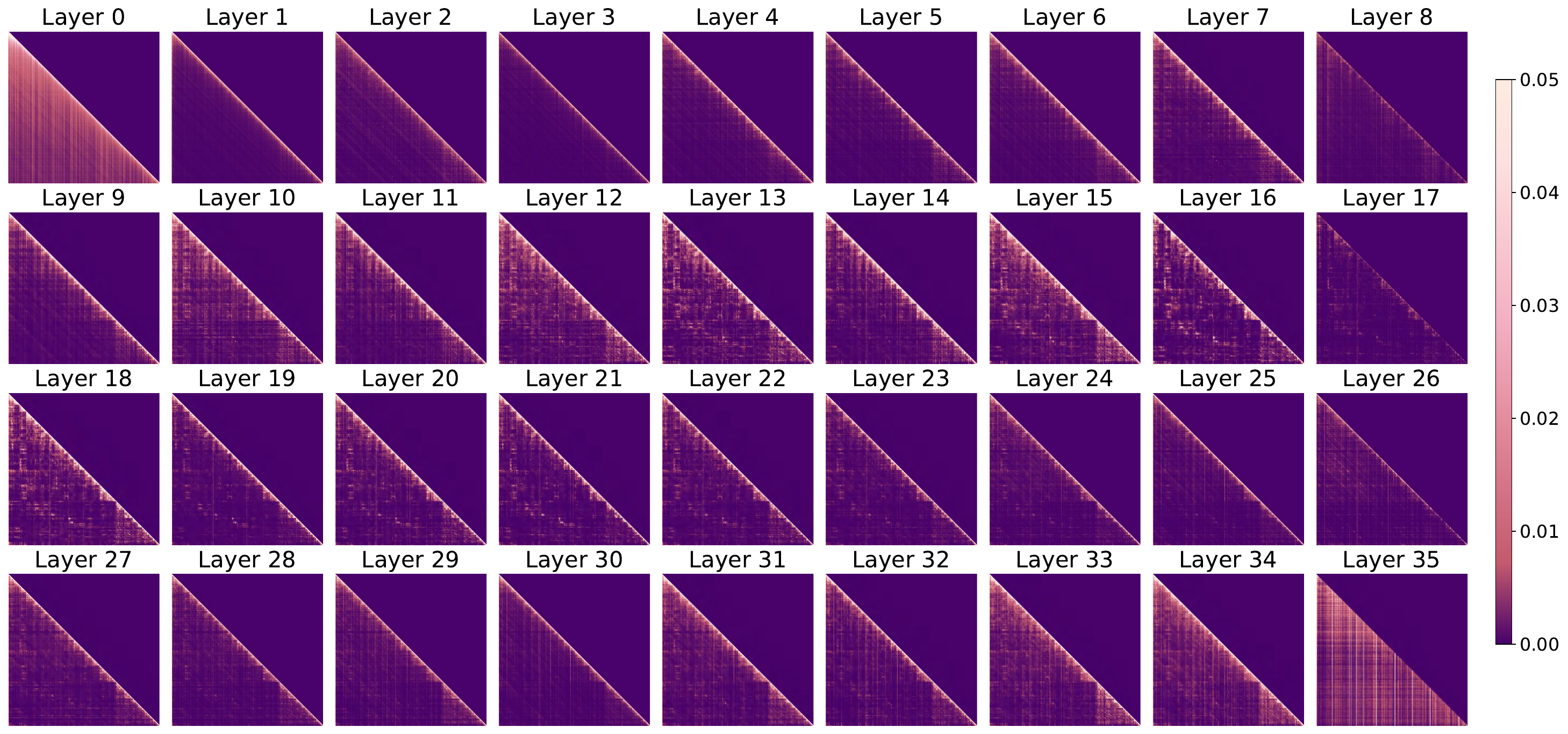}
        \caption{Visualization of attention weights on MultiFieldQA-en.}
    \end{subfigure}
    
    \caption{Visualization results of attention weights for Mistral-7B-Instruct-v0.3 across datasets from Single-document QA task.}
    \label{vismistral}
\end{figure}

\end{document}